\documentclass{article}

\PassOptionsToPackage{numbers, compress}{natbib}

\usepackage[hidelinks]{hyperref}
\usepackage{url}
\usepackage{xspace}
\usepackage{listings}

\usepackage[utf8]{inputenc} 
\usepackage[T1]{fontenc}    
\usepackage{booktabs}       
\usepackage{amsfonts}       
\usepackage{nicefrac}       
\usepackage{microtype}      
\usepackage{xcolor}         
\usepackage{enumitem}
\usepackage{graphicx}
\usepackage{subfigure}
\usepackage{amsmath}
\usepackage{amssymb}
\usepackage{mathtools}
\usepackage{amsthm}
\usepackage[capitalize,noabbrev]{cleveref}
\usepackage{multirow}
\usepackage{wrapfig}
\usepackage{subcaption}
\usepackage[misc]{ifsym}
\usepackage{algorithmic}
\usepackage{makecell}

\usepackage{needspace}

\usepackage[most]{tcolorbox}
\newtcolorbox{promptbox}[1]{ 
  colback=gray!5,             
  colframe=black,             
  boxsep=5pt,                 
  arc=4pt,                    
  top=4pt, bottom=4pt, left=4pt, right=4pt, 
  title={#1},                 
  fonttitle=\bfseries,        
  breakable,
}

\usepackage{verbatim}

\usepackage{fancyvrb}
\usepackage{xcolor}

\usepackage[preprint]{neurips_2026}

\usepackage{amsmath}
\usepackage{amssymb}
\usepackage{mathtools}
\usepackage{amsthm}

\def \MethodName {PROCO\xspace}
\def \MethodNameLong {Model-Based \textbf{Pro}active \textbf{Co}st Generation\xspace}

\usepackage[capitalize,noabbrev]{cleveref}

\theoremstyle{plain}
\newtheorem{theorem}{Theorem}[section]
\newtheorem{proposition}[theorem]{Proposition}
\newtheorem{lemma}[theorem]{Lemma}

\theoremstyle{definition}
\newtheorem{definition}[theorem]{Definition}
\newtheorem{assumption}[theorem]{Assumption}
\theoremstyle{remark}

\usepackage[textsize=tiny]{todonotes}

\title{Model-Based Proactive Cost Generation for Learning Safe Policies Offline with Limited Violation Data}

\author{%
  Ruiqi Xue\textsuperscript{1,2}, Lei Yuan\textsuperscript{1,2,3}\thanks{Corresponding Author}, Kainuo Cheng\textsuperscript{1,2}, Jing-Wen Yang\textsuperscript{4}, Yang Yu\textsuperscript{1,2,3}\footnotemark[1]\\
  \textsuperscript{1} National Key Laboratory for Novel Software Technology, Nanjing University\\
  \textsuperscript{2} School of Artificial Intelligence, Nanjing University \\
  \textsuperscript{3} Polixir Technologies
  \\ Nanjing, China\\
  \textsuperscript{4} Tencent Game AI Center \\
  Shenzhen, China
  \\
  \texttt{\{xuerq, yuanl\}@lamda.nju.edu.cn} \\
  \texttt{231220042@smail.nju.edu.cn} \\
  \texttt{jingwenyang@tencent.com} \\
  \texttt{yuy@nju.edu.cn}
}

\begin{document}
\maketitle

\begin{abstract}
Learning constraint-satisfying policies from offline data without risky online interaction is crucial for safety-critical decision making. Conventional methods typically learn cost value functions from abundant unsafe samples to define safety boundaries and penalize violations. However, in high-stakes scenarios, risky trial-and-error is infeasible, yielding datasets with few or no unsafe samples.
Under this limitation, existing approaches often treat all data as uniformly safe, overlooking safe-but-infeasible states—states that currently satisfy constraints but inevitably violate them within a few steps—leading to deployment failures.
Drawing inspiration from the concept of knowledge-data integration, we leverage large language models (LLMs) to incorporate natural language knowledge into the policy to address this challenge. Specifically, we propose \MethodName, a model-based offline safe reinforcement learning (RL) framework tailored to datasets largely free of violations. \MethodName first learns a dynamics model from offline data and constructs a conservative cost function by grounding natural-language knowledge of unsafe states in LLMs, enabling risk estimation even without observed violations. Using the cost function and learned model, \MethodName performs model-based rollouts to synthesize diverse counterfactual unsafe samples, supporting reliable feasibility identification and feasibility-guided policy learning.
Across a range of Safety-Gymnasium tasks with exclusively safe or minimally risky training data, \MethodName integrates seamlessly with a variety of offline safe RL algorithms and consistently demonstrates reduced constraint violations and improved safety performance compared to both the original methods and other behavior cloning baselines.
\end{abstract}

\section{Introduction}
Safe reinforcement learning (RL), which aims to derive policies that satisfy predefined safety constraints, is crucial for real-world applications such as autonomous driving~\citep{zhang2021safe}, robotic control~\citep{brunke2022safe}, and aligning large language models (LLM) with human values~\citep{daisafe}. A fundamental limitation of conventional online RL is its reliance on trial-and-error exploration, which is inherently risky and can preclude its deployment in safety-critical settings~\citep{levine2020offline}. To mitigate these risks, offline safe RL has emerged as a promising paradigm, seeking to learn safe policies exclusively from pre-collected datasets without necessitating hazardous online interactions~\citep{liu2023datasets,zhengsafe}.

%

Recently, numerous methods have been proposed for offline safe policy learning. One line focuses on learning conservative cost value functions to penalize unsafe actions~\cite{xu2022constraints, guo2025constraint}, while another leverages cost advantage functions to weight behavior cloning (BC) on offline data~\cite{leecoptidice, zhengsafe, koiralalatent, chemingui2025constraint}. Both approaches critically rely on abundant unsafe samples for accurate cost estimation and effective policy optimization.
However, in many safety-critical high-stakes scenarios, such as autonomous driving or robotics manipulation, collecting large numbers of unsafe samples is impractical, as it is likely to cause detrimental effects on the agent or the environment~\cite{dulac2019challenges}, yielding dataset with only few or no unsafe samples.
In such cases, existing methods may fail to enforce safety, as they treat all samples equally safe and overlook \textbf{safe-yet-infeasible} states.
For instance, during robot data collection, to avoid harming the agent, external interventions are used to stop the process whenever the robot approaches a collision with an obstacle, leaving terminal states that are currently safe but would soon violate constraints due to inertia if no external intervention is applied~\citep{bansal2017hamilton}.
This limitation thus motivates a new problem: \textbf{how can we learn a safe policy offline when unsafe samples are scarce or entirely absent?} 

The root cause of the aforementioned challenge is, due to the absence of unsafe samples, the dataset lacks a crucial piece of knowledge—namely, which behaviors are unsafe or risky. This observation motivates a knowledge-data integration approach, wherein human or external knowledge is provided to the agent to bridge this gap~\cite{nie2025data}. Given that such knowledge typically exists in natural language~\cite{alismail2025survey}, LLMs—with their robust language processing and reasoning capabilities—serve as an ideal vehicle for knowledge processing. Furthermore, to align the LLM-derived knowledge with the empirical data, we incorporate the concept of model rollouts. Specifically, we propose \MethodNameLong (\MethodName), a model-based offline safe RL algorithm that identifies infeasible states and learns safe policies from datasets containing few or no unsafe samples. \MethodName first leverages an LLM to derive a conservative cost function from the natural language knowledge of the constraint, which is then validated and refined using the available safe data and, when present, a limited number of unsafe samples. Subsequently, it learns a dynamics model from the offline dataset. Using the cost function and dynamics model, \MethodName simulates future state evolution to generate diverse and informative unsafe samples, enabling efficient detection of infeasible states. Moreover, the conservative cost function enables states in close proximity to unsafe ones to be labeled as unsafe as well, shortening the transition steps from infeasible to unsafe states and thus reducing the influence of model errors on infeasible state identification.
Extensive experiments across diverse Safety-Gymnasium environments demonstrate that, when integrated with multiple offline safe RL algorithms, \MethodName significantly outperforms both the original algorithms and other behavior cloning–based baselines on safe-only datasets, achieving improvements of over 400\% in safety performance.

\section{Related Work}

\paragraph{Safe RL.}
Safe RL seeks to maximize reward while ensuring safety, commonly modeled as a Constrained Markov Decision Process (CMDP)~\citep{altman2021constrained} and solved via constrained optimization~\citep{garcia2015comprehensive,gu2024review}.
A widely adopted approach is Lagrangian-based methods, which learn a cost value function with an adaptive multiplier to enforce safety~\citep{stooke2020responsive}.
For hard, state-wise constraints, Hamilton–Jacobi (HJ) reachability analysis~\citep{bansal2017hamilton} has been applied to learn cost functions and enforce stricter safety~\citep{yu2022reachability,ganai2023iterative}.
However, both approaches require unsafe real-world interactions, which limit their practicality. To overcome this limitation, recent research has shifted toward offline safe RL, learning safe policies from pre-collected data to avoid unsafe exploration.
Certain approaches integrate conservative value estimation into cost value function learning to counteract cost value underestimation~\citep{xu2022constraints,guo2025constraint}, whereas others employ BC-based policy learning, including Decision Transformer~\citep{liu2023constrained,xueadaptable}, DICE~\citep{leecoptidice}, and IQL~\citep{zhengsafe,koiralalatent,zifan2025c2iql}, to more effectively address extrapolation errors~\citep{fujimoto2019off} in safe RL.

\paragraph{LLM-assisted decision-making.}
Leveraging the powerful information processing and reasoning capabilities of LLMs, they have recently been widely adopted in RL~\citep{cao2024survey}. One prominent line of work explores the use of LLMs for direct decision-making by generating actions or high-level plans conditioned on observations~\citep{yao2023react,shinn2023reflexion,prasad2023adapt}. Yet, these methods are generally applicable only to high-level, highly abstract decision-making tasks.
A more broadly adopted line of work in RL involves reward function generation, in which LLMs are employed to directly construct reward functions that support skill discovery~\citep{yulanguage}, policy learning~\citep{song2023self,maeureka,xietext2reward}, exploration~\citep{triantafyllidis2024intrinsic}, or teammate generation~\citep{li2025llm}. However, the use of LLMs for cost generalization in safe RL remains largely underexplored.
More related work can be seen in Appendix~\ref{more related work}.

\section{Preliminaries}
\subsection{Safe RL and Offline Safe RL}
In this work, we focus on safe RL under hard-constraints, which can be modeled as a hard-constraint CMDP, defined as a tuple $\langle S, A, r, h, c, P, \gamma \rangle$. Here, $S$ and $A$ denote the state and action spaces, respectively;
$r: S \times A \rightarrow [-R_{\text{max}}, R_{\text{max}}]$ represents the reward function, $h: S \rightarrow [h_{\text{min}}, h_{\text{max}}]$ is the constraint violation function, and $c: S \rightarrow \{0, 1\}$ is the cost function. $P: S \times A \times S \rightarrow [0, 1]$ specifies the transition dynamics, and $\gamma \in (0, 1)$ is the discount factor.
Typically, $c(s)=\mathbb{I}(h(s)>0)$, which means $h(s)>0$ is unsafe while $h(s)\leq 0$ is safe.
A policy $\pi: S \rightarrow \Delta(A)$ maps states to action distributions.
Under policy $\pi$, the expected discounted reward return and cost return are defined as $R(\pi) = \mathbb{E}_{\tau\sim P_\pi}\left[\sum_{t=0}^\infty \gamma^t r(s_t,a_t)\right]$ and $C(\pi) = \mathbb{E}_{\tau\sim P_\pi}\left[\sum_{t=0}^\infty \gamma^tc(s_t)\right]$, where $\tau = (s_0, a_0, s_1, a_1, \dots)\sim P_\pi$ denotes a trajectory induced by $\pi$ and the environment dynamics $P$. Thus, the objective of solving a hard-constraint CMDP is to find a policy that maximizes reward return while ensuring the cost return remains zero.

In the offline setting, we are given an offline dataset $\mathcal{D}$ generated by the behavior policy $\pi_\beta$. Now, the goal is to learning a safe policy purely from this dataset. To avoid extrapolation error in offline RL, the optimization objective is formulated as:
\begin{equation}
\begin{aligned}
\label{obj_2}
&\max_\pi\ R(\pi), \ s.t.\ C(\pi)\leq 0\ ;\ D(\pi||\pi_\beta)\leq\epsilon,
\end{aligned}
\end{equation}
where $D(\pi||\pi_\beta)$ is a divergence term (e.g., KL divergence $D_{\text{KL}}(\pi||\pi_\beta)$) used to prevent distribution shift.
To investigate the problem of learning safe policies under scarce or absent unsafe samples, we focus on scenarios where $\mathcal{D}$ contains no unsafe samples. Optionally, we assume the availability of an extremely small dataset $\mathcal{D}_{\text{unsafe}}$ consisting of no more than 100 unsafe transitions, satisfying $|\mathcal{D}_{\text{unsafe}}|<<|\mathcal{D}|$.
Meanwhile, in most practical applications, a continuous constraint violation function $h$ is often unavailable. Accordingly, in this work we define $h$ as also a binary function: $h(s)=h_{\text{min}}\leq0$ if $c(s)=0$, and $h(s)=h_{\text{max}}>0$ otherwise.
To compensate for the lack of unsafe samples, we assume the availability of a natural language specification of the task’s safety constraint, $L_{\text{cost}}$, to provide safety-related information. Thus, the agent must exploit both the safe dataset $\mathcal{D}$ and the constraint description $L_{\text{cost}}$ in order to acquire a safe policy.

\subsection{Hamilton-Jacobi Reachability Analysis}
To solve hard-constrained safe RL, HJ reachability analysis~\citep{bansal2017hamilton} is one of the most widely used approaches. It typically models $\pi$ as a deterministic policy and defines the feasible set accordingly.
\begin{definition}[Feasible set \citep{yu2022reachability}]
\label{def_1}
The feasible set of a specific policy $\pi$ can be defined as
\begin{equation}
    S_f^\pi:=\{s\in S|h(s_t^\pi|s_0=s)\leq 0,\forall t\in \mathbb{N}\}.
\end{equation}
The largest feasible set $S_f^*$ is a subset of $S$ composed of states from which there exists at least one policy that keeps the system satisfying the constraint, i.e.,
\begin{equation}
    S_f^*:=\{s\in S|\exists\pi,h(s_t^\pi|s_0=s)\leq0,\forall t\in\mathbb{N}\}.
\end{equation}
\end{definition}
Above, $h(s_t^\pi|s_0=s)$ specifies the state constraint sequence of the trajectory $\{h(s_0^\pi),h(s_1^\pi),\dots,|s_0=s,\pi\}$.
\begin{definition}[Optimal feasible value function \citep{yu2022reachability}]
\label{def_2}
The optimal feasible state-value function $V_h^*$, and the optimal feasible action-value function $Q_h^*$ are defined as
\begin{equation}
\begin{aligned}
    V_h^*(s)&:=\min_\pi V_h^{\pi}(s):=\min_\pi\max_{t\in\mathbb{N}}h(s_t),s_0=s,a_t\sim\pi(\cdot|s_t),\\Q_h^*(s,a)&:=\min_\pi Q_h^{\pi}(s,a):=\min_\pi\max_{t\in\mathbb{N}}h(s_t),s_0=s,a_0=a,a_t\sim\pi(\cdot|s_t),
\end{aligned}
\end{equation}
where $V_h^\pi$ represents the maximum constraint violations in the trajectory induced by policy $\pi$ starting from the state $s$. The (optimal) feasibile value function possesses the following properties \citep{zhengsafe}:
\begin{itemize}[leftmargin=1em]
    \item $V_h^\pi(s)\leq 0\Rightarrow\forall s_t,h(s_t)\leq 0$, indicating $\pi$ can satisfy the hard-constraint starting from $s$. $V_h^*(s)\leq 0\Rightarrow \exists \pi, V_h^\pi(s)\leq 0$, meaning there exists a policy that satisfies the hard-constraint.
    \item The feasible set and largest feasible set can be written as
    \begin{equation}
        S_f^\pi:=\{s|V_h^\pi(s)\leq0\},\ S_f^*:=\{s|V_h^*(s)\leq 0\}.
    \end{equation}
\end{itemize}
\end{definition}
Based on Definition~\ref{def_1}, once $S_f^*$ is obtained, the feasibility of states in $\mathcal{D}$ can be determined. 
Furthermore, Definition~\ref{def_2} indicates that $S_f^*$ can be obtained by computing the optimal feasible value function $V^*_h(s)$. Thus, the feasible Bellman operator $\mathcal{B}^*$~\citep{fisac2019bridging} is proposed:
\begin{equation}
\begin{aligned}
    \mathcal{B}^*Q_h(s,a):=(1-\gamma)h(s)+\gamma\max\{h(s),V_h^*(s')\},\ V_h^*(s')=\min_{a'}Q_h(s',a').
\end{aligned}
\end{equation}
Although the aforementioned $S_f^\pi$ and $V_h^\pi$ are defined for a deterministic policy $\pi$, the computation of $V_h^*$ and $\mathcal{B}^*$ is policy-independent and does not rely on whether $\pi$ is deterministic. Therefore, these quantities can be readily applied to stochastic policies as well~\citep{zhengsafe}.

\section{Method}
\begin{figure*}[t]
    \centering
    \includegraphics[width=0.95\textwidth]{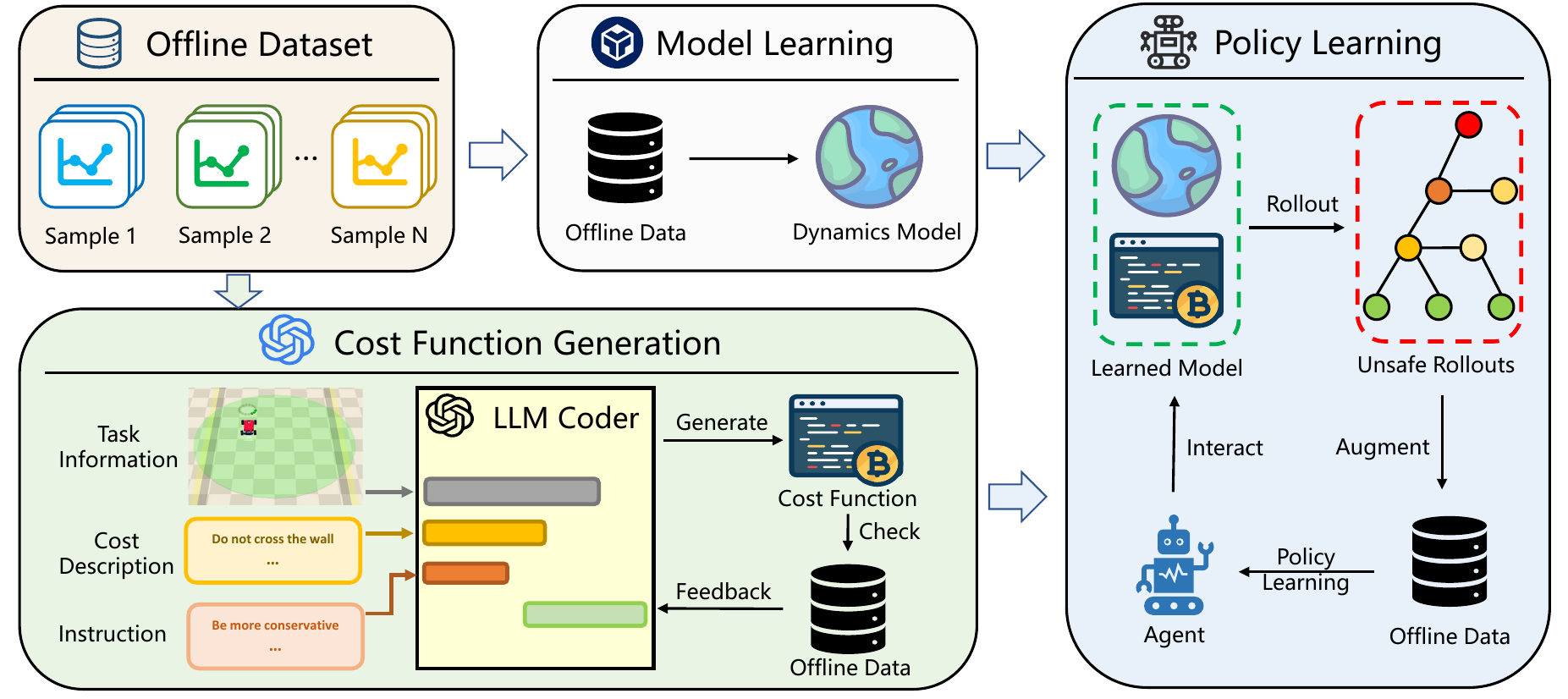}
    \caption{Structure of \MethodName.}
    \label{fig:flow}
\end{figure*}


This section details \MethodName, a novel algorithm for offline safe policy learning when unsafe samples are scarce or absent (Figure~\ref{fig:flow}). Section~\ref{method_1} describes our dynamics model-based feasibility identification, Section~\ref{method_2} presents the motivation and use of LLMs to generate conservative cost functions, and Section~\ref{method_3} outlines the overall pipeline for assisting safer policy learning with \MethodName. Implementation details can be found in Appendix~\ref{imp detail}.

\subsection{Feasibility Identification with a Dynamics Model}
\label{method_1}


Theoretically, an accurate estimate of $V_h^*(s)$ suffices to evaluate the feasibility of state $s$.
However, computing $V^*_h(s)$ via $\mathcal{B}^*$ demands a substantial amount of unsafe samples beyond $\mathcal{D}$. To overcome this limitation, we propose leveraging a learned dynamics model to proactively generate future unsafe samples.
First, we train an ensemble dynamics model $\hat{T}$ using the offline dataset $\mathcal{D}$:
\begin{equation}
    \min_{\hat{T}} \mathbb E_{(s,a,s')\sim \mathcal{D}}[||\hat{T}(s,a)-s'||_2^2].
\end{equation}
Subsequently, leveraging $\hat{T}$ together with the dataset $\mathcal{D}$, we perform branched rollout to simulate future trajectories. To address the potential underestimation of $V^*_h$ due to model uncertainty, we introduce a conservative feasible Bellman operator, denoted as $\bar{\mathcal{B}}^*$:
\begin{equation}
\label{conservative bellman}
    \bar{\mathcal{B}^*}Q_h(s,a):=(1-\gamma)h(s)+\gamma\max\{h(s),\max_{s'\in\hat{T}(s,a)}\min_{a'}Q_h(s',a')\},
\end{equation}
For $\bar{\mathcal{B}}^*$, we establish the following desirable proposition:
\begin{proposition}
\label{pro_1}
    $\bar{\mathcal{B}}^*$ is a $\gamma$ contraction mapping in the $\infty-$norm, and satisfies that $\bar{Q^*_h}(s,a)\geq Q^*_{h,T}(s,a)$ for all $(s,a)$ and for all $T\in\hat{T}$, where $\bar{Q^*_h}(s,a)$ is the convergence result of $\bar{\mathcal{B}^*}$ and $Q^*_{h,T}(s,a)$ is the convergence result of $\mathcal{B}^*$ under transition dynamics $T$.
\end{proposition}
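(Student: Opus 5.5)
The proof has two parts: a contraction argument for $\bar{\mathcal{B}}^*$, followed by a monotonicity argument that compares its fixed point with that of $\mathcal{B}^*$ under any single ensemble member. The main tool is the non-expansiveness of $\max$ and $\min$.

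\textbf{Contraction.} Take two bounded functions $Q_1, Q_2$ on $S\times A$. The term $(1-\gamma)h(s)$ is the same under both, so it cancels in the difference. The key step is to check that each nested operation is $1$-Lipschitz in the sup-norm:
\begin{itemize}[leftmargin=1em]
\item for every $s'$, we have $|\min_{a'}Q_1(s',a')-\min_{a'}Q_2(s',a')|\le\|Q_1-Q_2\|_\infty$;
\item taking $\max_{s'\in\hat{T}(s,a)}$ over the finite set of ensemble predictions preserves this bound, since $|\max_i x_i-\max_i y_i|\le\max_i|x_i-y_i|$;
\item the outer map $x\mapsto\max\{h(s),x\}$ is also $1$-Lipschitz.
\end{itemize}
Chaining these gives $|\bar{\mathcal{B}}^*Q_1(s,a)-\bar{\mathcal{B}}^*Q_2(s,a)|\le\gamma\|Q_1-Q_2\|_\infty$ for every $(s,a)$. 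Taking the supremum over $(s,a)$ yields the $\gamma$-contraction. Since bounded functions form a complete space, Banach's fixed-point theorem then gives a unique fixed point $\bar{Q}_h^*$, and iterating $\bar{\mathcal{B}}^*$ converges to it from any start.

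\textbf{Domination.} Fix $T\in\hat{T}$ and let $\mathcal{B}^*_T$ denote the feasible Bellman operator with $s'=T(s,a)$. By the same argument, $\mathcal{B}^*_T$ is a $\gamma$-contraction with fixed point $Q^*_{h,T}$. Two observations drive the comparison.

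First, both operators are monotone: $Q_1\le Q_2$ pointwise implies $\bar{\mathcal{B}}^*Q_1\le\bar{\mathcal{B}}^*Q_2$ and $\mathcal{B}^*_TQ_1\le\mathcal{B}^*_TQ_2$. This holds because $\min$, $\max$, and $\max\{h(s),\cdot\}$ are all nondecreasing.

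Second, $\bar{\mathcal{B}}^*Q\ge\mathcal{B}^*_TQ$ for every $Q$, since $T(s,a)\in\hat{T}(s,a)$ means the maximum over the ensemble is at least the value at $T(s,a)$.

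Now start both iterations from a common $Q_0$ and argue by induction. Suppose $\bar{\mathcal{B}}^{*k}Q_0\ge\mathcal{B}_T^{*k}Q_0$. Then
\[
\bar{\mathcal{B}}^{*(k+1)}Q_0\ge\mathcal{B}^*_T\bar{\mathcal{B}}^{*k}Q_0\ge\mathcal{B}_T^{*(k+1)}Q_0,
\]
where the first inequality uses domination and the second uses monotonicity of $\mathcal{B}^*_T$. Letting $k\to\infty$, pointwise limits preserve the inequality, so $\bar{Q}_h^*\ge Q^*_{h,T}$.

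A shorter alternative avoids the induction. Since $\bar{Q}_h^*=\bar{\mathcal{B}}^*\bar{Q}_h^*\ge\mathcal{B}^*_T\bar{Q}_h^*$, applying the monotone map $\mathcal{B}^*_T$ repeatedly gives $\bar{Q}_h^*\ge(\mathcal{B}^*_T)^n\bar{Q}_h^*\to Q^*_{h,T}$.

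The main obstacle is the formal setting rather than any single inequality. The proof needs $h$ to be bounded, which holds because $h\in[h_{\min},h_{\max}]$, so that iterates stay bounded and Banach applies. It also needs the inner $\min_{a'}$ to be well-defined, which I would handle by assuming $A$ is compact or by replacing $\min$ with $\inf$. I would also interpret $\hat{T}(s,a)$ as the finite set of deterministic ensemble predictions, so that each $T\in\hat{T}$ is a single deterministic transition map.
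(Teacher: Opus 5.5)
Your proof is correct and matches the paper's argument step for step. For the contraction you chain the non-expansiveness of $\min_{a'}$, $\max_{s'\in\hat{T}(s,a)}$ and $\max\{h(s),\cdot\}$; the paper proves the last of these via $|\max(a_1,b_1)-\max(a_2,b_2)|\le\max(|a_1-a_2|,|b_1-b_2|)$. For the domination you combine $T(s,a)\in\hat{T}(s,a)$ with monotonicity, iterate from a common $Q_0$, and pass to the limit, exactly as the paper does. Your shorter fixed-point variant, $\bar{Q}_h^*\ge(\mathcal{B}^*_T)^n\bar{Q}_h^*\to Q^*_{h,T}$, is an equally valid repackaging of the same idea.
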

Finally, under certain assumptions, we can guarantee the effectiveness of feasibility identification when employing $\bar{\mathcal{B}}^*$ together with $\hat{T}$:
\begin{assumption}
\label{ass_1}
    There exists a horizon $H^*\in \mathbb{N}$ such that, for any infeasible state $s$, any sequence of actions $a_0,\dots,a_{H^*-1}$ will lead to an unsafe state.
\end{assumption}
\begin{assumption}
\label{ass_2}
    The learned ensemble dynamics model $\hat{T}(s,a)$ is calibrated, that is, for all $(s,a)\in S\times A$, the ground truth dynamics model $T(s,a)$ satisfies that $T(s,a)\in \hat{T}(s,a)$.
\end{assumption}
\begin{theorem}
\label{th_1}
    If Assumption~\ref{ass_1} and Assumption~\ref{ass_2} hold, $\forall s\in \bar{S_f^*}=S-S_f^*,\forall a$, the convergence result $\bar{Q_h^*}$ learned by $\bar{\mathcal{B}^*}$ and $\hat{T}$ rollout data satisfies that, $\forall a,\bar{Q_h^*}(s,a)>0$ for large enough $\gamma$.
\end{theorem}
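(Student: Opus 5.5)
The plan is to reduce the claim to the true-dynamics feasible value function and then bound that function from below. Proposition~\ref{pro_1} gives $\bar{Q^*_h}(s,a)\geq Q^*_{h,T}(s,a)$ for every $T\in\hat{T}$. By Assumption~\ref{ass_2} the ground-truth dynamics $T$ lies in $\hat{T}(s,a)$ at every $(s,a)$. So in the maximization inside $\bar{\mathcal{B}}^*$, the true successor is always one of the candidates, and the conservative fixed point dominates the fixed point of $\mathcal{B}^*$ under the true dynamics. If the ensemble members are viewed as separate dynamics, this is exactly Proposition~\ref{pro_1}. If $\hat{T}(s,a)$ is instead treated as a set-valued map, I would repeat the short monotonicity argument: iterate both operators from a common initialization and show by induction that $\bar{\mathcal{B}}^{*k}Q\geq\mathcal{B}^{*k}Q$. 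It therefore suffices to show $Q^*_{h,T}(s,a)>0$ for every infeasible $s$ and every $a$ when $\gamma$ is large enough.

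Fix $s\notin S_f^*$ and any $a$, and write out the discounted fixed point of $\mathcal{B}^*$. Along any trajectory from $s$ with first action $a$ and an arbitrary continuation policy, define the running maximum $m_t=\max_{k\le t}h(s_k)$. Unrolling $Q=(1-\gamma)h(s)+\gamma\max\{h(s),V(s')\}$ gives
\[
Q^*_{h,T}(s,a)=\min_{\pi}\Big[(1-\gamma)\sum_{t\ge0}\gamma^t m_t\Big],
\]
possibly with the convention that the running max is applied within the recursion. For deterministic true dynamics and deterministic policies this is a minimum over action sequences; for stochastic policies the same argument applies pathwise.

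Next I apply Assumption~\ref{ass_1}: whatever actions are chosen, some $s_t$ with $t\le H^*$ is unsafe. The binary $h$ then gives $m_t=h_{\max}$ for all $t\ge H^*$, while $m_t\ge h_{\min}$ before that. Hence, uniformly over $\pi$,
\[
(1-\gamma)\sum_t\gamma^t m_t\ \ge\ (1-\gamma^{H^*})h_{\min}+\gamma^{H^*}h_{\max}.
\]
The index should be $H^*$ or $H^*+1$, depending on whether the unsafe state is $s_{H^*}$; I will fix this off-by-one carefully. The right-hand side is positive as soon as $\gamma^{H^*}>\frac{-h_{\min}}{h_{\max}-h_{\min}}$, and this threshold is strictly less than $1$. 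So there is a $\gamma_0<1$ with $Q^*_{h,T}(s,a)>0$ for all $\gamma>\gamma_0$, and combining with the first step gives $\bar{Q^*_h}(s,a)>0$.

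The main obstacle is justifying the unrolled closed form, and that the minimum over infinitely many action sequences is attained uniformly. I would avoid needing the exact representation. Instead I would prove by induction on $k=H^*,H^*-1,\dots,0$ a lower bound $V_k$ on $\min_{a'}Q$ at states from which every action sequence reaches an unsafe state within $k$ steps. At $k=0$ the state is unsafe, so the fixed point gives $V_0\ge h_{\max}$. The recursion $V_{k+1}\ge(1-\gamma)h_{\min}+\gamma\max\{h_{\min},V_k\}$ then yields the geometric bound directly. Assumption~\ref{ass_1} is what guarantees that every successor of such a state is again in the $(k-1)$-class, including successors of the infeasible states in the rollout data.
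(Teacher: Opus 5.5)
Your proposal is correct and follows essentially the paper's route. The paper also proves (Lemma~\ref{lm 1}) that $Q^*_h(s,a)\geq h_{\text{min}}+\gamma^{H^*}(h_{\text{max}}-h_{\text{min}})$ for infeasible $s$ under the true dynamics, transfers this bound to $\bar{Q^*_h}$ through Proposition~\ref{pro_1} and Assumption~\ref{ass_2}, and takes $\gamma^{H^*}>h_{\text{min}}/(h_{\text{min}}-h_{\text{max}})$. Your induction over states that reach an unsafe state within $k$ steps under every action sequence is a cleaner form of the paper's trajectory-wise backward recursion, since it handles the $\min_{a'}$ at intermediate states explicitly rather than appealing to the arbitrariness of the trajectory.
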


\subsection{LLM-assisted Conservative Cost Function Generation}
\label{method_2}

Theorem~\ref{th_1} enables feasibility identification for samples in $\mathcal{D}$, yet it critically depends on Assumption~\ref{ass_2}, which requires exact accuracy of the learned dynamics transitions, a condition rarely satisfied in practice. We therefore relax it as Assumption~\ref{ass_3} and introduce another assumption:
\begin{assumption}
\label{ass_3}
    For a given policy $\pi$, there exists $\delta$, $\forall s,a$, $\max_t\mathbb{E}_{s\sim p_1^t(s)}D_{\text{KL}}(p_1(s'|s)||p_2(s'|s))\leq \delta$, where $p_1(s'|s)=T(s'|s,\pi(\cdot|s))$ for $t>0$ and $p_1(s'|s)=T(s'|s,a)$ for $t=0$, and $p_2(s'|s)=T'(s'|s,\pi(\cdot|s))$ for $t>0$ and $p_2(s'|s)=T'(s'|s,a)$ for $t=0$, $T$ is the ground truth dynamics model, and $T'$ refers to any one of the dynamics model in the ensemble dynamics model set $\hat{T}$.
\end{assumption}
\begin{assumption}
\label{ass_4}
    For a given constraint violation function $h$, there exists a constant $K$, such that for any two state distributions $\mu,v$, it satisfies that $|\mathbb{E}_\mu[h(s)]-\mathbb{E}_v[h(s)]|\leq K D_{\text{TV}}(\mu||v)$, where $D_{\text{TV}}$ is the total variance distance.
\end{assumption}
Assumption~\ref{ass_4} assumes a correlation between the expected constraint violation and the state distribution, such that similar state distributions correspond to similar expected constraint violations. This assumption is generally valid in practice, and based on it, we can proceed to analyze the effectiveness of feasibility identification:
\begin{theorem}
\label{th_2}
    If Assumption~\ref{ass_1}, Assumption~\ref{ass_3} and Assumption~\ref{ass_4} hold. $\forall s\in \bar{S_f^*}=S-S_f^*$, $\forall a$, given policy $\pi$, the convergence result $\bar{Q_h^*}$ learned by $\bar{\mathcal{B}^*}$ and branched rollout data of $\pi$ in $\hat{T}$ satisfies
    \begin{equation}
    \begin{aligned}
        \bar{Q_h^*}(s,a)\geq (1-\gamma^{t^*})h_{\text{min}}+\gamma^{t^*}\mathbb{E}_{p_1^{t^*}}[h(s)]-\gamma^{t^*} {t^*}K\delta,\ t^*=\arg\max_{t\in\{1,\dots,H^*\}}[\mathbb{E}_{p_1^t}[h(s)]].
    \end{aligned}
    \end{equation}
\end{theorem}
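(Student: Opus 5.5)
The plan is to unroll the conservative operator $\bar{\mathcal{B}}^*$ along a branched rollout of $\pi$ in one ensemble member $T'\in\hat{T}$, obtaining a lower bound on $\bar{Q_h^*}(s,a)$ in terms of the expected constraint violation at a fixed step $t$ under the model. I will then transfer that expectation from the model rollout distribution $p_2^t$ to the true distribution $p_1^t$ using Assumption~\ref{ass_4} and Assumption~\ref{ass_3}.

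\textbf{Step 1 (unrolling).} Because of the outer $\max$ over $s'\in\hat{T}(s,a)$, the fixed point $\bar{Q_h^*}$ is at least the feasible value computed under any single member $T'$. Also, under the branched rollout data of $\pi$ the inner $\min_{a'}$ is replaced by the action of $\pi$. So it suffices to lower bound the fixed point of
\begin{equation*}
Q(s,a)=(1-\gamma)h(s)+\gamma\max\{h(s),\mathbb{E}\,Q(s',\pi(s'))\}.
\end{equation*}
I use $\max\{h(s),x\}\geq x$, and I use $\max\{\cdot\}\ge$ its first argument on the final step. Iterating $t$ times then gives a telescoping bound
\begin{equation*}
\bar{Q_h^*}(s,a)\geq (1-\gamma)\sum_{k=0}^{t-1}\gamma^k\,\mathbb{E}_{p_2^k}[h]+\gamma^t\mathbb{E}_{p_2^t}[h].
\end{equation*}
Since $h\ge h_{\min}$, the first sum is at least $(1-\gamma^t)h_{\min}$, so
\begin{equation*}
\bar{Q_h^*}(s,a)\geq (1-\gamma^t)h_{\min}+\gamma^t\mathbb{E}_{p_2^t}[h].
\end{equation*}
This holds for every $t$, and in particular for $t^*\le H^*$.

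\textbf{Step 2 (distribution shift).} Assumption~\ref{ass_4} gives
\begin{equation*}
\mathbb{E}_{p_2^{t}}[h]\ge \mathbb{E}_{p_1^{t}}[h]-K D_{\mathrm{TV}}(p_1^{t},p_2^{t}).
\end{equation*}
To bound $D_{\mathrm{TV}}(p_1^t,p_2^t)$, I will use the standard compounding-error argument, as in MBPO-type analyses. Couple the two Markov chains from the same start $(s,a)$ and write the difference of the $t$-step marginals as a telescoping sum over steps. At each step the one-step difference is $\mathbb{E}_{s\sim p_1^k}D_{\mathrm{TV}}(p_1(\cdot|s),p_2(\cdot|s))$. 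This yields
\begin{equation*}
D_{\mathrm{TV}}(p_1^t,p_2^t)\le\sum_{k=0}^{t-1}\mathbb{E}_{s\sim p_1^k}D_{\mathrm{TV}}(p_1(\cdot|s),p_2(\cdot|s)).
\end{equation*}
Pinsker's inequality bounds each term by $\sqrt{\delta/2}$ via Assumption~\ref{ass_3}, with the $t=0$ term using the fixed action $a$ and later terms using $\pi$. This gives $t\sqrt{\delta/2}$. The statement instead has $t\delta$, so I will either read $\delta$ as bounding the per-step TV (equivalently, absorb the Pinsker square root into the definition of $\delta$), or note that $\sqrt{\delta/2}\le\delta$ when $\delta\ge 1/2$. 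I will state this constant convention explicitly.

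\textbf{Step 3 (combine).} Substituting Step 2 into Step 1 at $t=t^*$ gives
\begin{equation*}
\bar{Q_h^*}(s,a)\ge(1-\gamma^{t^*})h_{\min}+\gamma^{t^*}\mathbb{E}_{p_1^{t^*}}[h]-\gamma^{t^*}t^*K\delta.
\end{equation*}
Choosing $t^*$ as the maximizer over $\{1,\dots,H^*\}$ gives the tightest such bound. Assumption~\ref{ass_1} is used only to justify restricting to this horizon: for $s\notin S_f^*$, some step $\le H^*$ is unsafe, so $\mathbb{E}_{p_1^{t^*}}[h]$ is bounded away from $h_{\min}$ and the bound is meaningful. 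The main obstacle is Step 1. I must justify that the conservative fixed point dominates the on-policy single-model feasible value, since the operator contains a $\min_{a'}$, and the identification with $\pi$ relies on the rollout data containing only $\pi$'s actions. I would handle this by a monotonicity and contraction argument, as in Proposition~\ref{pro_1}, comparing the two operators pointwise.
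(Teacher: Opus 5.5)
Your proposal follows the paper's proof essentially step for step. The paper lower-bounds the single-member, $\pi$-induced value $Q_h^{p_2}(s,a)$ by $(1-\gamma^t)h_{\text{min}}+\gamma^t\mathbb{E}_{p_2^t}[h]$ using the same last-step observation $\mathbb{E}_{p_2^t}[Q]\ge\mathbb{E}_{p_2^t}[h]$, transfers to $p_1^t$ via Assumption~\ref{ass_4}, gets $\bar{Q_h^*}\ge Q_h^{p_2}$ from Proposition~\ref{pro_1} (the same on-policy identification you flag for the $\min_{a'}$), and evaluates the bound, which holds for every $t$, at $t^*$.

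The one difference is that the paper does not derive the compounding bound; it cites Janner et al.\ (its Lemma~\ref{lm 2}) for $D_{\text{TV}}(p_1^t\|p_2^t)\le t\delta$ directly from the expected-KL hypothesis. Your Pinsker computation correctly shows that this step needs $\delta$ to bound the per-step TV, so adopt that convention explicitly. The alternative $\sqrt{\delta/2}\le\delta$ for $\delta\ge 1/2$ does not cover the small-error regime the theorem is meant for.
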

\textbf{Remark.} Since $\gamma,h_{\text{min}},K$ are outer given of constant, Theorem~\ref{th_2} reveals that the feasible value function of the infeasible state is lower bounded by $t^*, h(s)$ and $\delta$. To ensure safe decision making, we hope $\bar{Q_h^*}(s,a)$ can be large enough. Since $\mathbb{E}_{p_1^t}[h(s)]$ is upper bounded by $h_{\text{max}}$, a practical way is to lower $\delta$ or lower $t^*$.
Here, $\delta$ denotes the model rollout error. Due to cumulative error, shorter rollout horizons lead to lower errors. The term $t^*$ represents the number of steps required to transition from an infeasible state $s$ to its associated unsafe state $s'$. Thus, the distance between $s$ and $s'$ determines $t^*$, with smaller distances yielding smaller values of $t^*$. Hence, the analysis indicates that \textbf{bringing an infeasible state closer to its corresponding unsafe state reduces the rollout error $\delta$ as well as the step horizon $t^*$, resulting in more efficient feasibility identification}.

Therefore, we propose leveraging an LLM to generate a more conservative cost function, which \textbf{marks states near actual unsafe states also as unsafe}, effectively bringing infeasible states closer to unsafe states.
Specifically, we first provide three natural language descriptions, $L_{\text{task}}$, $L_{\text{cost}}$, and $L_{\text{inst}}$, to the LLM, enabling it to generate the task’s cost function $\bar{c}$:
\begin{equation}
    \bar{c} = \text{LLM}(L_{\text{task}}, L_{\text{cost}}, L_{\text{inst}}),
\end{equation}
where $L_{\text{task}}$ is task-related information, such as the meaning of states; $L_{\text{cost}}$ is the provided language description of the safety constraint; and $L_{\text{inst}}$ provides explicit instructions to the LLM, directing it to generate a cost function that is more conservative than the constraint described in $L_{\text{cost}}$.

Nevertheless, the cost function produced directly by the LLM may not be reliable. To mitigate this, we propose a validation-and-feedback mechanism leveraging both the small unsafe dataset $\mathcal{D}_{\text{unsafe}}$ and the safe dataset $\mathcal{D}$.
Concretely, we begin by validating $\bar{c}$ on $\mathcal{D}_{\text{unsafe}}$, requiring 100\% accuracy to guarantee that all unsafe samples are correctly identified, thereby eliminating safety risks. After this criterion is satisfied, we evaluate the proportion of safe samples in $\mathcal{D}$ that are classified as unsafe, which quantifies the conservativeness of $\bar{c}$. If this proportion lies within the hyperparameter-controlled range $[p_{\text{min}},p_{\text{max}}]$, we deem the conservativeness acceptable and adopt $\bar{c}$ as the final cost function.
Otherwise, we construct a feedback description $L_{\text{feed}}$ from the evaluation outcomes and feed it back into the LLM to guide the regeneration of $\bar{c}$:
\begin{equation}
    \bar{c} = \text{LLM}(L_{\text{task}}, L_{\text{cost}}, L_{\text{inst}}, L_{\text{feed}}).
\end{equation}
If no satisfactory $\bar{c}$ is obtained after exhausting the maximum number of LLM queries, we adopt as the final cost function the candidate that attains the highest accuracy on $\mathcal{D}_{\text{unsafe}}$ and exhibits a conservativeness level on $\mathcal{D}$ closest to the interval $[p_{\text{min}},p_{\text{max}}]$.

\subsection{Overall Algorithm}
\label{method_3}
With the learned $\hat{T}$ and the obtained $\bar{c}$, we can now learn a feasible policy based on the safe-only dataset $\mathcal{D}$.
First, we initialize a model rollout dataset $\mathcal{D}_{\hat{T}}$, and use the learning policy $\pi$ to perform branched rollouts of length $H$ within $\hat{T}$, obtaining a set of branch trajectories $\{\tau_i\}_{i=1}^N$. Each trajectory is of the form $\tau_i=(s_0^i,a_0^i,\bar{c}(s_0^i),\dots,s_{H-1}^i,a_{H-1}^i,\bar{c}(s_{H-1}^i))$, where $s_0^i \in \mathcal{D}$, $s_t^i \sim \hat{T}(\cdot | s_{t-1}^i, a_{t-1}^i)$ for $t>0$.
Next, for any trajectory $\tau_i$, if it contains a safety violation, i.e., $\sum_{t=0}^{H-1}\bar{c}(s_t^i) > 0$, we add it to $\mathcal{D}_{\hat{T}}$. Otherwise, trajectories without safety violations are discarded.
To enhance data diversity during rollouts and thereby improve the efficiency of feasibility identification, we inject Gaussian noise with standard deviation $\sigma_{\text{exp}}$ (a hyperparameter) to the policy actions during rollout.
Meanwhile, the existing samples in $\mathcal{D}$ are further re-labeled with costs using $\bar{c}$. Finally, we derive the constraint violation labels according to $h(s)=h_{\text{min}}\leq 0$ if $\bar{c}(s)=0$, and $h(s)=h_{\text{max}}>0$ otherwise.

With the constraint violation labels in place, we approximate the minimization operator in the (conservative) feasible Bellman update, i.e., $V^*_h(s')=\min_{a'}Q_h(s',a')$, by employing reverse expectile regression to update the constraint violation value network:
\begin{equation}
\label{cost value}
\begin{aligned}
    \mathcal{L}_{V_h}&=\mathbb{E}_{(s,a)\sim \mathcal{D}\cup \mathcal{D}_{\hat{T}}}[\mathcal{L}_{\text{rev}}^\tau(Q_h(s,a)-V_h(s))],\\
    \mathcal{L}_{Q_h}&=\mathbb{E}_{(s,a,s')\sim \mathcal{D}}[((1-\gamma)h(s)+\gamma\max(h(s),V_h(s'))-Q_h(s,a))^2]\\&+\mathbb{E}_{(s,a)\sim \mathcal{D}_{\hat{T}}}[((1-\gamma)h(s)+\gamma\max(h(s),\max_{s'\in\hat{T}(s,a)}V_h(s'))-Q_h(s,a))^2],
\end{aligned}
\end{equation}
where $\mathcal{L}_{\text{rev}}^\tau(u)=|\tau-\mathbb{I}(u>0)|u^2$, $\tau$ is a hyperparameter.
Finally, suppose that for a given offline safe RL algorithm, the reward critic learning loss is denoted by $\mathcal{L}_{\text{reward}}$, and the policy learning loss by $\mathcal{L}_{\text{policy}}$.
We then directly perform learning using the original offline dataset $\mathcal{D}$
\begin{equation}
\begin{aligned}
    \mathcal{L}_{\text{\MethodName reward}} = \mathbb{E}_{(s,a,s')\sim \mathcal{D}}[\mathcal{L}_{\text{reward}}],\ 
    \mathcal{L}_{\text{\MethodName policy}} = \mathbb{E}_{(s,a)\sim \mathcal{D}}[\mathcal{L}_{\text{policy}}(Q_h,V_h)],
\end{aligned}
\end{equation}
where $\mathcal{L}_{\text{policy}}(Q_h,V_h)$ denotes the policy learning loss that uses the $Q_h$ and $V_h$ obtained in \cref{cost value} as the cost critics, and performs policy optimization based on $\mathcal{L}_{\text{policy}}$.
We derive the following theorem to show that, even without making any assumptions about model error, integrating \MethodName with an arbitrary algorithm does not negatively affect the safety of the resulting policy
\begin{theorem}
\label{th_3}
   For any $\mathcal{D}$ and $\hat{T}$, let $V_{h,\text{orgin}}$ denote the optimal feasible value function learned by a baseline, and let $V_{h,\text{PROCO}}$ denote the optimal feasible value function learned by combining the same baseline with PROCO. Then, for any $s_t\in \mathcal{D}$, we have:
   \begin{equation}
       V_{h,\text{PROCO}}(s_t)\geq V_{h,\text{orgin}}(s_t)\ \text{or}\ V_{h,\text{PROCO}}(s_t)>0.
   \end{equation}
   Thus, incorporating \MethodName into safe policy learning will not degrade safety performance due to underestimation of the safety value, compared to learning without \MethodName.
\end{theorem}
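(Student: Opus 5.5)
The plan is a case split on $s_t\in\mathcal{D}$, according to how \MethodName changes the constraint-violation labels and the Bellman targets. \MethodName differs from the baseline in exactly two ways. First, every sample in $\mathcal{D}$ is relabeled with $\bar{c}$. Second, rollout transitions in $\mathcal{D}_{\hat{T}}$ are added, each updated with the conservative operator $\bar{\mathcal{B}}^*$. Because $\bar{c}$ is validated to mark every truly unsafe sample as unsafe, we have $\bar{c}(s)\geq c(s)$ on all data used. Consequently the relabeled violation function satisfies $\bar{h}(s)\geq h(s)$ pointwise, since both are binary with values in $\{h_{\text{min}},h_{\text{max}}\}$. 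When $\mathcal{D}$ is safe-only, the baseline sees $h\equiv h_{\text{min}}$ on $\mathcal{D}$, which makes the inequality immediate.

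\emph{Case 1: $V_{h,\text{PROCO}}(s_t)>0$.} The second disjunct holds and there is nothing to prove. All remaining work is in \emph{Case 2: $V_{h,\text{PROCO}}(s_t)\leq 0$}. Here I would argue that every state reachable from $s_t$ along the transitions used in the \MethodName backup has value at most $0$, so every relevant label equals $h_{\text{min}}$. The key fact is that the operator $\gamma\max\{h(s),\cdot\}+(1-\gamma)h(s)$ gives a value of at least $h(s)$ whenever $\gamma$ is close to $1$, and in particular at least $(1-\gamma)h_{\text{max}}+\gamma h_{\text{max}}>0$ at any state labeled unsafe. Hence a nonpositive value at $s_t$ forces $\bar h=h_{\text{min}}$ along the backed-up chain, and therefore $h=h_{\text{min}}$ as well, because $h\leq\bar h$.

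Next I would compare fixed points via monotonicity. Both operators are $\gamma$-contractions by Proposition~\ref{pro_1} and are monotone in the label $h$ and in $Q_h$. The \MethodName operator uses pointwise larger labels $\bar h\ge h$. Its next-state term is a $\max$ over $\mathcal{D}$ transitions together with model successors, which is at least the $\mathcal{D}$-only term. The expectile-based $V_h$ uses the same $\tau$ in both methods. Starting both iterations from the same initialization, induction on iterates gives $\bar{\mathcal{B}}^{*k}Q\geq\mathcal{B}^{*k}Q$ for every $k$. Passing to the limit yields $V_{h,\text{PROCO}}\geq V_{h,\text{orgin}}$ wherever both are defined on $\mathcal{D}$. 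This argument in fact gives the first disjunct unconditionally. The disjunction is then a safe weakening, which covers the situations where the monotone comparison is unclear.

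The main obstacle is making the monotonicity comparison rigorous for the \emph{learned} quantities. The expectile step mixes $\mathcal{D}\cup\mathcal{D}_{\hat T}$ in the $V_h$ loss, so the action distribution underlying the approximate $\min_{a'}$ differs between the two methods. Adding rollout actions can lower the expectile, since it enlarges the candidate set, and that could in principle decrease $V_h$ at a safe state. My first attempt would be to idealize the expectile as the exact $\min$ over in-support actions and to observe that added rollout samples come only from trajectories containing a $\bar c$-violation. Along such a trajectory the $Q_h$ targets are pushed upward. If one of these samples does lower $V_h(s_t)$, I would show that its value is still $>0$, since a violating branch propagates $h_{\text{max}}$ back with factor $\gamma^{H}$. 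That places us in the second disjunct, which is exactly why the theorem is stated as an ``or''.
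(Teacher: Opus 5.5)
Your proposal has a genuine gap. The step you present as the main argument is false, and the step that actually carries the theorem is only announced.

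\textbf{The iterate comparison is wrong.} The comparison $\bar{\mathcal{B}}^{*k}Q\geq\mathcal{B}^{*k}Q$ does not describe what \MethodName changes. The $\mathcal{D}$-transitions keep the ordinary target form, with the real $s'$. The rollout pairs $(s,a')$ are new samples, and they reach the $\mathcal{D}$-targets only through the expectile defining $V_h(s)$ at the branch state. Enlarging that set of action samples can lower the statistic, so the monotone comparison you need fails exactly at branch states. Proposition~\ref{pro_1} compares $\bar{\mathcal{B}}^*$ with the true-dynamics operator on the same $(s,a)$, which is not the comparison at stake here. Hence the first disjunct is not obtained ``unconditionally''; your own last paragraph concedes this.

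\textbf{The Case 2 reduction is unsound.} $V_{h,\text{PROCO}}(s_t)\leq 0$ constrains only the low-valued actions at $s_t$, and for an expectile not even all of those. It says nothing about every state reachable from $s_t$. So you cannot conclude that all labels along the backup equal $h_{\text{min}}$.

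\textbf{What is missing is the content of the paper's proof.} Three ingredients are needed:
\begin{enumerate}
\item[(i)] Each added rollout pair $(s_{t+k},a')$ has $Q_h(s_{t+k},a')>0$ for large $\gamma$, by the argument of Lemma~\ref{lm 1}.
\item[(ii)] An order-statistic fact: appending a value $\alpha>0$ to the sample cannot lower a nonpositive quantile, and cannot drive a positive one to $\leq 0$ (Lemma~\ref{lm3}).
\item[(iii)] A backward-propagation argument, because the branch typically starts at a downstream state $s_{t+k}$, not at $s_t$.
\end{enumerate}
For (iii) the paper splits on the sign of $V_{h,\text{orgin}}(s_{t+k})$. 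If it is $\leq 0$, the value can only rise, and this propagates back to give $\geq$. If it is $>0$, it stays positive, and there are two sub-cases. Either some intermediate $s_{t'}$ has $V_{h,\text{orgin}}(s_{t'})\leq 0$; there, swapping one positive sample for another leaves the quantile unchanged, so $s_t$ is unaffected. Or none does, and Lemma~\ref{lm4} carries positivity back to $s_t$. Your sketch (``if one of these samples lowers $V_h(s_t)$, it stays $>0$'') covers only a branch attached to $s_t$ itself, and gives no mechanism for (ii) or (iii).

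\textbf{The relabeling claim needs an assumption.} $\bar{c}\geq c$ does not follow from validation on the 100 samples of $\mathcal{D}_{\text{unsafe}}$, and the fallback candidate need not even pass that test. For a general $\mathcal{D}$ it must be stated as an explicit assumption: if $\bar{c}$ misses a truly unsafe sample in $\mathcal{D}$, the conclusion can fail at that sample. In the safe-only case, by contrast, the whole theorem is immediate. The baseline's fixed point is $V_{h,\text{orgin}}\equiv h_{\text{min}}$, while every \MethodName target is $\geq h_{\text{min}}$.
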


\section{Experiments}
In this section, we present our experimental analysis conducted on 17 Safety-Gymnasium~\citep{ji2023safety} tasks from the modified OSRL~\citep{liu2023datasets} dataset to answer the following questions: 
(1) Can \MethodName outperform other baselines across various tasks (Section~\ref{main_exp})?
(2) Do infeasible states in safe-only datasets compromise the safety of policy learning, and can \MethodName address this challenge (Section~\ref{case_study})? 
(3) What is the impact of different components and hyperparameters on \MethodName’s performance (\cref{sens analysis})?
(4) Can \MethodName be used to datasets with more unsafe samples (\cref{sec_unsafe})?

\begin{table*}[t!]
\renewcommand\arraystretch{1.05}
\caption{Overall normalized rewards and costs.
Each value is averaged over 20 evaluation episodes, and 3 random seeds. \textbf{{\color[HTML]{000000} Bold}}: Safe agents. {\color[HTML]{9B9B9B} Gray}: Unsafe agents. \textbf{{\color[HTML]{3531FF} Blue}}: Safe agent with the highest reward.}
\label{main results}
\resizebox{\textwidth}{!}{
\begin{tabular}{c|cccccccccccc|cccccc}
\hline
                       & \multicolumn{2}{c}{BC}                                      & \multicolumn{2}{c}{CDT}                                    & \multicolumn{2}{c}{CCAC}                                    & \multicolumn{2}{c}{LSPC}                                                    & \multicolumn{2}{c}{CAPS}                                    & \multicolumn{2}{c|}{FISOR}                                  & \multicolumn{2}{c}{LSPC PROCO}                                              & \multicolumn{2}{c}{CAPS PROCO}                                              & \multicolumn{2}{c}{FISOR PROCO}                                              \\
\multirow{-2}{*}{Task} & r↑                           & c↓                           & r↑                          & c↓                           & r↑                           & c↓                           & r↑                                   & c↓                                   & r↑                           & c↓                           & r↑                           & c↓                           & r↑                                   & c↓                                   & r↑                                   & c↓                                   & r↑                                    & c↓                                   \\ \hline
PointButton1           & {\color[HTML]{9B9B9B} 0.13}  & {\color[HTML]{9B9B9B} 5.20}  & {\color[HTML]{9B9B9B} 0.47} & {\color[HTML]{9B9B9B} 13.27} & {\color[HTML]{9B9B9B} 0.00}  & {\color[HTML]{9B9B9B} 6.54}  & {\color[HTML]{9B9B9B} 0.16}          & {\color[HTML]{9B9B9B} 4.37}          & {\color[HTML]{9B9B9B} 0.13}  & {\color[HTML]{9B9B9B} 3.37}  & {\color[HTML]{9B9B9B} 0.49}  & {\color[HTML]{9B9B9B} 11.50} & {\color[HTML]{9B9B9B} 0.09}          & {\color[HTML]{9B9B9B} 1.28}          & {\color[HTML]{9B9B9B} -0.03}         & {\color[HTML]{9B9B9B} 1.13}          & {\color[HTML]{9B9B9B} 0.01}           & {\color[HTML]{9B9B9B} 1.15}          \\
PointButton2           & {\color[HTML]{9B9B9B} 0.28}  & {\color[HTML]{9B9B9B} 6.68}  & {\color[HTML]{9B9B9B} 0.59} & {\color[HTML]{9B9B9B} 15.27} & {\color[HTML]{9B9B9B} 0.61}  & {\color[HTML]{9B9B9B} 17.87} & {\color[HTML]{9B9B9B} 0.32}          & {\color[HTML]{9B9B9B} 10.04}         & {\color[HTML]{9B9B9B} 0.18}  & {\color[HTML]{9B9B9B} 4.90}  & {\color[HTML]{9B9B9B} 0.46}  & {\color[HTML]{9B9B9B} 14.69} & {\color[HTML]{9B9B9B} 0.14}          & {\color[HTML]{9B9B9B} 3.02}          & {\color[HTML]{3531FF} \textbf{0.02}} & {\color[HTML]{3531FF} \textbf{0.63}} & {\color[HTML]{9B9B9B} 0.08}           & {\color[HTML]{9B9B9B} 2.76}          \\
PointGoal1             & {\color[HTML]{9B9B9B} 0.58}  & {\color[HTML]{9B9B9B} 3.30}  & {\color[HTML]{9B9B9B} 0.75} & {\color[HTML]{9B9B9B} 5.01}  & {\color[HTML]{9B9B9B} 0.00}  & {\color[HTML]{9B9B9B} 5.29}  & {\color[HTML]{9B9B9B} 0.71}          & {\color[HTML]{9B9B9B} 3.10}          & {\color[HTML]{9B9B9B} 0.61}  & {\color[HTML]{9B9B9B} 3.49}  & {\color[HTML]{9B9B9B} 0.73}  & {\color[HTML]{9B9B9B} 5.70}  & \textbf{0.21}                        & \textbf{0.23}                        & \textbf{0.02}                        & \textbf{0.00}                        & {\color[HTML]{3531FF} \textbf{0.35}}  & {\color[HTML]{3531FF} \textbf{0.96}} \\
PointGoal2             & {\color[HTML]{9B9B9B} 0.53}  & {\color[HTML]{9B9B9B} 8.07}  & {\color[HTML]{9B9B9B} 0.78} & {\color[HTML]{9B9B9B} 13.90} & {\color[HTML]{9B9B9B} 0.87}  & {\color[HTML]{9B9B9B} 14.15} & {\color[HTML]{9B9B9B} 0.59}          & {\color[HTML]{9B9B9B} 8.55}          & {\color[HTML]{9B9B9B} 0.45}  & {\color[HTML]{9B9B9B} 5.81}  & {\color[HTML]{9B9B9B} 0.59}  & {\color[HTML]{9B9B9B} 17.10} & {\color[HTML]{3531FF} \textbf{0.11}} & {\color[HTML]{3531FF} \textbf{0.45}} & \textbf{0.05}                        & \textbf{0.66}                        & \textbf{0.08}                         & \textbf{0.38}                        \\
PointPush1             & {\color[HTML]{9B9B9B} 0.24}  & {\color[HTML]{9B9B9B} 4.65}  & {\color[HTML]{9B9B9B} 0.30} & {\color[HTML]{9B9B9B} 4.65}  & {\color[HTML]{9B9B9B} 0.02}  & {\color[HTML]{9B9B9B} 1.18}  & {\color[HTML]{9B9B9B} 0.17}          & {\color[HTML]{9B9B9B} 2.21}          & {\color[HTML]{9B9B9B} 0.19}  & {\color[HTML]{9B9B9B} 3.96}  & {\color[HTML]{9B9B9B} 0.34}  & {\color[HTML]{9B9B9B} 3.70}  & {\color[HTML]{9B9B9B} 0.16}          & {\color[HTML]{9B9B9B} 2.09}          & \textbf{0.11}                        & \textbf{0.26}                        & {\color[HTML]{3531FF} \textbf{0.17}}  & {\color[HTML]{3531FF} \textbf{0.86}} \\
PointPush2             & {\color[HTML]{9B9B9B} 0.22}  & {\color[HTML]{9B9B9B} 3.70}  & {\color[HTML]{9B9B9B} 0.28} & {\color[HTML]{9B9B9B} 4.51}  & {\color[HTML]{9B9B9B} -0.08} & {\color[HTML]{9B9B9B} 5.95}  & {\color[HTML]{9B9B9B} 0.13}          & {\color[HTML]{9B9B9B} 5.76}          & {\color[HTML]{9B9B9B} 0.18}  & {\color[HTML]{9B9B9B} 3.58}  & {\color[HTML]{9B9B9B} 0.27}  & {\color[HTML]{9B9B9B} 6.61}  & {\color[HTML]{9B9B9B} 0.15}          & {\color[HTML]{9B9B9B} 1.79}          & {\color[HTML]{9B9B9B} 0.09}          & {\color[HTML]{9B9B9B} 1.61}          & {\color[HTML]{3531FF} \textbf{0.12}}  & {\color[HTML]{3531FF} \textbf{0.30}} \\
CarButton1             & {\color[HTML]{9B9B9B} -0.09} & {\color[HTML]{9B9B9B} 3.32}  & {\color[HTML]{9B9B9B} 0.25} & {\color[HTML]{9B9B9B} 15.53} & {\color[HTML]{9B9B9B} 0.03}  & {\color[HTML]{9B9B9B} 17.26} & {\color[HTML]{9B9B9B} -0.05}         & {\color[HTML]{9B9B9B} 6.51}          & {\color[HTML]{9B9B9B} 0.06}  & {\color[HTML]{9B9B9B} 4.51}  & {\color[HTML]{9B9B9B} 0.47}  & {\color[HTML]{9B9B9B} 28.01} & {\color[HTML]{9B9B9B} -0.03}         & {\color[HTML]{9B9B9B} 1.92}          & {\color[HTML]{9B9B9B} -0.02}         & {\color[HTML]{9B9B9B} 2.02}          & {\color[HTML]{3531FF} \textbf{-0.03}} & {\color[HTML]{3531FF} \textbf{0.60}} \\
CarButton2             & {\color[HTML]{9B9B9B} -0.09} & {\color[HTML]{9B9B9B} 4.64}  & {\color[HTML]{9B9B9B} 0.35} & {\color[HTML]{9B9B9B} 19.98} & {\color[HTML]{9B9B9B} 0.01}  & {\color[HTML]{9B9B9B} 11.94} & {\color[HTML]{9B9B9B} -0.17}         & {\color[HTML]{9B9B9B} 2.49}          & {\color[HTML]{9B9B9B} -0.12} & {\color[HTML]{9B9B9B} 4.02}  & {\color[HTML]{9B9B9B} 0.57}  & {\color[HTML]{9B9B9B} 25.96} & {\color[HTML]{9B9B9B} -0.11}         & {\color[HTML]{9B9B9B} 1.88}          & \textbf{-0.09}                       & \textbf{0.34}                        & {\color[HTML]{3531FF} \textbf{0.00}}  & {\color[HTML]{3531FF} \textbf{0.99}} \\
CarGoal1               & {\color[HTML]{9B9B9B} 0.36}  & {\color[HTML]{9B9B9B} 2.15}  & {\color[HTML]{9B9B9B} 0.71} & {\color[HTML]{9B9B9B} 5.04}  & \textbf{0.00}                & \textbf{0.00}                & {\color[HTML]{9B9B9B} 0.47}          & {\color[HTML]{9B9B9B} 1.58}          & {\color[HTML]{9B9B9B} 0.43}  & {\color[HTML]{9B9B9B} 2.15}  & {\color[HTML]{9B9B9B} 0.75}  & {\color[HTML]{9B9B9B} 4.10}  & {\color[HTML]{3531FF} \textbf{0.20}} & {\color[HTML]{3531FF} \textbf{0.68}} & \textbf{0.14}                        & \textbf{0.14}                        & \textbf{0.13}                         & \textbf{0.02}                        \\
CarGoal2               & {\color[HTML]{9B9B9B} 0.22}  & {\color[HTML]{9B9B9B} 2.96}  & {\color[HTML]{9B9B9B} 0.67} & {\color[HTML]{9B9B9B} 11.43} & {\color[HTML]{9B9B9B} 0.48}  & {\color[HTML]{9B9B9B} 14.73} & {\color[HTML]{9B9B9B} 0.25}          & {\color[HTML]{9B9B9B} 3.85}          & {\color[HTML]{9B9B9B} 0.18}  & {\color[HTML]{9B9B9B} 3.41}  & {\color[HTML]{9B9B9B} 0.80}  & {\color[HTML]{9B9B9B} 14.56} & {\color[HTML]{3531FF} \textbf{0.18}} & {\color[HTML]{3531FF} \textbf{0.52}} & \textbf{0.07}                        & \textbf{0.30}                        & \textbf{0.03}                         & \textbf{0.52}                        \\
CarPush1               & {\color[HTML]{9B9B9B} 0.19}  & {\color[HTML]{9B9B9B} 1.10}  & {\color[HTML]{9B9B9B} 0.32} & {\color[HTML]{9B9B9B} 2.86}  & {\color[HTML]{9B9B9B} -1.51} & {\color[HTML]{9B9B9B} 0.64}  & {\color[HTML]{3531FF} \textbf{0.19}} & {\color[HTML]{3531FF} \textbf{0.91}} & {\color[HTML]{9B9B9B} 0.18}  & {\color[HTML]{9B9B9B} 2.39}  & {\color[HTML]{9B9B9B} 0.42}  & {\color[HTML]{9B9B9B} 1.91}  & \textbf{0.10}                        & \textbf{0.55}                        & {\color[HTML]{000000} \textbf{0.16}} & {\color[HTML]{000000} \textbf{0.20}} & {\color[HTML]{000000} \textbf{0.16}}  & {\color[HTML]{000000} \textbf{0.93}} \\
CarPush2               & {\color[HTML]{9B9B9B} 0.08}  & {\color[HTML]{9B9B9B} 4.14}  & {\color[HTML]{9B9B9B} 0.23} & {\color[HTML]{9B9B9B} 9.88}  & {\color[HTML]{9B9B9B} 0.06}  & {\color[HTML]{9B9B9B} 3.02}  & {\color[HTML]{9B9B9B} 0.09}          & {\color[HTML]{9B9B9B} 3.63}          & {\color[HTML]{9B9B9B} 0.11}  & {\color[HTML]{9B9B9B} 3.67}  & {\color[HTML]{9B9B9B} 0.39}  & {\color[HTML]{9B9B9B} 8.04}  & {\color[HTML]{9B9B9B} 0.12}          & {\color[HTML]{9B9B9B} 1.30}          & {\color[HTML]{9B9B9B} 0.03}          & {\color[HTML]{9B9B9B} 1.50}          & {\color[HTML]{3531FF} \textbf{0.01}}  & {\color[HTML]{3531FF} \textbf{0.04}} \\
SwimmerVelocityV1      & {\color[HTML]{9B9B9B} 0.46}  & {\color[HTML]{9B9B9B} 5.70}  & {\color[HTML]{9B9B9B} 0.70} & {\color[HTML]{9B9B9B} 4.59}  & {\color[HTML]{9B9B9B} 0.22}  & {\color[HTML]{9B9B9B} 9.40}  & {\color[HTML]{9B9B9B} 0.62}          & {\color[HTML]{9B9B9B} 6.75}          & {\color[HTML]{9B9B9B} 0.51}  & {\color[HTML]{9B9B9B} 7.98}  & {\color[HTML]{9B9B9B} -0.05} & {\color[HTML]{9B9B9B} 4.51}  & {\color[HTML]{3531FF} \textbf{0.05}} & {\color[HTML]{3531FF} \textbf{0.58}} & \textbf{0.03}                        & \textbf{0.08}                        & \textbf{0.02}                         & \textbf{0.12}                        \\
HopperVelocityV1       & {\color[HTML]{9B9B9B} 0.35}  & {\color[HTML]{9B9B9B} 1.33}  & {\color[HTML]{9B9B9B} 0.63} & {\color[HTML]{9B9B9B} 3.44}  & {\color[HTML]{9B9B9B} 0.08}  & {\color[HTML]{9B9B9B} 7.00}  & {\color[HTML]{9B9B9B} 0.08}          & {\color[HTML]{9B9B9B} 2.19}          & {\color[HTML]{9B9B9B} 0.66}  & {\color[HTML]{9B9B9B} 8.22}  & {\color[HTML]{9B9B9B} 0.07}  & {\color[HTML]{9B9B9B} 3.23}  & {\color[HTML]{9B9B9B} 0.17}          & {\color[HTML]{9B9B9B} 5.86}          & {\color[HTML]{3531FF} \textbf{0.26}} & {\color[HTML]{3531FF} \textbf{0.06}} & \textbf{0.11}                         & \textbf{0.07}                        \\
HalfCheetahVelocityV1  & {\color[HTML]{9B9B9B} 0.95}  & {\color[HTML]{9B9B9B} 4.27}  & {\color[HTML]{9B9B9B} 0.95} & {\color[HTML]{9B9B9B} 1.42}  & {\color[HTML]{9B9B9B} 1.51}  & {\color[HTML]{9B9B9B} 97.03} & {\color[HTML]{9B9B9B} 1.03}          & {\color[HTML]{9B9B9B} 45.52}         & {\color[HTML]{9B9B9B} 0.98}  & {\color[HTML]{9B9B9B} 9.17}  & {\color[HTML]{9B9B9B} 1.03}  & {\color[HTML]{9B9B9B} 16.13} & \textbf{0.41}                        & \textbf{0.00}                        & \textbf{0.00}                        & \textbf{0.00}                        & {\color[HTML]{3531FF} \textbf{0.48}}  & {\color[HTML]{3531FF} \textbf{0.00}} \\
Walker2dVelocityV1     & {\color[HTML]{9B9B9B} 0.69}  & {\color[HTML]{9B9B9B} 2.67}  & \textbf{0.79}               & \textbf{0.55}                & {\color[HTML]{9B9B9B} 0.05}  & {\color[HTML]{9B9B9B} 1.42}  & {\color[HTML]{3531FF} \textbf{0.80}} & {\color[HTML]{3531FF} \textbf{0.00}} & \textbf{0.78}                & \textbf{0.39}                & {\color[HTML]{9B9B9B} 0.15}  & {\color[HTML]{9B9B9B} 5.71}  & {\color[HTML]{9B9B9B} 0.50}          & {\color[HTML]{9B9B9B} 2.08}          & {\color[HTML]{9B9B9B} 0.53}          & {\color[HTML]{9B9B9B} 2.83}          & \textbf{0.09}                         & \textbf{0.58}                        \\
AntVelocityV1          & {\color[HTML]{9B9B9B} 0.98}  & {\color[HTML]{9B9B9B} 11.54} & {\color[HTML]{9B9B9B} 0.99} & {\color[HTML]{9B9B9B} 1.85}  & \textbf{-1.01}               & \textbf{0.00}                & {\color[HTML]{9B9B9B} 0.99}          & {\color[HTML]{9B9B9B} 11.44}         & {\color[HTML]{9B9B9B} 0.98}  & {\color[HTML]{9B9B9B} 13.25} & {\color[HTML]{9B9B9B} 1.01}  & {\color[HTML]{9B9B9B} 10.47} & {\color[HTML]{3531FF} \textbf{0.81}} & {\color[HTML]{3531FF} \textbf{0.16}} & \textbf{0.50}                        & \textbf{0.00}                        & \textbf{0.52}                         & \textbf{0.00}                        \\ \hline
Average                & {\color[HTML]{9B9B9B} 0.36}  & {\color[HTML]{9B9B9B} 4.44}  & {\color[HTML]{9B9B9B} 0.57} & {\color[HTML]{9B9B9B} 7.84}  & {\color[HTML]{9B9B9B} 0.08}  & {\color[HTML]{9B9B9B} 12.55} & {\color[HTML]{9B9B9B} 0.38}          & {\color[HTML]{9B9B9B} 6.99}          & {\color[HTML]{9B9B9B} 0.38}  & {\color[HTML]{9B9B9B} 4.96}  & {\color[HTML]{9B9B9B} 0.50}  & {\color[HTML]{9B9B9B} 10.70} & {\color[HTML]{9B9B9B} 0.19}          & {\color[HTML]{9B9B9B} 1.44}          & \textbf{0.11}                        & \textbf{0.69}                        & {\color[HTML]{3531FF} \textbf{0.14}}  & {\color[HTML]{3531FF} \textbf{0.60}} \\ \hline
Safe Percent           & \multicolumn{2}{c}{0/17}                                    & \multicolumn{2}{c}{1/17}                                   & \multicolumn{2}{c}{3/17}                                    & \multicolumn{2}{c}{2/17}                                                    & \multicolumn{2}{c}{1/17}                                    & \multicolumn{2}{c|}{0/17}                                   & \multicolumn{2}{c}{8/17}                                                    & \multicolumn{2}{c}{12/17}                                                   & \multicolumn{2}{c}{15/17}                                                    \\ \hline
Safety Improvement     & \multicolumn{2}{c}{/}                                       & \multicolumn{2}{c}{/}                                      & \multicolumn{2}{c}{/}                                       & \multicolumn{2}{c}{/}                                                       & \multicolumn{2}{c}{/}                                       & \multicolumn{2}{c|}{/}                                      & \multicolumn{2}{c}{555\%}                                                   & \multicolumn{2}{c}{427\%}                                                   & \multicolumn{2}{c}{1011\%}                                                   \\ \hline
\end{tabular}
}
\end{table*}

\subsection{Baselines and Tasks}
To evaluate \MethodName and its flexibility in integrating with other algorithms, we select three advanced offline safe RL methods. \textbf{FISOR}~\cite{zhengsafe} and \textbf{LSPC}~\cite{koiralalatent} are hard-constraint algorithms, while \textbf{CAPS}(IQL)~\cite{chemingui2025constraint}, originally a soft-constraint method, shares a similar policy training framework and can be extended to hard-constraint settings. Combining each with \MethodName yields three variants: \textbf{FISOR \MethodName}, \textbf{LSPC \MethodName}, and \textbf{CAPS \MethodName}.
To compare with BC–based methods on safe-only datasets, we include \textbf{BC} and \textbf{CDT}~\citep{liu2023constrained} as baselines. Finally, we also include \textbf{CCAC}~\cite{guo2025constraint}, an advanced soft-constraint algorithm based on conservative value estimation, as an additional baseline. All soft-constraint baselines adopt 0 as the cost limit. Reward returns are normalized using the offline dataset’s minimum and maximum, while costs are scaled by 10. Outcomes with normalized cost $\leq 1$ are considered safe, with higher rewards indicating better performance among safe algorithms.

Our experiments use 12 navigation and 5 velocity tasks from Safety-Gymnasium. The navigation tasks involve two robot types (Point and Car) across three scenarios—Button, Goal, and Push—with two tasks per scenario. The velocity tasks include five robots: Ant, HalfCheetah, Hopper, Swimmer, and Walker2d.
To evaluate \MethodName, we use the most challenging OSRL dataset, where task trajectories exhibit varying constraint violations. We construct safe-only datasets by removing all unsafe data while preserving diverse feasibility levels. Additionally, 100 unsafe samples per task are retained for validating LLM-generated cost functions. Further details are in Appendix~\ref{detail task base}.

\begin{figure*}[t]
    \centering
    \includegraphics[width=0.99\textwidth]{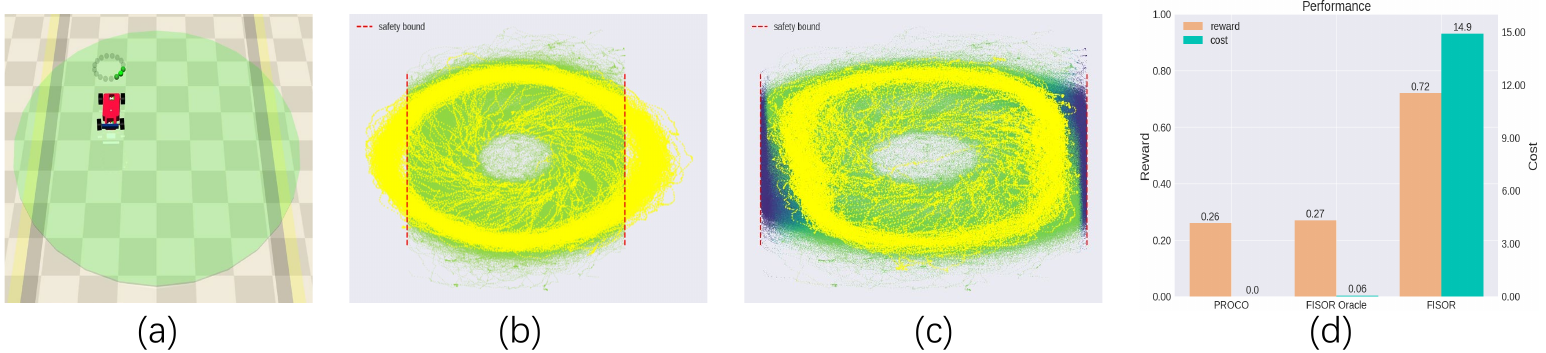}
    \caption{(a) Visualization of the Circle task. (b) and (c) Visualization of FISOR and \MethodName performance, where the red dashed line denotes the safety bound, while the yellow dots indicate the state visitation distribution of the policy. The remaining points represent the value estimations of the constraint violation value function for samples in $\mathcal{D}$, where darker colors correspond to higher estimated violation values. (d) Final performance comparison of FISOR and \MethodName.}
    \label{fig:vis}
\end{figure*}

\begin{figure*}[t]
    \centering
    \includegraphics[width=0.99\textwidth]{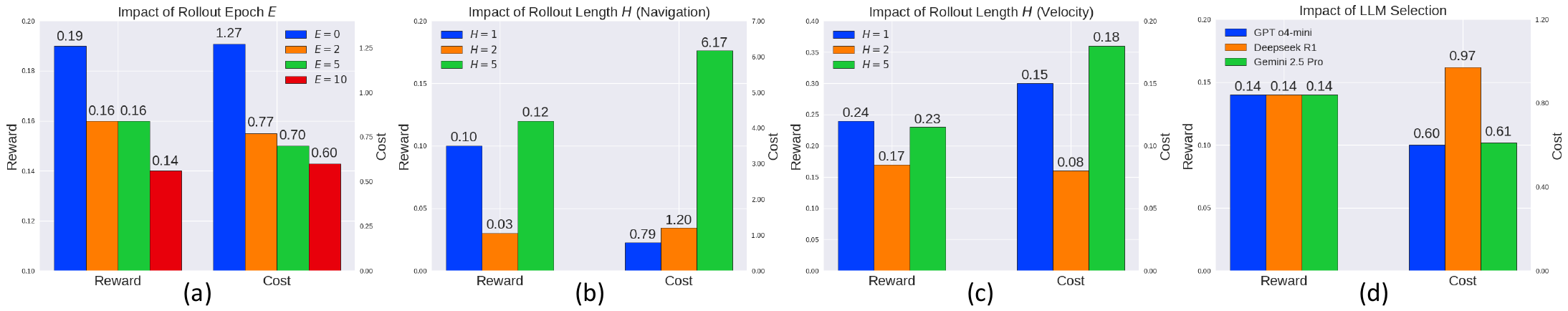}
    \caption{(a) Analysis on rollout epoch $E$. (b), (c) Analysis on rollout length $H$. (d) Analysis on LLM selection.}
    \label{fig:sens}
\end{figure*}

\subsection{Competitive Results}
\label{main_exp}

We first present the overall performance of \MethodName variants and all baselines across Safety-Gymnasium tasks, summarized in~\cref{main results}.
The soft-constraint baseline CCAC exhibits the weakest safety, showing that conventional TD learning with conservative estimation fails to adequately constrain the policy, causing distributional shift and severe violations. BC-based methods improve safety by restricting policies to safe-only data but still cannot reliably distinguish feasible from infeasible states, resulting in notable violations. Existing SOTA offline safe RL algorithms—LSPC, CAPS, and FISOR—perform well with many unsafe samples but fail on safe-only datasets, focusing on reward maximization and incurring significant safety violations.
In contrast, integrating these algorithms with \MethodName enables effective feasibility identification via model-based rollouts and conservative cost functions, yielding substantial safety gains. Safety performance across all three methods increases by over 400\%, with a maximum improvement exceeding 1000\%, demonstrating \MethodName’s effectiveness on safe-only datasets. Due to its superior performance, we adopt FISOR \MethodName as the default variant, referred to simply as \MethodName in subsequent experiments.

\subsection{Case Study}
\label{case_study}
Here, we aim to verify whether infeasible states in a safe-only dataset can adversely affect safe policy learning. To this end, we conduct a simple visualization experiment on the Ant Circle task. As illustrated in Figure~\ref{fig:vis}(a), the Circle task requires the agent to move along the circumference of a circle as closely as possible, while crossing the left or right boundaries is considered unsafe.

\begin{wrapfigure}{r}{0.5\textwidth} 
    \centering
    \includegraphics[width=0.48\textwidth]{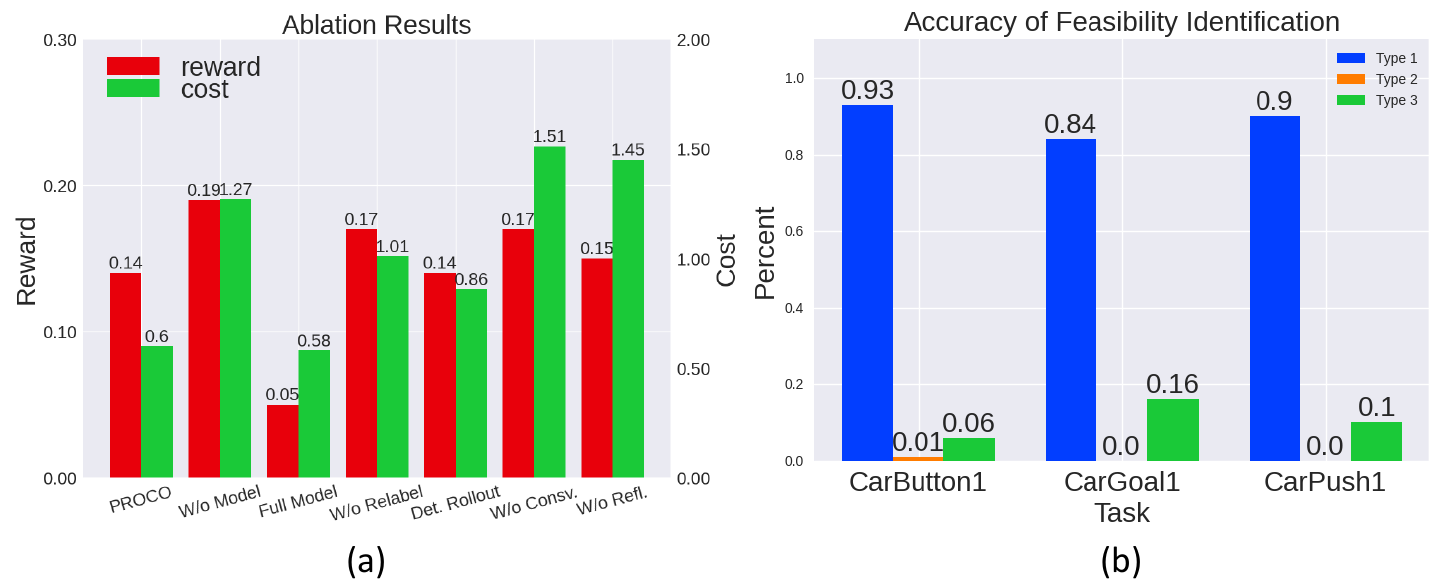}
    \caption{(a) Ablation study results. (b) Feasibility identification accuracy analysis.}
   \label{fig:abl}
\end{wrapfigure}


We compare FISOR with \MethodName. As shown in Figure~\ref{fig:vis}(b), without unsafe samples, FISOR’s constraint violation value function cannot distinguish feasible from infeasible offline samples. Consequently, the learned policy prioritizes reward maximization, and even with BC-based extraction, it deviates from the offline distribution, generating many out-of-distribution unsafe samples.
In contrast, as shown in Figure~\ref{fig:vis}(c), \MethodName leverages the conservative cost function together with the dynamics model to successfully identify samples near the safety boundary as infeasible. This enables the policy to adjust its behavior in time, preventing it from drifting beyond the safe region of the offline dataset and thereby avoiding constraint violations.
Finally, as shown in Figure~\ref{fig:vis}(d), \MethodName successfully achieves zero safety violations, attaining performance comparable to \textbf{FISOR Oracle} (which is trained with access to a large amount of additional unsafe samples), whereas FISOR produces a substantial number of unsafe behaviors.

\subsection{Ablation and Analysis Studies}
\label{sens analysis}


In this section, we first conduct ablation studies to evaluate the contributions of key components in \MethodName.
The following variants are considered: 
(1) \textbf{W/o Model} omits the use of a dynamics model for generating unsafe samples, instead solely applying the conservative cost function to relabel the offline dataset.
(2) \textbf{Full Model} utilizes all model rollout data to update all the value functions and the policy.
(3) \textbf{W/o Relabel} applies the conservative cost function only to label the model rollout data and does not relabel the offline dataset.
(4) \textbf{Det. Rollout} executes model rollouts deterministically with the policy, omitting the injection of extra noise.
(5) \textbf{W/o Consv.} does not require the LLM to generate a cost function that is more conservative than the given safety constraint description.
(6) \textbf{W/o Refl.} does not use the additional unsafe samples $\mathcal{D}_{\text{unsafe}}$ or the safe offline dataset to validate and provide feedback on the correctness and conservativeness of the LLM-generated cost function.
As shown in~\cref{fig:abl}(a), when the dynamics model is not used, safety performance drops significantly, confirming the effectiveness of model-based feasibility identification. On the other hand, employing all model rollout data for updates to all value functions and the policy preserves high safety performance but significantly reduces reward performance, suggesting that errors accumulated during model rollouts can destabilize value learning and negatively impact reward estimation.
Meanwhile, the decline in safety performance for W/o Relabel and Det. Rollout confirms the effectiveness of relabeling offline data with the conservative cost function and adding extra noise during model rollouts to enhance data diversity. Finally, the poor safety performance of W/o Consv. and W/o Refl. highlights the importance of generating a more conservative cost function for effective feasibility identification, as well as the necessity of validating and refining the LLM-generated cost function using existing data.
More detailed results are provided in Appendix~\ref{more results}.

Next, we evaluate the accuracy of the feasibility identification learned by \MethodName. Specifically, we compare the feasible values predicted by \MethodName with those produced by the FISOR Oracle, and categorize the results into three types: \textbf{Type 1}, where both methods yield the same sign; \textbf{Type 2}, where \MethodName classifies a state as safe while the FISOR Oracle classifies it as unsafe; and \textbf{Type 3}, where \MethodName classifies a state as unsafe while the FISOR Oracle classifies it as safe. The results are shown in \cref{fig:abl}(b).
We observe that \MethodName achieves a high level of accuracy in feasibility identification, producing predictions that are comparable to those of the FISOR Oracle, even in the absence of unsafe samples. Moreover, Type 2 cases are almost nonexistent, indicating that \MethodName exhibits extremely high precision in identifying infeasible samples, further validating the effectiveness of the proposed model-based framework.

Then, we investigate the impact of different hyperparameters on \MethodName’s performance. During rollouts, three hyperparameters play a crucial role: rollout batch size $b$, the number of rollout epochs $E$, and rollout length $H$. Specifically, $b$ denotes the number of samples drawn, $H$ is the number of steps rolled out per sample, and $E$ indicates how many times the previous process is repeated per rollout. Since both $b$ and $E$ determine the total amount of rollout data, we focus our analysis on $E$ and $H$.
First, as shown in~\cref{fig:sens}(a), increasing $E$ leads to a noticeable decline in reward and cost. This indicates that a larger amount of rollout data enables more accurate cost value learning, further confirming the effectiveness of model-based feasibility identification.
Second, \cref{fig:sens}(b) and (c) show that in Velocity tasks, where the dynamics model is relatively accurate, increasing $H$ does not noticeably affect policy performance. In contrast, in Navigation tasks, where model learning is more challenging, larger values of $H$ lead to a certain degradation in safety performance. These results confirm the necessity of reducing $H$ while employing a conservative cost function when an accurate dynamics model is difficult to learn.
In the end, \cref{fig:sens}(d) presents the effect of different LLMs on performance. GPT o4-mini and Gemini 2.5 Pro demonstrate comparable overall results, whereas Deepseek R1 shows lower safety performance. This suggests that the choice of LLM can impact \MethodName’s effectiveness, with more capable LLMs generally yielding better outcomes.

\begin{figure*}[t]
    \centering
    \includegraphics[width=0.99\textwidth]{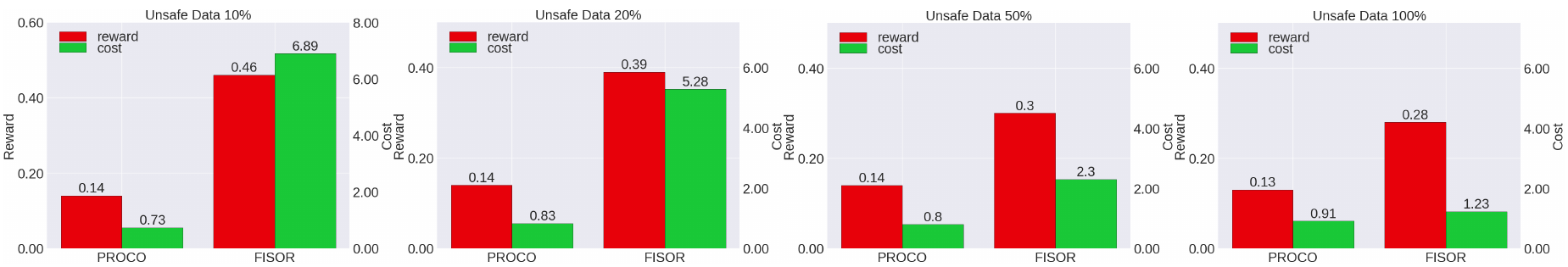}
    \caption{Performance analysis under different unsafe data percent.}
    \label{fig:unsafe_percent}
\end{figure*}

\subsection{Performance with Different Unsafe Data Number}
\label{sec_unsafe}
Finally, we evaluate \MethodName and FISOR under varying amounts of unsafe data (10\%, 20\%, 50\%, 100\% of the OSRL unsafe dataset) across 17 tasks. The results are shown in \cref{fig:unsafe_percent}.
We observe that as the amount of unsafe data increases, the reward performance of \MethodName remains largely stable, while the cost exhibits a moderate increase. This is likely because incorporating more unsafe samples into the BC term introduces a stronger bias toward unsafe actions. Meanwhile, these results suggest that model errors have a limited impact when the rollout length is small.
Importantly, when unsafe data are scarce, \MethodName consistently achieves substantially better safety performance than FISOR. Only when the unsafe data reach 100\% does the use of \MethodName become a trade-off: incorporating \MethodName sacrifices some reward performance in exchange for improved safety, whereas omitting it prioritizes reward optimization.

\section{Final Remarks}
In this work, we propose \MethodName, a novel algorithm for learning safe policies offline when unsafe samples are scarce or entirely absent.
\MethodName first leverages the offline dataset to learn a dynamics model, while simultaneously employing an LLM to generate a conservative cost function tailored to the task. Based on the learned dynamics model and cost function, it then performs branched rollouts from the offline data samples to simulate their potential future evolutions. The generated rollout data are further incorporated into HJ reachability analysis for feasibility identification, which in turn guides policy extraction to ensure both safety and effectiveness.
Extensive experiments across diverse tasks demonstrate the superior safety performance of \MethodName.
In the future, leveraging more powerful Vision-Language Models (VLMs) to extend \MethodName to visual tasks and embodied intelligence in a more cost-efficient manner represents a highly promising research direction.


\bibliographystyle{plainnat}
\bibliography{reference}

\appendix

\section{Mathematical Proofs}
\subsection{Proof of Proposition~\ref{pro_1}}
\begin{proof}
    First, we aim to proof that 
    \begin{equation}
        |\max(a_1,b_1)-\max(a_2,b_2)|\leq\max(|a_1-a_2|,|b_1-b_2|).
    \end{equation}
    To start, if $a_1\geq b_1$, then 
    \begin{equation}
        \max(a_1,b_1)-\max(a_2,b_2)=a_1-\max(a_2,b_2)\leq a_1-a_2.
    \end{equation}
    Otherwise, if $b_1\geq a_1$, then 
    \begin{equation}
        \max(a_1,b_1)-\max(a_2,b_2)=b_1-\max(a_2,b_2)\leq b_1-b_2.
    \end{equation}
    Therefore, we have 
    \begin{equation}
        \max(a_1,b_1)-\max(a_2,b_2)\leq \max(a_1-a_2,b_1-b_2).
    \end{equation}
    Similarly, by switching the subscript 1 and 2, we have
    \begin{equation}
        \max(a_2,b_2)-\max(a_1,b_1)\leq \max(a_2-a_1,b_2-b_1),
    \end{equation}
    which is equivalent to
    \begin{equation}
        -(\max(a_1,b_1)-\max(a_2,b_2))\leq\max(-(a_1-a_2),-(b_1-b_2)).
    \end{equation}
    By observing that $\max(x,y)\leq\max(|x|,|y|)$, we have 
    \begin{equation}
        -\max(|a_1-a_2|,|b_1-b_2|)\leq\max(a_1,b_1)-\max(a_2,b_2)\leq\max(|a_1-a_2|,|b_1-b_2|),
    \end{equation}
    which means
    \begin{equation}
    \label{ineq 1}
        |\max(a_1,b_1)-\max(a_2,b_2)|\leq\max(|a_1-a_2|,|b_1-b_2|).
    \end{equation}
    Therefore, we have 
    \begin{align}||\bar{\mathcal{B}^*}Q-\bar{\mathcal{B}^*}Q'||_\infty &= \sup_{s,a}|\bar{\mathcal{B}^*} Q(s,a)-\bar{\mathcal{B}^*}Q'(s,a)|\\&=\gamma\sup_{s,a}|\max\{h(s),\max_{s'\in\hat{T}(s,a)}\min_{a'}Q(s',a')\}\nonumber\\&-\max\{h(s),\max_{s'\in\hat{T}(s,a)}\min_{a'}Q'(s',a')\}| \\\label{ineq 2}&\leq\gamma\sup_{s,a}|\max_{s'\in\hat{T}(s,a)}\min_{a'}Q(s',a')-\max_{s'\in\hat{T}(s,a)}\min_{a'}Q'(s',a')| \\&\leq \gamma\sup_{s,a}\max_{s'\in \hat{T}(s,a)}\max_{a'}|Q(s',a')-Q'(s',a')| \\&\leq\gamma\sup_{s',a'}|Q(s',a')-Q'(s',a')|\\&=\gamma||Q-Q'||_{\infty}, \end{align}
    where inequality~\ref{ineq 2} owns to inequality~\ref{ineq 1}. Thus, $\bar{\mathcal{B}^*}$ is a $\gamma$ contraction mapping in the $\infty$-norm.
    Then, for any $Q, Q'$, if $Q\geq Q'$ pointwise, then 
    \begin{equation}
        \begin{aligned}\bar{\mathcal{B}^*}Q&=(1-\gamma)h(s)+\gamma\max\{h(s),\max_{s'\in\hat{T}(s,a)}\min_{a'}Q(s',a')\}\\&\geq(1-\gamma)h(s)+\gamma\max\{h(s),\min_{a'}Q(s',a')\},s'\sim T(s,a)\\&\geq(1-\gamma)h(s)+\gamma\max\{h(s),\min_{a'}Q'(s',a')\},s'\sim T(s,a)\\&=\mathcal{B}^* Q'.\end{aligned}
    \end{equation}
    Now, let $Q^0$ be any initial $Q$ function, and define $\bar{Q^k}=(\bar{\mathcal{B}^*})^k Q_0,Q^k=(\mathcal{B}^*)^k Q_0$. Then, we have $\bar{Q^k}\geq Q^k,\forall k\in\mathbb{N}$ . Therefore, taking the limits $\bar{Q_h^*}=\lim_{k\to\infty}\bar{Q^k}$ and $Q^*_{h,T}=\lim_{k\to\infty}Q^k$, we obtain $\bar{Q^*_h}(s,a)\geq Q^*_{h,T}(s,a)$ pointwise.
\end{proof}

\subsection{Proof of \cref{th_1}}
\begin{lemma}
\label{lm 1}
    Suppose that Assumption~\ref{ass_1} holds, then $\forall s\in \bar{S_f^*}=S-S_f^*$, it holds that $\forall a,Q^*_h(s,a)\geq h_{\text{min}}+ \gamma^{H^*}(h_{\text{max}}-h_{\text{min}})$, where $Q^*_h$ is the learned action-value function by the feasible Bellman operator, and $h_{\text{min}}$ is the minimum value of $h(s)$, and $h_{\text{max}}$ is the value of $h(s)$ when it is unsafe, that is whenever $h(s)>0$, $h(s)=h_{\text{max}}$.
\end{lemma}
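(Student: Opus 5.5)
The plan is to unroll the discounted feasible Bellman operator $\mathcal{B}^*$ along an arbitrary trajectory and bound the fixed point $Q^*_h$ from below.

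\textbf{Step 1: closed form along a trajectory.} Iterating $Q(s_0,a_0)=(1-\gamma)h(s_0)+\gamma\max\{h(s_0),V(s_1)\}$ for a fixed trajectory $s_0,a_0,s_1,\dots$ gives
\begin{equation*}
\sum_{t\ge 0}(1-\gamma)\gamma^t \max_{k\le t} h(s_k).
\end{equation*}
The intermediate quantity I would aim for is $m_t:=\max_{k\le t}h(s_k)$. Since $Q^*_h$ is the fixed point of a $\gamma$-contraction, this representation holds for the minimizing policy. With deterministic transitions, or pointwise along every realized trajectory when they are stochastic, we get
\begin{equation*}
Q^*_h(s,a)=\min_{\pi}\ \mathbb{E}\Big[\sum_{t\ge 0}(1-\gamma)\gamma^t m_t\Big],\qquad s_0=s,\ a_0=a .
\end{equation*}
The cleanest route is to avoid proving the closed form and argue by induction on value iteration. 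Starting from $Q^0\equiv h_{\min}$, I would show that after $k$ iterations, for any state from which every $j$-step action sequence with $j\le k$ must reach an unsafe state by step $j$, the value is at least $h_{\min}+\gamma^{j}(h_{\max}-h_{\min})$. I would then pass to the limit, using that $\mathcal{B}^*$ is monotone and contractive, as shown in the proof of Proposition~\ref{pro_1}.

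\textbf{Step 2: use Assumption~\ref{ass_1}.} Take $s\in\bar{S_f^*}$ and any $a$. By Assumption~\ref{ass_1}, every continuation of the action sequence starting with $a$ reaches an unsafe state at some time $\tau\le H^*$. Here I read Assumption~\ref{ass_1}'s ``lead to an unsafe state'' as: some state among $s_1,\dots,s_{H^*}$, or $s_0$ itself, is unsafe. Then $m_t=h_{\max}$ for all $t\ge\tau$, and $m_t\ge h_{\min}$ otherwise. Hence
\begin{equation*}
\sum_t(1-\gamma)\gamma^t m_t\ \ge\ (1-\gamma^{\tau})h_{\min}+\gamma^{\tau}h_{\max}\ \ge\ h_{\min}+\gamma^{H^*}(h_{\max}-h_{\min}),
\end{equation*}
since $\gamma^\tau\ge\gamma^{H^*}$ and $h_{\max}\ge h_{\min}$. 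This holds for every policy, so it holds for the minimum, which gives the lemma.

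\textbf{Main obstacle.} The delicate point is the induction in Step 1, in the form that needs no closed form. The infeasible set is not closed under one step: a successor $s'$ of an infeasible state is infeasible, but with a shorter horizon $H^*-1$. So I would prove a stronger statement. Define $D_j$ as the set of states from which every action sequence hits an unsafe state within $j$ steps. Claim: for $s\in D_j$ and all $a$, $Q^*_h(s,a)\ge h_{\min}+\gamma^j(h_{\max}-h_{\min})$. The base case is $j=0$: $s$ is unsafe, and the fixed point gives $Q^*_h\ge(1-\gamma)h_{\max}+\gamma h_{\max}=h_{\max}$. For the step, if $s\in D_j$ is safe then every successor lies in $D_{j-1}$, so
\begin{equation*}
Q^*_h(s,a)\ge(1-\gamma)h_{\min}+\gamma\big(h_{\min}+\gamma^{j-1}(h_{\max}-h_{\min})\big)=h_{\min}+\gamma^{j}(h_{\max}-h_{\min}).
\end{equation*}
This step uses $\max\{h(s),V\}\ge V$, together with $h(s)\ge h_{\min}$ in the $(1-\gamma)$ term. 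With stochastic transitions the same bound holds after taking expectations, because it holds for each successor. Assumption~\ref{ass_1} puts $\bar{S_f^*}\subseteq D_{H^*}$, and monotonicity in $j$ finishes the argument.
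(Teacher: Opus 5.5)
Your proof is correct and follows the paper's argument. Both recurse backward from the first unsafe state, using $Q^*_h\ge h_{\text{max}}$ there and picking up $(1-\gamma)h_{\text{min}}$ plus a factor $\gamma$ at each step back, which gives $Q^*_h(s,a)\ge h_{\text{min}}+\gamma^{H^*}(h_{\text{max}}-h_{\text{min}})$. Your induction over the sets $D_j$ is tidier bookkeeping of the same recursion: it makes the bound uniform over actions, so it handles the $\min_{a'}$ inside $V^*_h$ explicitly, whereas the paper covers this only by appealing to the arbitrariness of the trajectory. The closed form sketched in your Step 1, which you rightly avoid relying on, holds here only because $h$ is two-valued.
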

\begin{proof}
    For any trajectory $\tau$ starting from the infeasible state $s_0$, suppose it first violates at horizon $t$, that is for $(s_0,a_0,s_1,a_1,\dots,s_t)$, $h(s_t)=h_{\text{max}}$. Therefore, we have
    \begin{equation}
    \begin{aligned}
        Q^*_h(s_t,a_t)&=(1-\gamma)h(s_t)+\gamma\max\{h(s_t),\min_{a'}Q^*_h(s_{t+1},a_{t+1})\}\\&\geq (1-\gamma)h_{\text{max}}+\gamma h_{\text{max}}=h_{\text{max}}.
    \end{aligned}
    \end{equation}
    Similarly, by recursion we have 
    \begin{equation}
    \begin{aligned}
       Q^*_h(s_{t-k},a_{t-k})&\geq (1-\gamma)h_{\text{min}}(1+\gamma+\dots+\gamma^{k-1})+\gamma^k h_{\text{max}}\\&=h_{\text{min}}+\gamma^k(h_{\text{max}}-h_{\text{min}}).
    \end{aligned}
    \end{equation}
    For that $\gamma<1$ and $h_{\text{max}}-h_{\text{min}}>0$, and the arbitrariness of trajectory $\tau$, we get 
    \begin{equation}
        \forall a_0,Q^*_h(s_0,a_0)\geq h_{\text{min}}+\gamma^t(h_{\text{max}}-h_{\text{min}})\geq h_{\text{min}}+\gamma^{H^*}(h_{\text{max}}-h_{\text{min}}).
    \end{equation}
\end{proof}
Based on Lemma~\ref{lm 1}, we can now proceed to the proof of \cref{th_1}.
\begin{proof}
    For any infeasible state $s_0$, suppose $|\hat{T}|=N$, that is the ensemble dynamics model set has $N$ dynamics models, using the following rollout method
    \begin{itemize}
        \item Using each dynamics model in $\hat{T}$ to obtain the fist next state set $\{s\}_1$ with $N$ states.
        \item For each state in $\{s\}_1$, using each dynamics model to generate next rollout state set.
        \item Repeat the above two step, untill reaching horizon $H^*$.
    \end{itemize}
    This will generate $N^{H^*}$ rollout branches, and due to Assumption~\ref{ass_1} and Assumption~\ref{ass_2}, there will be at least one branch that is unsafe. Therefore, if there is only one unsafe branch, then this will be the ground truth branch, and according to Lemma~\ref{lm 1}, it can be ensured that $\forall a,Q^*_h(s_0,a)\geq h_{\text{min}}+ \gamma^{H^*}(h_{\text{max}}-h_{\text{min}})$. Otherwise, if there are more than one unsafe branch, then we don’t need to figure out the ground truth branch, choosing any one of them to update the action value function can also lead to $\forall a,Q^*_h(s_0,a)\geq h_{\text{min}}+ \gamma^{H^*}(h_{\text{max}}-h_{\text{min}})$.
    Then, according to Proposition~\ref{pro_1} and selecting $\gamma \in (\sqrt[H^*]{\frac{h_{\text{min}}}{h_{\text{min}}-h_{\text{max}}}},1)$, we have 
    \begin{equation}
        \forall a,\bar{Q_h^*}(s_0,a)\geq Q_h^*(s_0,a)\geq h_{\text{min}}+ \gamma^{H^*}(h_{\text{max}}-h_{\text{min}})>0.
    \end{equation}
\end{proof}

\subsection{Proof of \cref{th_2}}
\begin{lemma}{\citep{janner2019trust}}
\label{lm 2}
    Suppose the expected KL-divergence between two transition distributions is bounded as $\max_t\mathbb{E}_{s\sim p_1^t(s)}D_{\text{KL}}(p_1(s'|s)||p_2(s'|s))\leq \delta$, and the initial stat distributions are the same $p_1^0(s)=p_2^0(s)$. Then the distance in the state marginal is bounded as 
    \begin{equation}
        D_{\text{TV}}(p_1^t(s)||p_2^t(s))\leq t\delta.
    \end{equation}
\end{lemma}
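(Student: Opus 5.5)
We need to prove Lemma~\ref{lm 2}: if the expected one-step KL divergence between the two transition kernels is at most $\delta$ at every step, and the chains start from the same distribution, then the state marginals satisfy $D_{\text{TV}}(p_1^t\|p_2^t)\leq t\delta$.

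The plan is an induction on $t$ that telescopes the marginal discrepancy. Write $p_i^{t}(s')=\int p_i^{t-1}(s)\,p_i(s'|s)\,ds$. Add and subtract the mixed term $\int p_1^{t-1}(s)\,p_2(s'|s)\,ds$ and apply the triangle inequality:
\begin{equation*}
|p_1^t(s')-p_2^t(s')|\leq \int p_1^{t-1}(s)\,|p_1(s'|s)-p_2(s'|s)|\,ds+\int |p_1^{t-1}(s)-p_2^{t-1}(s)|\,p_2(s'|s)\,ds.
\end{equation*}

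Next, integrate over $s'$ and multiply by $\tfrac12$.
\begin{itemize}
\item The first term becomes $\mathbb{E}_{s\sim p_1^{t-1}}D_{\text{TV}}(p_1(\cdot|s)\|p_2(\cdot|s))$.
\item The second term is exactly $D_{\text{TV}}(p_1^{t-1}\|p_2^{t-1})$, because $\int p_2(s'|s)\,ds'=1$.
\end{itemize}
This gives the recursion
\begin{equation*}
D_{\text{TV}}(p_1^t\|p_2^t)\leq D_{\text{TV}}(p_1^{t-1}\|p_2^{t-1})+\mathbb{E}_{s\sim p_1^{t-1}}D_{\text{TV}}(p_1(\cdot|s)\|p_2(\cdot|s)).
\end{equation*}
The base case is $D_{\text{TV}}(p_1^0\|p_2^0)=0$, since the initial distributions coincide. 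Unrolling the recursion bounds $D_{\text{TV}}(p_1^t\|p_2^t)$ by a sum of $t$ per-step expected TV terms, each evaluated under $p_1^{k}$ for $k<t$.

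The main obstacle is converting each per-step expected TV term into $\delta$.
\begin{itemize}
\item Pinsker's inequality gives $D_{\text{TV}}\leq\sqrt{D_{\text{KL}}/2}$. Jensen's inequality then gives
\begin{equation*}
\mathbb{E}_{s\sim p_1^k}D_{\text{TV}}\leq\sqrt{\mathbb{E}_{s\sim p_1^k}D_{\text{KL}}/2}\leq\sqrt{\delta/2},
\end{equation*}
using the hypothesis $\max_t\mathbb{E}_{s\sim p_1^t}D_{\text{KL}}\leq\delta$. This yields $t\sqrt{\delta/2}$, not $t\delta$.
\item To match the stated $t\delta$, I would follow the convention of \citep{janner2019trust}, where $\delta$ denotes a bound on the expected per-step TV divergence, i.e.\ $\max_t\mathbb{E}_{s\sim p_1^t}D_{\text{TV}}(p_1(\cdot|s)\|p_2(\cdot|s))\leq\delta$. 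Under that reading each step contributes at most $\delta$ and the sum is $t\delta$.
\item Note that $t\sqrt{\delta/2}\leq t\delta$ only when $\delta\geq 1/2$, so the KL-based bound does not generally imply $t\delta$. If the hypothesis is kept in KL form, the honest conclusion is $t\sqrt{\delta/2}$, and the write-up must state clearly which quantity $\delta$ bounds.
\end{itemize}

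Finally, the branched-rollout setting has different kernels at $t=0$ (fixed action $a$) and for $t>0$ (actions drawn from $\pi$). Since the recursion is applied step by step with whichever kernel is in force at that step, this changes nothing in the argument.
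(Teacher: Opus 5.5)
Your add-and-subtract recursion, with the mixed term $\int p_1^{t-1}(s)\,p_2(s'|s)\,ds$ chosen so the per-step divergence is averaged under $p_1^{t-1}$ as the hypothesis requires, is the same telescoping argument as in \citep{janner2019trust}, which the paper cites in place of a proof, and your diagnosis is correct: that argument only controls the per-step \emph{expected TV}, so $\delta$ must effectively be a TV bound. In fact the KL-form statement is false for small $\delta$: take two states, start at state $0$, use identical kernels at state $1$, and set $p_1(\cdot|0)=\mathrm{Bern}(1/2)$, $p_2(\cdot|0)=\mathrm{Bern}(1/2+\epsilon)$; this gives $D_{\text{TV}}(p_1^1\|p_2^1)=\epsilon$ while $\delta=-\tfrac12\log(1-4\epsilon^2)\approx 2\epsilon^2$, so your TV reading is necessary rather than cosmetic, and if KL is kept (as in Assumption~\ref{ass_3}) the right conclusion is $t\sqrt{\delta/2}$, or the sharper $\sqrt{t\delta/2}$ via the KL chain rule over trajectories, data processing and a single Pinsker step, with the $\delta$-dependent term of Theorem~\ref{th_2} adjusted accordingly.
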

\begin{proof}
    Please refer to \cite{janner2019trust}.
\end{proof}
Based on Lemma~\ref{lm 2}, we can now proceed to the proof of \cref{th_2}.
\begin{proof}
    First, according to Assumption~\ref{ass_3}, Assumption~\ref{ass_4} and Lemma~\ref{lm 2}, we have $|\mathbb{E}_{p_1^t}[h(s)]-\mathbb{E}_{p_2^t}[h(s)]|\leq tK\delta$, which means $\mathbb{E}_{p_2^t}[h(s)]\geq \mathbb{E}_{p_1^t}[h(s)]-tK\delta$ holds for any $t$.
    Meanwhile, similar to the proof of Lemma 1, it is clear that $Q_h^{p_2}(s,a)\geq h_{\text{min}}+\gamma^{t}(\mathbb{E}_{p_2^t}[h(s)]-h_{\text{min}})$, where $Q_h^{p_2}$ refers to the learned value function with the feasible Bellman operator and state transition distribution $p_2$. (A critical observation is $\forall a,\mathbb{E}_{s\sim p_2^t}[Q(s,a)]\geq \mathbb{E}_{s\sim p_2^t}[h(s)]$.)

    Therefore, combining the results before and we can get $Q_h^{p_2}(s,a)\geq(1-\gamma^t)h_{\text{min}}+\gamma^t\mathbb{E}_{p_1^t}[h(s)]-\gamma^t tK\delta$ holds for any $t$, which means
    \begin{equation}
        Q_h^{p_2}(s,a)\geq\max_{t\in\{1,\dots,H^*\}}[(1-\gamma^t)h_{\text{min}}+\gamma^t\mathbb{E}_{p_1^t}[h(s)]-\gamma^t tK\delta].
    \end{equation}
    Finally, apply Proposition~\ref{pro_1}, we have
    \begin{equation}
        \begin{aligned}
            \bar{Q_h^*}(s,a)\geq Q_h^{p_2}(s,a)&\geq\max_{t\in\{1,\dots,H^*\}}[(1-\gamma^t)h_{\text{min}}+\gamma^t\mathbb{E}_{p_1^t}[h(s)]-\gamma^t tK\delta]\\&\geq(1-\gamma^{t^*})h_{\text{min}}+\gamma^{t^*}\mathbb{E}_{p_1^{t^*}}[h(s)]-\gamma^{t^*} {t^*}K\delta,
        \end{aligned}
    \end{equation}
    where $t^*=\arg\max_{t\in\{1,\dots,H^*\}}[\mathbb{E}_{p_1^t}[h(s)]]$.
\end{proof}

\subsection{Proof of \cref{th_3}}
\begin{lemma}
\label{lm3}
    Let $x=\{x_1,x_2,\dots,x_n\}$, and let $y_{\text{old}}$ denote the $\tau$-quantile of $x$. Suppose we augment $x$ by adding an arbitrary value $\alpha>0$, and let the resulting $\tau$-quantile be denoted by $y_{\text{new}}$. Then, if $y_{\text{old}}>0$, it necessarily follows that $y_{\text{new}}>0$.
\end{lemma}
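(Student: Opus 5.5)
The key observation is monotonicity: inserting a single value into a sample cannot push any order statistic downward. So I would compare the two samples entry by entry after sorting, fixing a definition of the $\tau$-quantile first. In line with the reverse expectile regression used in \cref{cost value}, I take the empirical $\tau$-quantile to be the order-statistic quantile $y=\inf\{z:\ \frac{1}{n}|\{i:x_i\le z\}|\ge\tau\}$. Equivalently, with $x_{(1)}\le\dots\le x_{(n)}$ sorted, $y_{\text{old}}=x_{(k)}$ where $k=\lceil \tau n\rceil$ (taking $k=1$ if $\tau n\le 0$).

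\textbf{Monotonicity step.} Let $x'=x\cup\{\alpha\}$ have sorted entries $x'_{(1)}\le\dots\le x'_{(n+1)}$. For every $j\le n$ we have $x'_{(j)}\ge x_{(j)}$. One way to see this is through counting functions: $F'(z):=|\{i:x'_i\le z\}|\le F(z)+1$. For $z<\alpha$ the inequality sharpens to $F'(z)=F(z)$, and for $z\ge\alpha$ it is exactly $F'(z)=F(z)+1$. The new quantile is $y_{\text{new}}=x'_{(k')}$ with $k'=\lceil\tau(n+1)\rceil$. Since $\tau(n+1)\ge\tau n$, we get $k'\ge k$. Hence whenever $k'\le n$,
\[
y_{\text{new}}=x'_{(k')}\ge x'_{(k)}\ge x_{(k)}=y_{\text{old}}>0 .
\]

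\textbf{Remaining case, $k'=n+1$.} Here $y_{\text{new}}=\max(x_{(n)},\alpha)$. This is positive because $\alpha>0$, and also because $x_{(n)}\ge y_{\text{old}}>0$. Together with the previous step this gives $y_{\text{new}}>0$ in all cases.

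\textbf{Expected obstacle.} The delicate part is the definition of the quantile itself: with interpolated quantiles, or with the minimizer of $\sum|\tau-\mathbb{I}(u>0)|\,|u|$, the "index" description above is not literally correct. To make the argument definition-agnostic, I would argue directly from sign counts instead. The statement $y_{\text{old}}>0$ says that strictly fewer than a $\tau$-fraction of the entries are $\le 0$, i.e. $|\{i:x_i\le 0\}|<\tau n$. Adding $\alpha>0$ leaves the count of nonpositive entries unchanged while the sample size grows to $n+1$, so $|\{i:x'_i\le 0\}|<\tau n\le \tau(n+1)$. This is exactly the condition $y_{\text{new}}>0$, and it avoids index bookkeeping altogether. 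For the expectile/subgradient variant, the same sign-count argument applies to the first-order optimality condition evaluated at $0$.
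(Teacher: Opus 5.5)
\textbf{The main argument has a false step; your fallback argument is correct and is the paper's proof.}

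Your ``monotonicity step'' is false, so the displayed chain $y_{\text{new}}=x'_{(k')}\ge x'_{(k)}\ge x_{(k)}=y_{\text{old}}$ breaks at its second inequality. Inserting a value can push order statistics \emph{down}. Since $F'(z)\ge F(z)$ for every $z$, you get $x'_{(j)}=\min\{z:F'(z)\ge j\}\le x_{(j)}$. The correct interlacing is $x'_{(j)}\le x_{(j)}\le x'_{(j+1)}$. For example, take $x=\{1,2,3\}$, $\tau=1/2$, $\alpha=1/2$. Then $k=k'=2$ and $y_{\text{old}}=x_{(2)}=2$, but $y_{\text{new}}=x'_{(2)}=1$, so $y_{\text{new}}\ge y_{\text{old}}$ fails; positivity still holds.

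A warning sign is that your case $k'\le n$ never uses $\alpha>0$. Yet for $x=\{1\}$, $\tau=1/2$, $\alpha=-1$ one has $k=k'=1\le n$, $y_{\text{old}}=1$ and $y_{\text{new}}=-1$. So no argument of that shape can be right. The order-statistic route can be repaired only by using the sign of $\alpha$. Because $\alpha>0$ leaves the number of nonpositive entries unchanged, $x'_{(j)}>0\iff x_{(j)}>0$ for every $j\le n$. Since $k\le n$, this gives $x'_{(k)}>0$, and $k'\ge k$ then yields $x'_{(k')}\ge x'_{(k)}>0$, with no separate case for $k'=n+1$.

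Your sign-count paragraph is correct and is exactly the paper's proof. The paper reads $y_{\text{old}}>0$ as $F_n(0)=m/n<\tau$, where $m=|\{i:x_i\le 0\}|$. It notes that adding $\alpha>0$ leaves $m$ unchanged, so $F_{n+1}(0)=m/(n+1)\le m/n<\tau$. It concludes $y_{\text{new}}=\inf\{t:F_{n+1}(t)\ge\tau\}>0$; the inequality is strict because the empirical CDF is a right-continuous step function, so the infimum is attained. Make that argument the proof and drop the order-statistic chain.

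One correction to your closing aside. For the squared asymmetric loss $\mathcal{L}_{\text{rev}}^\tau$, the first-order condition at $0$ is the weighted sum $\sum_i|\tau-\mathbb{I}(x_i>0)|\,x_i>0$, not a count. The added term $(1-\tau)\alpha\ge 0$ preserves it, so the expectile version also holds, but by this computation rather than by counting.
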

\begin{proof}
    Let us define the empirical cumulative distribution function (ECDF). For a given sample set $x=\{x_1,x_2,\dots,x_n\}$, the empirical distribution function $F_n$ is defined as
    \begin{equation}
        F_n(t) = \frac{1}{n} \sum_{i=1}^{n} \mathbf{1}_{\{x_i \leq t\}},
    \end{equation}
    where $\mathbf{1}_{\{x_i \leq t\}}$ is the indicator function that equals 1 if $x_i \leq t$ and 0 otherwise.

    Since $y_{\text{old}}>0$, then we have
    \begin{equation}
        F_n(0)<\tau.
    \end{equation}
    After adding a positive value $\alpha>0$ to $x$, the number of sample points in $(-\infty,0]$ remains unchanged. Let $m$ denote the original number of samples satisfying $x_i\leq 0$, then
    \begin{equation}
        F_n(0)=\frac{m}{n}<\tau.
    \end{equation}
    In the new sample set, we have
    \begin{equation}
        F_{n+1}(0) = \frac{m}{n+1}\leq \frac{m}{n}<\tau.
    \end{equation}
    Therefore, we have
    \begin{equation}
        y_{\text{new}}=\inf\{t:F_{n+1}(t)\geq \tau\}>0.
    \end{equation}
\end{proof}

\begin{lemma}
\label{lm4}
    Let $x=\{x_1,x_2,\dots,x_n\}$, and let $y_{\text{old}}$ denote the $\tau$-quantile of $x$. 
    Suppose we change any $x_t>0$ to another positive value $x'_t>0$, and let the resulting $\tau$-quantile be denoted by $y_{\text{new}}$. Then, if $y_{\text{old}}>0$, it necessarily follows that $y_{\text{new}}>0$.
\end{lemma}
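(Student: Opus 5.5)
I would follow the template of the proof of Lemma~\ref{lm3} and argue entirely through the empirical cumulative distribution function $F_n(t)=\frac{1}{n}\sum_{i=1}^n \mathbf{1}_{\{x_i\leq t\}}$. As there, I use the definition of the $\tau$-quantile as $y=\inf\{t:F_n(t)\geq\tau\}$. The first step is to restate the hypothesis $y_{\text{old}}>0$ as a condition on how much mass lies in $(-\infty,0]$. Since $F_n$ is right-continuous and nondecreasing, $y_{\text{old}}>0$ is equivalent to $F_n(0)<\tau$. Writing $m=|\{i:x_i\leq 0\}|$, this means $m/n<\tau$.

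The second step is to track how this count changes under the modification. We replace $x_t>0$ by $x'_t>0$, and neither the old nor the new value lies in $(-\infty,0]$. All other samples are untouched, so the number of samples in $(-\infty,0]$ is still $m$. The sample size is still $n$. Hence the new ECDF satisfies $F'_n(0)=m/n=F_n(0)<\tau$.

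The final step converts this back into a statement about the quantile. Because $F'_n$ is nondecreasing, for every $t\leq 0$ we have $F'_n(t)\leq F'_n(0)<\tau$. So no $t\leq 0$ belongs to the set $\{t:F'_n(t)\geq\tau\}$, and its infimum is at least $0$. To get strict positivity I use right-continuity of the ECDF, which is a step function with finitely many jumps. If the infimum were exactly $0$, right-continuity would give $F'_n(0)\geq\tau$, a contradiction. Hence $y_{\text{new}}>0$.

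The only delicate point is the equivalence $y>0\iff F_n(0)<\tau$, specifically the strictness of the inequality at $t=0$. It relies on right-continuity of $F_n$ and on the infimum-based quantile convention already fixed in Lemma~\ref{lm3}, so I would state that convention explicitly and apply it in both directions. Everything else is a counting argument on the unchanged set $\{i:x_i\leq 0\}$. If a weighted or expectile version is intended (matching $\mathcal{L}^\tau_{\text{rev}}$), I would instead use the sign of the first-order condition at $0$. Raising a positive sample leaves the derivative at $0$ unchanged in sign, or makes it more favorable to a positive minimizer. So the same conclusion follows by monotonicity of the minimizer in each sample.
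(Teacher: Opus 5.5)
Your proof is correct and follows the paper's argument: both note that swapping one positive sample for another leaves the number of samples in $(-\infty,0]$ unchanged, so $F'_n(0)=F_n(0)<\tau$. Your use of monotonicity and right-continuity of the ECDF only makes explicit the final step $\inf\{t:F'_n(t)\geq\tau\}>0$, which the paper asserts directly.

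Drop the closing expectile remark, though. The lemma also allows \emph{lowering} $x_t$, and the expectile analogue then fails: for $\tau=1/2$ the expectile is the mean, which is positive for $\{-1,10\}$ but negative for $\{-1,0.1\}$.
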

\begin{proof}
    Similarly like above, since $y_{\text{old}}>0$, then we have
    \begin{equation}
        F_n(0)<\tau.
    \end{equation}
    After changing a positive value $x_t>0$ to $x'_t>0$, the number of sample points in $(-\infty,0]$ remains unchanged. Let $m$ denote the original number of samples satisfying $x_i\leq 0$, then
    \begin{equation}
        F_n(0)=\frac{m}{n}<\tau.
    \end{equation}
    In the new sample set, we have
    \begin{equation}
        F'_{n}(0) = \frac{m}{n}=F_n(0)<\tau.
    \end{equation}
    Therefore, we have
    \begin{equation}
        y_{\text{new}}=\inf\{t:F'_{n}(t)\geq \tau\}>0.
    \end{equation}
\end{proof}
\begin{proof}
Let's begin the proof of \cref{th_3}.
Without loss of generality, let $V_{r,\text{orgin}}, V_{h,\text{orgin}}$ denote the reward and feasible value functions obtained without \MethodName, respectively, and let $V_{r,\text{\MethodName}}, V_{h,\text{\MethodName}}$ denote the corresponding value functions obtained with \MethodName.
First, the reward value update rule remains unchanged after incorporating \MethodName. Therefore, $\forall s_t$, we have
\begin{equation}
    V_{r,\text{\MethodName}}(s_t)=V_{r,\text{orgin}}(s_t).
\end{equation}
Next, since $V_h$ is obtained via expectile regression, it effectively fits a quantile of the critic function. For any $s_t$ in $\mathcal{D}$, if none of the future states along its trajectory produces an unsafe branched rollout, then $V_h(s_t)$ is not affected by \MethodName, $V_{h,\text{\MethodName}}(s_t)=V_{h,\text{orgin}}(s_t)$.
Otherwise, suppose that at some future state $s_{t+k}$, an unsafe branched rollout occurs, whose first action is denoted by $a'$. Analogous to the derivation in Lemma \ref{lm 1}, as long as the discount factor $\gamma$ is sufficiently large, we can ensure that $Q_h(s_{t+k},a')>0$. In this case, the quantile regression for $V_h(s_{t+k})$ can be viewed as augmenting its fitting dataset with an additional value greater than zero. We now consider two possible cases
\begin{itemize}
    \item 
    When $V_{h,\text{orgin}}(s_{t+k})<=0$, the inclusion of $Q_h(s_{t+k},a')$ clearly increases the fitted value of $V_h(s_{t+k})$. By propagating this effect backward from time step $t+k$, we obtain
    \begin{equation}
        V_{h,\text{\MethodName}}(s_t)\geq V_{h,\text{orgin}}(s_t).
    \end{equation}
    \item 
    When $V_{h,\text{orgin}}(s_{t+k})>0$, we first apply Lemma~\ref{lm3}, which guarantees that incorporating $Q_h(s_{t+k},a')$ still preserves $V_h(s_{t+k})>0$. Next, we propagate this effect backward from time step $t+k$. If there exists some $t'\in [t,t+k)$, such that $V_{h,\text{orgin}}(s_{t'})<=0$, then the above modification does not affect the safety value in timestep $t'$. This is because, for a dataset whose fitted quantile is less than or equal to zero, replacing one positive value with another positive value does not change the resulting quantile. Consequently, further backward propagation from $t'$ to $t$ also has no effect, yielding
    \begin{equation}
        V_{h,\text{\MethodName}}(s_t)= V_{h,\text{orgin}}(s_t).
    \end{equation}
    Otherwise, if there exists no $t'\in [t,t+k)$ such that $V_{h,\text{orgin}}(s_{t'})<=0$, then during the backward propagation from $t+k$ to $t$, we can repeatedly apply Lemma~\ref{lm4}, which yields
    \begin{equation}
        V_{h,\text{\MethodName}}(s_t)>0.
    \end{equation}
\end{itemize}
Therefore, for all $s_t \in \mathcal{D}$, we have
\begin{equation}
    V_{h,\text{\MethodName}}(s_t)\geq V_{h,\text{orgin}}(s_t)\ \text{or}\ V_{h,\text{\MethodName}}(s_t)>0.
\end{equation}
Taken together, and noting that policy updates rely solely on the original offline dataset $\mathcal{D}$, incorporating \MethodName into safe policy learning neither affects the reward value estimation of the policy nor degrades safety performance due to underestimation of the safety value, compared to learning without \MethodName.
\end{proof}
\textbf{Remark.} 
Theorem~\ref{th_3} indicates that \MethodName can be integrated with existing offline safe reinforcement learning algorithms without making any assumptions about the number of unsafe samples in the dataset. Moreover, regardless of the magnitude of the dynamics model error, such integration does not lead to degraded policy safety due to the underestimation of the feasible value function.
The accuracy of the dynamics model primarily influences the types of samples that are conservatively evaluated in real time. When the dynamics model is sufficiently accurate, the samples for which \MethodName induces more conservative decisions tend to correspond to genuinely infeasible or risky states. In contrast, when the dynamics model is less accurate, the method may behave conservatively on some feasible samples, resulting in overly conservative policies and degraded reward performance, or may fail to apply conservative estimates on infeasible samples, potentially affecting safety performance. Nevertheless, even in cases where safety performance is impacted, the preceding analysis guarantees that the policy will not perform worse in terms of safety than a policy learned without \MethodName.

\section{More Related Work}
\label{more related work}
\paragraph{Offline RL.}
Offline RL trains policies using pre-collected datasets, avoiding real-world trial and error, which is critical for deploying RL in practical settings. Its primary challenge is addressing the extrapolation error~\citep{prudencio2023survey}. 
Methods such as CQL~\citep{kumar2020conservative} and ICQ~\citep{yang2021believe} penalize the value function of unseen actions in order to constrain actions to those observed in the offline dataset, while BCQ~\citep{fujimoto2019off} explicitly restricts candidate actions to remain close to the offline data distribution by leveraging a CVAE. Furthermore, approaches like IQL~\citep{kostrikovoffline} and DT~\citep{chen2021decision} adopt BC-based supervised learning to fully confine the policy distribution within the offline dataset distribution, thereby avoiding the effect of extrapolation error.
Others, such as MOReL~\citep{morel}, MOPO~\citep{mopo}, and MOBILE~\citep{sun2023model}, learn the environment models from the offline data and utilize these models with uncertainty estimates to avoid OOD regions with low model accuracy.

\paragraph{Generative model assisted RL.}
Driven by the powerful capacity of generative models to capture multimodal distributions, as well as their rapid progress in domains like natural language processing, recent years have witnessed a growing interest in applying such models to reinforcement learning~\citep{li2025generative}.
Apart from the aforementioned applications, LLMs can also serve other roles in RL, such as information representation and processing~\citep{paischer2022history}, action space compression~\citep{yanefficient}, world modeling~\citep{ge2024worldgpt}, and multi-agent task allocation~\citep{kannan2024smart}.
In addition to LLMs, other generative models are gaining growing traction in RL. Diffusion models, in particular, have emerged as the most widely applied due to their strong expressive power. They have been utilized for planning~\citep{janner2022planning,ajayconditional}, for offline policy learning~\citep{wangdiffusion,chi2023diffusion,dingconsistency}, and more recently for online policy learning~\citep{ding2024diffusion,celikdime,ma2025efficient}.
Flow matching, due to its simpler generative process compared to diffusion models, has also recently attracted increasing attention in RL~\citep{park2025flow,espinosa2025scaling,lv2025flow}.


\section{Implementation Details}
\label{imp detail}
In this section, we will offer the implementation details of \MethodName.
\subsection{Model Learning}
We begin with model learning using the safe-only dataset $\mathcal{D}$. Specifically, each model $T\in\hat{T}$ is formulated as a Gaussian distribution over the next state given the current state and action:
\begin{equation}
    T(s'|s,a)=\mathcal{N}(\mu(s,a),\Sigma(s,a)).
\end{equation}
Then, the ensemble dynamics model is optimized by
\begin{equation}
\label{model_update}
    \max_{\hat{T}}\mathbb{E}_{(s,a,s')\sim\mathcal{D}}[\log \hat{T}(s'|s,a)].
\end{equation}
In training, seven candidate models $T$ are trained to construct the ensemble dynamics model $\hat{T}$, and the best 5 models are picked based on the validation prediction error on a held-out set that contains 20\% samples in $\mathcal{D}$. During model rollout, we randomly pick one dynamics model from the best 5 models to obtain $s'$.
For the cost label, given the cost function $\bar{c}$, to mitigate the impact of model uncertainty and obtain a more conservative estimate, we label a sample as $1$ if any $s'$ of the five best models' outputs satisfies $\bar{c}(s')=1$.

\subsection{Conservative Cost Function Generation}
The key to LLM-based cost function generation is the formulation of the prompt input. Accordingly, this section details the prompt design employed in our approach.

First, we present the overall instruction used when initially employing the LLM for cost function generation
\begin{promptbox}{Instruction}
You are an expert python coder. Please help me write some functions to evaluate whether a given observation is safe or not. The function should return 0 if the observation is safe, and 1 otherwise. The function should be called get\_cost(observation). 

The \textbf{cost description} is: \{Cost Description\}.

The \textbf{observation description} is: \{Observation Description\}.

Now, please give the function get\_cost, you should be a little \textbf{more conservative}, some situations that do not violate safety constraints but are close to doing so can also be considered unsafe.
\end{promptbox}
Subsequently, we present two examples to demonstrate the Cost Description, Observation Description within the instruction above, and Feedback provided for the generated cost function.

\paragraph{Push2}
First, the Cost Description for the Push2 task is composed as follows:
\begin{promptbox}{Push2 Cost Description}
If you are within 0.3 meters of hazard, it is unsafe. If you contact with a pillar (being very close to the pillar), it is unsafe.
\end{promptbox}
The corresponding Observation Description is as follows:
\begin{promptbox}{Push2 Observation Description}
The observation contains some agent specific dimensions plus 4 radar vetors, the last 16 dimensions is the push box radar, the 16 dimensions before push box radar is the pillar radar, the 16 dimensions before pillar radar is the hazard radar. Each radar has 16 dimensions, meaning taking the robot's forward direction as 0 degrees, determine whether a detected target is present at every 22.5-degree interval around it. However, there is no information about robot's forward direction in the observation. The max radar detection range is 3 meters. If in a given direction, the detected target is d meters away, and d$\leq$3, then the radar will give the number as (3-d)/3. Meanwhile, since the observation lacks a collision sensor, contact with a target does not result in a radar reading of zero distance. This discrepancy is due to the physical volume of the target, whose radius generally does not exceed 0.3 meters.
\end{promptbox}
After incorporating the above content into the instruction and inputting it into the LLM (GPT o4-mini), it first generated the following cost function:
\begin{promptbox}{Push2 Output Function1}
\begin{Verbatim}[fontsize=\small, frame=single]
def get_cost(observation):
    """
    Returns 0 if the observation is safe, 1 if unsafe.

    Parameters
    ----------
    observation : array‐like, length >= 48
        The last 48 entries are three 16‐dimensional radars:
          - hazard radar:  obs[-48:-32]
          - pillar radar:  obs[-32:-16]
          - push‐box radar: obs[-16:]
        Radar values r correspond to distances d = 3*(1 - r).

    Safety rules (conservative):
      1. Any hazard reading r >= 0.90  →  unsafe
      2. Any pillar reading r >= 0.85  →  unsafe

    Returns
    -------
    int
        0 if safe, 1 if unsafe.
    """
    obs = np.asarray(observation)
    if obs.ndim != 1 or obs.size < 48:
        raise ValueError("Observation must be a 1D array with length >= 48.")

    # extract the 16‐dim hazard and pillar readings
    hazard_r = obs[-48:-32]
    pillar_r = obs[-32:-16]

    # thresholds
    HAZARD_THRESH = 0.90  # exact 0.3 m
    PILLAR_THRESH = 0.85  # small margin beyond contact

    # check if any reading exceeds threshold
    too_close_to_hazard = np.any(hazard_r >= HAZARD_THRESH)
    contact_with_pillar = np.any(pillar_r >= PILLAR_THRESH)

    return 1 if (too_close_to_hazard or contact_with_pillar) else 0
\end{Verbatim}
\end{promptbox}
However, upon evaluation, this cost function labeled 2\% of the samples in $\mathcal{D}$ as unsafe, which does not meet our conservative requirement of labeling samples within $[p_{\text{min}}, p_{\text{max}}]$ as unsafe, where $p_{\text{min}}=10\%$ and $p_{\text{max}}=30\%$ in practice.
Therefore, we provided the following feedback:
\begin{promptbox}{Push2 Feedback1}
For safe samples, this function classifies 2\% of them as unsafe. We want to classify 10\%-30\% safe samples as unsafe. It should be a little more conservative.
\end{promptbox}
Based on the above feedback, the LLM made corresponding modifications to the output function
\begin{promptbox}{Push2 Output Function2}
\begin{Verbatim}[fontsize=\small, frame=single]
def get_cost(observation):
    """
    Returns 0 if the observation is safe, 1 if unsafe (conservatively).

    Parameters
    ----------
    observation : array‐like, length >= 48
        The last 48 entries are:
          - hazard radar:  obs[-48:-32]
          - pillar radar:  obs[-32:-16]
          - push‐box radar: obs[-16:]

    Conservative thresholds:
      - Hazard: r >= 0.80 → d <= 0.60 m (unsafe)
      - Pillar: r >= 0.75 → d <= 0.75 m (unsafe)

    Returns
    -------
    int
        0 if safe, 1 if unsafe.
    """
    obs = np.asarray(observation)
    if obs.ndim != 1 or obs.size < 48:
        raise ValueError("Observation must be a 1D array with length >= 48.")

    hazard_r = obs[-48:-32]
    pillar_r = obs[-32:-16]

    HAZARD_THRESH = 0.80  # ~0.60 m
    PILLAR_THRESH = 0.75  # ~0.75 m

    too_close_to_hazard = np.any(hazard_r >= HAZARD_THRESH)
    close_to_pillar = np.any(pillar_r >= PILLAR_THRESH)

    return 1 if (too_close_to_hazard or close_to_pillar) else 0
\end{Verbatim}
\end{promptbox}
Nevertheless, the revised cost function proved excessively conservative, labeling 40\% of safe samples as unsafe. Consequently, we supplied the LLM with additional feedback:
\begin{promptbox}{Push2 Feedback2}
 For safe samples, this function classifies 40\% of them as unsafe. We want to classify 10\%-30\% safe samples as unsafe. It is too conservative.
\end{promptbox}
Finally, the LLM adjusted the cost function once again, yielding:
\begin{promptbox}{Push2 Output Function3}
\begin{Verbatim}[fontsize=\small, frame=single]
def get_cost(observation):
    """
    Returns 0 if the observation is safe, 1 if unsafe (moderately conservative).

    Parameters
    ----------
    observation : array‐like, length >= 48
        The last 48 entries are:
          - hazard radar:  obs[-48:-32]
          - pillar radar:  obs[-32:-16]
          - push‐box radar: obs[-16:]

    Thresholds tuned for 10–30% false positives on safe samples:
      - Hazard: r >= 0.85 → d <= 0.45 m (unsafe)
      - Pillar: r >= 0.80 → d <= 0.60 m (unsafe)

    Returns
    -------
    int
        0 if safe, 1 if unsafe.
    """
    obs = np.asarray(observation)
    if obs.ndim != 1 or obs.size < 48:
        raise ValueError("Observation must be a 1D array with length >= 48.")

    hazard_r = obs[-48:-32]
    pillar_r = obs[-32:-16]

    HAZARD_THRESH = 0.85  # ~0.45 m
    PILLAR_THRESH = 0.80  # ~0.60 m

    too_close_to_hazard = np.any(hazard_r >= HAZARD_THRESH)
    close_to_pillar = np.any(pillar_r >= PILLAR_THRESH)

    return 1 if (too_close_to_hazard or close_to_pillar) else 0
\end{Verbatim}
\end{promptbox}

\paragraph{Walker2dVelocity}
In MuJoCo Velocity tasks, the agent is deemed unsafe once its instantaneous velocity surpasses a predefined threshold. Nevertheless, the velocity included in the agent’s observation reflects only the terminal velocity at the end of a step, which may deviate from the velocity during step execution. Consequently, an LLM-generated cost function may align well with the textual description, or even adopt a more conservative stance, but still fail to achieve perfect classification accuracy on the 100 samples in $\mathcal{D}_{\text{unsafe}}$. This example illustrates such a case.
First, the Cost Description for the Walker2dVelocity task is composed as follows:
\begin{promptbox}{Walker2dVelocity Cost Description}
If the velocity of the robot exeeds 2.3415, it is unsafe. When calculating the velocity, we only consider x-coordinate and y-coordinate (if exists).
\end{promptbox}
The corresponding Observation Description is as follows:
\begin{promptbox}{Walker2dVelocity Observation Description}
The observation contains 17 dimensions. The first is z-coordinate of the front tip; the second is angle of the torso; the third is angle of the thigh joint; the forth is angle of the leg joint; the fifth is angle of the foot joint; the sixth is angle of the left thigh joint; the seventh is angle of the left leg joint; the eighth is angle of the left foot joint; the nineth is velocity of the x-coordinate of the torso; the tenth is velocity of the z-coordinate (height) of the torso; the eleventh is angular velocity of the angle of the torso; the twelveth is angular velocity of the thigh hinge; the thirteenth is angular velocity of the leg hinge; the forteenth is angular velocity of the foot hinge; the fifteenth is angular velocity of the thigh hinge; the sixteenth is angular velocity of the leg hinge; the seventeenth is angular velocity of the foot hinge.
\end{promptbox}
After incorporating the above content into the instruction and inputting it into the LLM, it first generated the following cost function:
\begin{promptbox}{Walker2dVelocity Output Function1}
\begin{Verbatim}[fontsize=\small, frame=single]
def get_cost(observation, max_safe_vel=2.3415, safety_margin=0.05):
    """
    Return 0 if the observation is safe, 1 otherwise.

    We consider only the torso's x-velocity (observation[8]).
    Any |velocity| >= (1 - safety_margin) * max_safe_vel is deemed unsafe.

    Parameters
    ----------
    observation : sequence of float, length >= 9
        The 17-dimensional observation vector.
    max_safe_vel : float
        The hard safety limit on velocity.
    safety_margin : float
        Fractional margin to be conservative (e.g. 0.05 means 5% below max_safe_vel).

    Returns
    -------
    int
        0 if safe, 1 if unsafe.
    """
    # pick out the x-velocity of the torso
    v_x = observation[8]

    # define the conservative threshold
    threshold = (1.0 - safety_margin) * max_safe_vel

    # unsafe if we're at or above the threshold
    return 1 if abs(v_x) >= threshold else 0
\end{Verbatim}
\end{promptbox}
However, the accuracy of this function on $\mathcal{D}_{\text{unsafe}}$ is only 98\%. Therefore, we provided feedback to the LLM:
\begin{promptbox}{Walker2dVelocity Feedback1}
For 100 unsafe testing samples, this function achieves 98\% accuracy, it should be a little more conservative.
\end{promptbox}
Based on the above feedback, the LLM made corresponding modifications to the output function
\begin{promptbox}{Walker2dVelocity Output Function2}
\begin{Verbatim}[fontsize=\small, frame=single]
def get_cost(observation, max_safe_vel=2.3415, safety_margin=0.10):
    """
    Return 0 if the observation is safe, 1 otherwise.

    Unsafe if torso's x-velocity exceeds (1 - safety_margin) * max_safe_vel.
    This version is more conservative, using a 10% margin.

    Parameters
    ----------
    observation : sequence of float, length >= 9
        The 17-dimensional observation vector.
    max_safe_vel : float
        The hard velocity safety threshold.
    safety_margin : float
        A conservative margin to catch near-threshold values.

    Returns
    -------
    int
        0 if safe, 1 if unsafe.
    """
    v_x = observation[8]
    threshold = (1.0 - safety_margin) * max_safe_vel
    return 1 if abs(v_x) >= threshold else 0
\end{Verbatim}
\end{promptbox}
The accuracy of this function on $\mathcal{D}_{\text{unsafe}}$ remains at only 99\%. Therefore, we provided additional feedback
\begin{promptbox}{Walker2dVelocity Feedback2}
 For 100 unsafe testing samples, this function achieves 99\% accuracy, it should be a little more conservative.
\end{promptbox}
Finally, the LLM adjusted the cost function once again, yielding:
\begin{promptbox}{Walker2dVelocity Output Function3}
\begin{Verbatim}[fontsize=\small, frame=single]
def get_cost(observation, max_safe_vel=2.3415, safety_margin=0.15:
    """
    Return 0 if the observation is safe, 1 otherwise.

    Unsafe if torso’s x-velocity magnitude exceeds
    (1 - safety_margin) * max_safe_vel.

    This default uses a 15% margin (i.e. flags |v_x| >= ~1.9903).

    Parameters
    ----------
    observation : sequence of float, len >= 9
        The 17-dimensional observation vector.
    max_safe_vel : float
        The hard velocity safety limit.
    safety_margin : float
        Fractional buffer under max_safe_vel (default 0.15).

    Returns
    -------
    int
        0 if safe, 1 if unsafe.
    """
    # torso x-velocity is at index 8
    v_x = observation[8]

    # build a conservative threshold
    threshold = (1.0 - safety_margin) * max_safe_vel

    return 1 if abs(v_x) >= threshold else 0
\end{Verbatim}
\end{promptbox}

\subsection{Model Rollout}
During model rollout, we follow the rollout procedure used in works such as MOPO, performing a rollout after every $K$ policy gradient updates. For each rollout, a batch of size $b$ is sampled from $\mathcal{D}$, and each data point in the batch is rolled out for $H$ steps. The process of sampling and branch rollout is repeated $E$ times.
In practice, we set $K=250000$, $b=50000$, $H=1$, and $E=10$.

        
\subsection{Hyperparameters}
The training of \MethodName involves the selection of hyperparameters. To ensure reproducibility, this section outlines the specific hyperparameters used in our experiments, as shown in \cref{hyper}.
Specifically, for the model rollout frequency, we set it to $2.5e4$ for LSPC and CAPS, and to $2.5e5$ for FISOR.
\MethodName is implemented based on OSRL~\footnote{\href{https://github.com/liuzuxin/OSRL}{https://github.com/liuzuxin/OSRL}}, FISOR~\footnote{\href{https://github.com/ZhengYinan-AIR/FISOR}{https://github.com/ZhengYinan-AIR/FISOR}}, LSPC~\footnote{\href{https://github.com/PrajwalKoirala/LSPC-Safe-Offline-RL}{https://github.com/PrajwalKoirala/LSPC-Safe-Offline-RL}}, and CAPS~\footnote{\href{https://github.com/yassineCh/CAPS}{https://github.com/yassineCh/CAPS}} code bases, and the default parameters are retained except for FISOR, we set the expectile $\tau$ to 0.9 in Navigation tasks, and to $0.95$ in the Velocity tasks. Also, we set the model training step to $2e6$ in FISOR.

\begin{table}[ht]
\small
\renewcommand{\arraystretch}{1.2}
    \centering
    \caption{Hyperparameter choices of \MethodName.}
    \label{hyper}
    \begin{tabular}{c|c|c}
        \toprule
        \multicolumn{1}{c|}{} &
        Hyperparameter  & Value \\
        \midrule
        \multirow{6}{*}{model learning} &
        model hidden layers  & $[256,256,256,256]$ \\ &
        layer weight decays & $[2.5e-5,5e-5,7.5e-5,7.5e-5,1e-4]$\\ &
        model ensemble number & $7$\\ &
        model elites number & $5$\\ &
        batch size & $512$\\ &
        learning rate & $0.001$\\
        \midrule
         \multirow{3}{*}{cost function generation} &
        $p_{\text{min}}$  & $10\%$ \\ &
        $p_{\text{max}}$  & $30\%$\\ &
        max LLM query number & $10$\\
        \midrule
        \multirow{5}{*}{policy learning} &
        model rollout frequency & $\{2.5e4, 2.5e5\}$\\ &
        model rollout batch size & $50000$\\ &
        model rollout length & $1$\\ &
        model rollout epoch & $10$\\ &
        model rollout std & $0.1$\\
        \bottomrule
    \end{tabular}
\end{table}

\section{Detailed Description of the Tasks and Baselines}
\label{detail task base}
\subsection{Tasks}
\begin{figure}[t]
    \centering
    \includegraphics[width=0.99\textwidth]{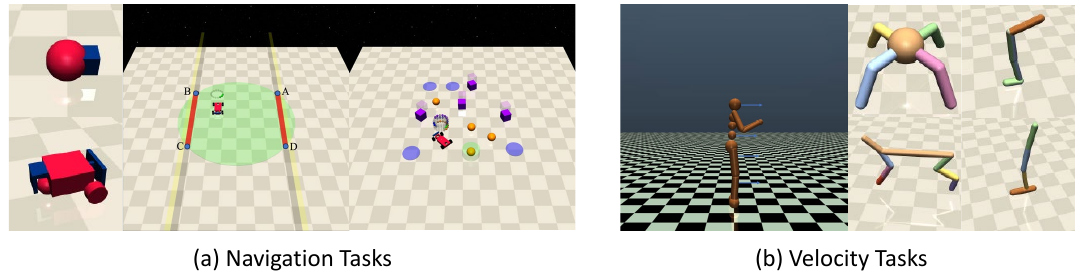}
    \caption{Tasks used in this paper. (a) Navigation Tasks based on Point and Car robots. (b) Velocity Tasks based on Ant, HalfCheetah, Hopper, Walker2d, and Swimmer robots.}
    \label{tasks}
\end{figure}

All tasks used in this paper are derived from the Safety-Gymnasium's Navigation Tasks and Velocity Tasks. In the Navigation Tasks, there are two different types of robots: Point and Car, which we need to control to navigate through the environment and earn rewards by reaching target points, pressing the correct buttons, or moving in designated directions. Different tasks also have varying costs, such as avoiding collisions with specific targets, preventing incorrect button presses, and staying within designated boundaries.

The Velocity Tasks are built on traditional MuJoCo~\citep{todorov2012mujoco} simulations, requiring robots such as Ant, HalfCheetah, Swimmer, and Walker2d to move, where higher speeds result in higher rewards. However, each robot has specific safety velocity thresholds for different tasks, and exceeding these thresholds leads to unsafe states. For detailed descriptions of each task, refer to the original Safety-Gymnasium paper~\citep{ji2023safety}. 

\subsection{Baselines}

We provide a more detailed introduction to the baselines in this section.
\begin{itemize}
    \item \textbf{CDT} is the previous advanced algorithm under sequence modeling for offline safe RL. After incorporating CTG into DT, CDT seeks to address the conflict between safety constraints and reward maximization. To tackle this issue, CDT proposes a data augmentation approach, where reward returns for certain safe but low-reward trajectories in the offline dataset are re-labeled with higher values. 
    \item \textbf{CCAC} is one of the advanced algorithms for soft-constraint offline safe RL. It employs conservative estimation to obtain pessimistic value estimates and incorporates the constraint condition into the policy input, enabling the policy to handle multiple constraint thresholds simultaneously.
    \item \textbf{FISOR} is one of the advanced offline safe RL algorithm based on hard-constraint modeling. Similar to other algorithms under hard-constraints~\cite{yu2022reachability, ganai2023iterative}, it first divides the state space into feasible and infeasible regions. It then uses the IQL~\cite{kostrikovoffline} algorithm to learn feasible value functions for offline feasible region identification. Next, FISOR sets distinct learning objectives for the feasible and infeasible regions. In the feasible region, it aims to maximize the reward while ensuring feasibility, while in the infeasible region, it focuses on minimizing constraint violations. Finally, FISOR employs a diffusion model to represent and learn the policy.
    \item \textbf{LSPC}-O, similar to FISOR, performs hard-constraint safe policy learning. It first trains a CVAE using the feasibility advantage as a weighting factor, such that the CVAE generates only safe actions. Subsequently, optimization is conducted in the latent space of the CVAE with the objective of maximizing reward, aiming to achieve a balance between reward optimization and safety preservation.
    \item \textbf{CAPS}(IQL) is also an advanced algorithm under the soft-constraint setting. Unlike methods that explicitly incorporate constraint thresholds during training, CAPS does not use the constraint threshold in its training process. Instead, similar to FISOR, it adopts IQL-style expectile regression for value function learning and employs weighted behavior cloning for policy learning. To enable decision-making under different constraint thresholds at deployment time, CAPS utilizes a multi-head policy that outputs multiple candidate actions, from which an action satisfying the specified constraint threshold and achieving high reward is selected.
\end{itemize}

\section{More Experimental Results}
\label{more results}
\subsection{Time Complexity Analysis}
To better understand the practical utility of \MethodName, we evaluate its training time overhead in this section. Using the CarButton environment as an example, we report the training time measured on a single NVIDIA 4090 GPU under an otherwise idle setting (i.e., with no additional CPU/GPU workload), as summarized in \cref{time}.

\begin{table}[ht]
\small
\renewcommand{\arraystretch}{1.2}
    \centering
    \caption{Time complexity analysis.}
    \label{time}
    \resizebox{\textwidth}{!}{
\begin{tabular}{c|c|c|c}
\hline
Dynamics Learning & \makecell{Maximum LLM Generation\\(reflection for full 10 rounds)} & \makecell{Total Dynamics Rollout\\(including rollout generation and\\ labeling over all 10 rollout epochs)} & Policy Training \\ \hline
0.58h & 0.20h & 0.03h & 1.88h \\ \hline
\end{tabular}
    }
\end{table}

\subsection{Sensitivity on LLM Reflection Rounds}
In this section, we investigate the impact of different maximum LLM reflection rounds on policy performance. The results are presented in \cref{round_sens}.
As shown, increasing the maximum number of LLM reflection rounds leads to a notable improvement in the safety performance of the policy, further highlighting the importance of the reflection mechanism in enhancing the reliability of LLMs.
\begin{table}[h!]
\small
\renewcommand{\arraystretch}{1.3}
    \centering
    \caption{Sensitive analysis on different LLM reflection rounds.}
    \label{round_sens}
\begin{tabular}{c|cccccccc}
\hline
                       & \multicolumn{2}{c}{0}                                      & \multicolumn{2}{c}{1}                                      & \multicolumn{2}{c}{5}                                      & \multicolumn{2}{c}{10}                                    \\
\multirow{-2}{*}{Task} & r↑                           & c↓                          & r↑                           & c↓                          & r↑                           & c↓                          & r↑                          & c↓                          \\ \hline
PointButton1           & {\color[HTML]{9B9B9B} 0.06}  & {\color[HTML]{9B9B9B} 3.40} & {\color[HTML]{9B9B9B} 0.06}  & {\color[HTML]{9B9B9B} 2.87} & {\color[HTML]{9B9B9B} 0.02}  & {\color[HTML]{9B9B9B} 1.46} & {\color[HTML]{9B9B9B} 0.01} & {\color[HTML]{9B9B9B} 1.15} \\
PointButton2           & {\color[HTML]{9B9B9B} 0.07}  & {\color[HTML]{9B9B9B} 4.55} & {\color[HTML]{9B9B9B} 0.07}  & {\color[HTML]{9B9B9B} 3.39} & {\color[HTML]{9B9B9B} 0.05}  & {\color[HTML]{9B9B9B} 2.11} & {\color[HTML]{9B9B9B} 0.08} & {\color[HTML]{9B9B9B} 2.76} \\
PointGoal1             & \textbf{0.34}                & \textbf{0.86}               & {\color[HTML]{9B9B9B} 0.50}  & {\color[HTML]{9B9B9B} 1.68} & {\color[HTML]{9B9B9B} 0.43}  & {\color[HTML]{9B9B9B} 1.35} & \textbf{0.35}               & \textbf{0.96}               \\
PointGoal2             & \textbf{0.08}                & \textbf{0.42}               & \textbf{0.08}                & \textbf{0.52}               & \textbf{0.09}                & \textbf{0.61}               & \textbf{0.08}               & \textbf{0.38}               \\
PointPush1             & {\color[HTML]{9B9B9B} 0.20}  & {\color[HTML]{9B9B9B} 2.17} & \textbf{0.20}                & \textbf{0.53}               & \textbf{0.14}                & \textbf{0.53}               & \textbf{0.17}               & \textbf{0.86}               \\
PointPush2             & {\color[HTML]{9B9B9B} 0.07}  & {\color[HTML]{9B9B9B} 1.22} & {\color[HTML]{9B9B9B} 0.12}  & {\color[HTML]{9B9B9B} 1.47} & {\color[HTML]{9B9B9B} 0.07}  & {\color[HTML]{9B9B9B} 1.48} & \textbf{0.12}               & \textbf{0.30}               \\
CarButton1             & {\color[HTML]{9B9B9B} -0.01} & {\color[HTML]{9B9B9B} 2.57} & {\color[HTML]{9B9B9B} -0.01} & {\color[HTML]{9B9B9B} 2.54} & \textbf{-0.05}               & \textbf{0.64}               & \textbf{-0.03}              & \textbf{0.60}               \\
CarButton2             & {\color[HTML]{9B9B9B} 0.00}  & {\color[HTML]{9B9B9B} 3.21} & {\color[HTML]{9B9B9B} -0.01} & {\color[HTML]{9B9B9B} 3.94} & {\color[HTML]{9B9B9B} -0.03} & {\color[HTML]{9B9B9B} 1.26} & \textbf{0.00}               & \textbf{0.99}               \\
CarGoal1               & \textbf{0.16}                & \textbf{0.16}               & \textbf{0.12}                & \textbf{0.03}               & \textbf{0.15}                & \textbf{0.11}               & \textbf{0.13}               & \textbf{0.02}               \\
CarGoal2               & \textbf{0.04}                & \textbf{0.80}               & \textbf{0.06}                & \textbf{0.36}               & \textbf{0.02}                & \textbf{0.37}               & \textbf{0.03}               & \textbf{0.52}               \\
CarPush1               & {\color[HTML]{9B9B9B} 0.11}  & {\color[HTML]{9B9B9B} 1.86} & {\color[HTML]{9B9B9B} 0.16}  & {\color[HTML]{9B9B9B} 1.41} & \textbf{0.16}                & \textbf{0.81}               & \textbf{0.16}               & \textbf{0.93}               \\
CarPush2               & {\color[HTML]{9B9B9B} 0.03}  & {\color[HTML]{9B9B9B} 1.19} & {\color[HTML]{9B9B9B} 0.05}  & {\color[HTML]{9B9B9B} 2.64} & {\color[HTML]{9B9B9B} 0.05}  & {\color[HTML]{9B9B9B} 3.77} & \textbf{0.01}               & \textbf{0.04}               \\
SwimmerVelocityV1      & \textbf{0.00}                & \textbf{0.23}               & \textbf{0.02}                & \textbf{0.00}               & \textbf{0.04}                & \textbf{0.00}               & \textbf{0.02}               & \textbf{0.12}               \\
HopperVelocityV1       & \textbf{0.13}                & \textbf{0.00}               & \textbf{0.09}                & \textbf{0.05}               & \textbf{0.25}                & \textbf{0.00}               & \textbf{0.11}               & \textbf{0.07}               \\
HalfCheetahVelocityV1  & \textbf{0.65}                & \textbf{0.00}               & \textbf{0.41}                & \textbf{0.00}               & \textbf{0.46}                & \textbf{0.00}               & \textbf{0.48}               & \textbf{0.00}               \\
Walker2dVelocityV1     & {\color[HTML]{9B9B9B} 0.18}  & {\color[HTML]{9B9B9B} 2.09} & {\color[HTML]{9B9B9B} 0.17}  & {\color[HTML]{9B9B9B} 1.57} & \textbf{0.13}                & \textbf{0.83}               & \textbf{0.09}               & \textbf{0.58}               \\
AntVelocityV1          & \textbf{0.50}                & \textbf{0.00}               & \textbf{0.55}                & \textbf{0.00}               & \textbf{0.45}                & \textbf{0.00}               & \textbf{0.52}               & \textbf{0.00}               \\ \hline
Average                & {\color[HTML]{9B9B9B} 0.15}  & {\color[HTML]{9B9B9B} 1.45} & {\color[HTML]{9B9B9B} 0.16}  & {\color[HTML]{9B9B9B} 1.35} & \textbf{0.14}                & \textbf{0.90}               & \textbf{0.14}               & \textbf{0.60}               \\ \hline
\end{tabular}
\end{table}

\subsection{Discussion about Prompt Wording}
Prompt wording is indeed a key factor influencing the effectiveness of LLMs. Therefore, in this section, we provide a detailed discussion on prompt design. Our LLM prompt consists of three main components (excluding reflection): the task prompt, the cost prompt, and the instruction prompt. Since the task prompt and cost prompt contain essential information, they are difficult to modify or omit. Consequently, the primary design focus lies in the instruction prompt.

Within the instruction prompt, in addition to specifying the requirement to generate cost signals, the design mainly involves two elements: a conservative prompt (“you should be a little more conservative, situations that do not violate safety constraints but are close to doing so can also be considered unsafe”) and a role-playing prompt (“you are an expert Python coder”). We have already analyzed the effect of the conservative prompt in the ablation study. Here, we further introduce a new baseline, \textit{W/o Role}, to evaluate the necessity of the role-playing prompt. 
As shown in \cref{prompt_sens}, the specially designed conservative prompt, the same as in ablation study, has a noticeable impact on the safety performance of the policy. In contrast, the role-playing prompt leads to only marginal performance differences, indicating that, when the LLM is sufficiently capable, our method is relatively robust to non-informational prompt variations.

\begin{table}[h!]
\small
\renewcommand{\arraystretch}{1.3}
    \centering
    \caption{Sensitive analysis on prompt wording.}
    \label{prompt_sens}
\begin{tabular}{c|cccccc}
\hline
                       & \multicolumn{2}{c}{PROCO}                                 & \multicolumn{2}{c}{W/o Consv.}                             & \multicolumn{2}{c}{W/o Role}                              \\
\multirow{-2}{*}{Task} & r↑                          & c↓                          & r↑                           & c↓                          & r↑                          & c↓                          \\ \hline
PointButton1           & {\color[HTML]{9B9B9B} 0.01} & {\color[HTML]{9B9B9B} 1.15} & {\color[HTML]{9B9B9B} 0.09}  & {\color[HTML]{9B9B9B} 4.01} & {\color[HTML]{9B9B9B} 0.03} & {\color[HTML]{9B9B9B} 1.54} \\
PointButton2           & {\color[HTML]{9B9B9B} 0.08} & {\color[HTML]{9B9B9B} 2.76} & {\color[HTML]{9B9B9B} 0.08}  & {\color[HTML]{9B9B9B} 3.54} & {\color[HTML]{9B9B9B} 0.06} & {\color[HTML]{9B9B9B} 2.01} \\
PointGoal1             & \textbf{0.35}               & \textbf{0.96}               & \textbf{0.37}                & \textbf{0.63}               & {\color[HTML]{9B9B9B} 0.42} & {\color[HTML]{9B9B9B} 1.35} \\
PointGoal2             & \textbf{0.08}               & \textbf{0.38}               & \textbf{0.06}                & \textbf{0.74}               & \textbf{0.06}               & \textbf{0.93}               \\
PointPush1             & \textbf{0.17}               & \textbf{0.86}               & \textbf{0.22}                & \textbf{0.94}               & \textbf{0.18}               & \textbf{0.45}               \\
PointPush2             & \textbf{0.12}               & \textbf{0.30}               & {\color[HTML]{9B9B9B} 0.11}  & {\color[HTML]{9B9B9B} 3.98} & {\color[HTML]{9B9B9B} 0.19} & {\color[HTML]{9B9B9B} 1.79} \\
CarButton1             & \textbf{-0.03}              & \textbf{0.60}               & {\color[HTML]{9B9B9B} -0.04} & {\color[HTML]{9B9B9B} 2.76} & \textbf{-0.01}              & \textbf{0.53}               \\
CarButton2             & \textbf{0.00}               & \textbf{0.99}               & {\color[HTML]{9B9B9B} -0.01} & {\color[HTML]{9B9B9B} 2.40} & \textbf{-0.03}              & \textbf{0.99}               \\
CarGoal1               & \textbf{0.13}               & \textbf{0.02}               & \textbf{0.23}                & \textbf{0.50}               & \textbf{0.11}               & \textbf{0.09}               \\
CarGoal2               & \textbf{0.03}               & \textbf{0.52}               & \textbf{0.03}                & \textbf{0.39}               & \textbf{0.04}               & \textbf{0.14}               \\
CarPush1               & \textbf{0.16}               & \textbf{0.93}               & \textbf{0.22}                & \textbf{0.45}               & \textbf{0.13}               & \textbf{0.78}               \\
CarPush2               & \textbf{0.01}               & \textbf{0.04}               & {\color[HTML]{9B9B9B} 0.04}  & {\color[HTML]{9B9B9B} 3.00} & \textbf{0.02}               & \textbf{0.31}               \\
SwimmerVelocityV1      & \textbf{0.02}               & \textbf{0.12}               & \textbf{0.01}                & \textbf{0.00}               & \textbf{0.03}               & \textbf{0.00}               \\
HopperVelocityV1       & \textbf{0.11}               & \textbf{0.07}               & \textbf{0.06}                & \textbf{0.00}               & \textbf{0.18}               & \textbf{0.02}               \\
HalfCheetahVelocityV1  & \textbf{0.48}               & \textbf{0.00}               & \textbf{0.81}                & \textbf{0.00}               & \textbf{0.48}               & \textbf{0.00}               \\
Walker2dVelocityV1     & \textbf{0.09}               & \textbf{0.58}               & {\color[HTML]{9B9B9B} 0.22}  & {\color[HTML]{9B9B9B} 2.28} & \textbf{0.13}               & \textbf{0.66}               \\
AntVelocityV1          & \textbf{0.52}               & \textbf{0.00}               & \textbf{0.41}                & \textbf{0.00}               & \textbf{0.50}               & \textbf{0.00}               \\ \hline
Average                & \textbf{0.14}               & \textbf{0.60}               & 0.17                         & 1.51                        & \textbf{0.15}               & \textbf{0.68}               \\ \hline
\end{tabular}
\end{table}

\subsection{Sensitivity on Unsafe Data Selection}

In the main paper, we evaluate the performance of \MethodName and FISOR under varying amounts of unsafe data. Since the selection of unsafe data is randomized, we further investigate the impact of different unsafe data selections in this section. Specifically, under the 10\% unsafe data setting, we generate two additional datasets using different random seeds. Together with the dataset used in the main paper, resulting in three distinct datasets for evaluation.
As shown in \cref{unsafe_sens}, \MethodName is largely insensitive to the specific selection of unsafe data. This is likely because our method can generate a substantial number of unsafe samples, thereby reducing the influence of the initial unsafe dataset. In contrast, FISOR exhibits higher sensitivity due to the limited amount of unsafe data available. While different unsafe subsets may introduce some variation in safety performance on individual environments, the overall average performance—and, importantly, the issue of severe safety violations—remains largely unchanged.

\begin{table}[h!]
\small
\renewcommand{\arraystretch}{1.3}
    \centering
    \caption{Sensitive analysis on different unsafe data.}
    \label{unsafe_sens}
    \resizebox{\textwidth}{!}{
\begin{tabular}{c|cccccccccccc}
\hline
                       & \multicolumn{2}{c}{Ours Rand1}                             & \multicolumn{2}{c}{FISOR Rand1}                             & \multicolumn{2}{c}{Ours Rand2}                             & \multicolumn{2}{c}{FISOR Rand2}                             & \multicolumn{2}{c}{Ours Rand3}                             & \multicolumn{2}{c}{FISOR Rand3}                             \\
\multirow{-2}{*}{Task} & r↑                           & c↓                          & r↑                           & c↓                           & r↑                           & c↓                          & r↑                           & c↓                           & r↑                           & c↓                          & r↑                           & c↓                           \\ \hline
PointButton1           & {\color[HTML]{9B9B9B} 0.05}  & {\color[HTML]{9B9B9B} 1.33} & {\color[HTML]{9B9B9B} 0.40}  & {\color[HTML]{9B9B9B} 9.75}  & {\color[HTML]{9B9B9B} 0.03}  & {\color[HTML]{9B9B9B} 1.55} & {\color[HTML]{9B9B9B} 0.41}  & {\color[HTML]{9B9B9B} 9.11}  & {\color[HTML]{9B9B9B} 0.02}  & {\color[HTML]{9B9B9B} 1.48} & {\color[HTML]{9B9B9B} 0.45}  & {\color[HTML]{9B9B9B} 11.24} \\
PointButton2           & {\color[HTML]{9B9B9B} 0.05}  & {\color[HTML]{9B9B9B} 2.24} & {\color[HTML]{9B9B9B} 0.44}  & {\color[HTML]{9B9B9B} 10.78} & {\color[HTML]{9B9B9B} 0.05}  & {\color[HTML]{9B9B9B} 2.34} & {\color[HTML]{9B9B9B} 0.42}  & {\color[HTML]{9B9B9B} 12.15} & {\color[HTML]{9B9B9B} 0.07}  & {\color[HTML]{9B9B9B} 1.52} & {\color[HTML]{9B9B9B} 0.42}  & {\color[HTML]{9B9B9B} 11.82} \\
PointGoal1             & {\color[HTML]{9B9B9B} 0.44}  & {\color[HTML]{9B9B9B} 1.63} & {\color[HTML]{9B9B9B} 0.70}  & {\color[HTML]{9B9B9B} 4.24}  & {\color[HTML]{9B9B9B} 0.41}  & {\color[HTML]{9B9B9B} 1.15} & {\color[HTML]{9B9B9B} 0.72}  & {\color[HTML]{9B9B9B} 4.80}  & {\color[HTML]{9B9B9B} 0.43}  & {\color[HTML]{9B9B9B} 1.90} & {\color[HTML]{9B9B9B} 0.70}  & {\color[HTML]{9B9B9B} 4.77}  \\
PointGoal2             & \textbf{0.07}                & \textbf{0.83}               & {\color[HTML]{9B9B9B} 0.56}  & {\color[HTML]{9B9B9B} 11.04} & \textbf{0.08}                & \textbf{0.69}               & {\color[HTML]{9B9B9B} 0.60}  & {\color[HTML]{9B9B9B} 10.76} & \textbf{0.05}                & \textbf{0.87}               & {\color[HTML]{9B9B9B} 0.60}  & {\color[HTML]{9B9B9B} 10.62} \\
PointPush1             & \textbf{0.18}                & \textbf{0.77}               & {\color[HTML]{9B9B9B} 0.35}  & {\color[HTML]{9B9B9B} 3.52}  & {\color[HTML]{9B9B9B} 0.16}  & {\color[HTML]{9B9B9B} 1.11} & {\color[HTML]{9B9B9B} 0.39}  & {\color[HTML]{9B9B9B} 2.81}  & \textbf{0.18}                & \textbf{0.74}               & {\color[HTML]{9B9B9B} 0.38}  & {\color[HTML]{9B9B9B} 3.17}  \\
PointPush2             & {\color[HTML]{9B9B9B} 0.14}  & {\color[HTML]{9B9B9B} 1.40} & {\color[HTML]{9B9B9B} 0.28}  & {\color[HTML]{9B9B9B} 6.17}  & {\color[HTML]{9B9B9B} 0.16}  & {\color[HTML]{9B9B9B} 1.60} & {\color[HTML]{9B9B9B} 0.26}  & {\color[HTML]{9B9B9B} 5.30}  & {\color[HTML]{9B9B9B} 0.13}  & {\color[HTML]{9B9B9B} 1.36} & {\color[HTML]{9B9B9B} 0.28}  & {\color[HTML]{9B9B9B} 5.73}  \\
CarButton1             & \textbf{-0.06}               & \textbf{0.51}               & {\color[HTML]{9B9B9B} 0.35}  & {\color[HTML]{9B9B9B} 15.25} & \textbf{-0.01}               & \textbf{0.79}               & {\color[HTML]{9B9B9B} 0.37}  & {\color[HTML]{9B9B9B} 15.64} & {\color[HTML]{9B9B9B} -0.02} & {\color[HTML]{9B9B9B} 1.03} & {\color[HTML]{9B9B9B} 0.35}  & {\color[HTML]{9B9B9B} 20.97} \\
CarButton2             & {\color[HTML]{9B9B9B} -0.02} & {\color[HTML]{9B9B9B} 1.21} & {\color[HTML]{9B9B9B} 0.27}  & {\color[HTML]{9B9B9B} 19.57} & {\color[HTML]{9B9B9B} -0.03} & {\color[HTML]{9B9B9B} 1.41} & {\color[HTML]{9B9B9B} 0.26}  & {\color[HTML]{9B9B9B} 16.21} & {\color[HTML]{9B9B9B} -0.03} & {\color[HTML]{9B9B9B} 1.31} & {\color[HTML]{9B9B9B} 0.24}  & {\color[HTML]{9B9B9B} 16.46} \\
CarGoal1               & \textbf{0.15}                & \textbf{0.02}               & {\color[HTML]{9B9B9B} 0.72}  & {\color[HTML]{9B9B9B} 3.89}  & \textbf{0.13}                & \textbf{0.00}               & {\color[HTML]{9B9B9B} 0.65}  & {\color[HTML]{9B9B9B} 4.53}  & \textbf{0.13}                & \textbf{0.00}               & {\color[HTML]{9B9B9B} 0.66}  & {\color[HTML]{9B9B9B} 4.42}  \\
CarGoal2               & \textbf{0.03}                & \textbf{0.14}               & {\color[HTML]{9B9B9B} 0.45}  & {\color[HTML]{9B9B9B} 10.80} & \textbf{0.01}                & \textbf{0.46}               & {\color[HTML]{9B9B9B} 0.42}  & {\color[HTML]{9B9B9B} 11.14} & \textbf{0.04}                & \textbf{0.16}               & {\color[HTML]{9B9B9B} 0.51}  & {\color[HTML]{9B9B9B} 10.67} \\
CarPush1               & \textbf{0.17}                & \textbf{0.46}               & {\color[HTML]{9B9B9B} 0.43}  & {\color[HTML]{9B9B9B} 1.89}  & \textbf{0.16}                & \textbf{0.69}               & {\color[HTML]{9B9B9B} 0.42}  & {\color[HTML]{9B9B9B} 2.93}  & \textbf{0.16}                & \textbf{0.86}               & {\color[HTML]{9B9B9B} 0.34}  & {\color[HTML]{9B9B9B} 2.22}  \\
CarPush2               & \textbf{0.03}                & \textbf{0.69}               & {\color[HTML]{9B9B9B} 0.34}  & {\color[HTML]{9B9B9B} 7.55}  & \textbf{0.05}                & \textbf{0.04}               & {\color[HTML]{9B9B9B} 0.33}  & {\color[HTML]{9B9B9B} 5.95}  & \textbf{0.04}                & \textbf{0.83}               & {\color[HTML]{9B9B9B} 0.31}  & {\color[HTML]{9B9B9B} 6.60}  \\
SwimmerVelocityV1      & \textbf{0.04}                & \textbf{0.00}               & {\color[HTML]{9B9B9B} -0.02} & {\color[HTML]{9B9B9B} 6.10}  & \textbf{0.05}                & \textbf{0.00}               & {\color[HTML]{9B9B9B} -0.05} & {\color[HTML]{9B9B9B} 7.84}  & \textbf{0.04}                & \textbf{0.01}               & {\color[HTML]{9B9B9B} -0.05} & {\color[HTML]{9B9B9B} 4.02}  \\
HopperVelocityV1       & \textbf{0.10}                & \textbf{0.01}               & {\color[HTML]{9B9B9B} 0.16}  & {\color[HTML]{9B9B9B} 1.51}  & \textbf{0.11}                & \textbf{0.00}               & {\color[HTML]{9B9B9B} 0.11}  & {\color[HTML]{9B9B9B} 2.78}  & \textbf{0.05}                & \textbf{0.00}               & {\color[HTML]{9B9B9B} 0.08}  & {\color[HTML]{9B9B9B} 3.21}  \\
HalfCheetahVelocityV1  & \textbf{0.42}                & \textbf{0.00}               & \textbf{0.92}                & \textbf{0.00}                & \textbf{0.43}                & \textbf{0.00}               & \textbf{0.95}                & \textbf{0.00}                & \textbf{0.40}                & \textbf{0.00}               & \textbf{0.92}                & \textbf{0.02}                \\
Walker2dVelocityV1     & {\color[HTML]{9B9B9B} 0.12}  & {\color[HTML]{9B9B9B} 1.15} & {\color[HTML]{9B9B9B} 0.52}  & {\color[HTML]{9B9B9B} 4.14}  & \textbf{0.13}                & \textbf{0.97}               & {\color[HTML]{9B9B9B} 0.10}  & {\color[HTML]{9B9B9B} 1.59}  & \textbf{0.09}                & \textbf{0.89}               & {\color[HTML]{9B9B9B} 0.15}  & {\color[HTML]{9B9B9B} 2.29}  \\
AntVelocityV1          & \textbf{0.49}                & \textbf{0.00}               & \textbf{0.86}                & \textbf{0.94}                & \textbf{0.48}                & \textbf{0.00}               & {\color[HTML]{9B9B9B} 0.91}  & {\color[HTML]{9B9B9B} 1.13}  & \textbf{0.45}                & \textbf{0.00}               & {\color[HTML]{9B9B9B} 0.94}  & {\color[HTML]{9B9B9B} 1.62}  \\ \hline
Average                & \textbf{0.14}                & \textbf{0.73}               & {\color[HTML]{9B9B9B} 0.46}  & {\color[HTML]{9B9B9B} 6.89}  & \textbf{0.14}                & \textbf{0.75}               & {\color[HTML]{9B9B9B} 0.43}  & {\color[HTML]{9B9B9B} 6.75}  & \textbf{0.13}                & \textbf{0.76}               & {\color[HTML]{9B9B9B} 0.43}  & {\color[HTML]{9B9B9B} 7.05}  \\ \hline
\end{tabular}
    }
\end{table}

\subsection{Integrate \MethodName with Soft-constraint Algorithms}
In this section, we further integrate \MethodName with three soft-constraint offline safe reinforcement learning algorithms—\textbf{CPQ}~\cite{xu2022constraints}, \textbf{BCQ Lagrange (BCQ-L)}~\citep{fujimoto2019off}, and \textbf{COptiDICE}~\citep{leecoptidice}—and evaluate the impact of the model-based rollout data generated by \MethodName on these methods. The results are shown in \cref{integrate others}. We observe that all three algorithms benefit from the rollout data provided by \MethodName, exhibiting clear improvements in safety performance, which validates the effectiveness of the proposed model-based rollout mechanism. Nevertheless, even after these improvements, their safety performance remains relatively unsatisfactory, further indicating that feasibility-based hard-constraint methods are more suitable for this class of scenarios.

\begin{table}[h!]
\small
\renewcommand{\arraystretch}{1.5}
    \centering
    \caption{Results of integrate \MethodName with soft-constraint algorithms.}
    \label{integrate others}
    \resizebox{\textwidth}{!}{
\begin{tabular}{c|cccccc|cccccc}
\hline
                       & \multicolumn{2}{c}{CPQ}                                     & \multicolumn{2}{c}{BCQ-L}                                            & \multicolumn{2}{c|}{COptiDICE}                                              & \multicolumn{2}{c}{CPQ PROCO}                                                & \multicolumn{2}{c}{BCQ-L PROCO}                                      & \multicolumn{2}{c}{COptiDICE PROCO}                        \\
\multirow{-2}{*}{Task} & r↑                           & c↓                           & r↑                                   & c↓                                   & r↑                                   & c↓                                   & r↑                                    & c↓                                   & r↑                                   & c↓                                   & r↑                           & c↓                          \\ \hline
PointButton1           & {\color[HTML]{9B9B9B} 0.77}  & {\color[HTML]{9B9B9B} 13.37} & {\color[HTML]{9B9B9B} 0.25}          & {\color[HTML]{9B9B9B} 4.64}          & {\color[HTML]{9B9B9B} 0.06}          & {\color[HTML]{9B9B9B} 2.80}          & {\color[HTML]{9B9B9B} 0.23}           & {\color[HTML]{9B9B9B} 7.14}          & {\color[HTML]{9B9B9B} 0.07}          & {\color[HTML]{9B9B9B} 2.44}          & {\color[HTML]{9B9B9B} 0.14}  & {\color[HTML]{9B9B9B} 2.92} \\
PointButton2           & {\color[HTML]{9B9B9B} 0.62}  & {\color[HTML]{9B9B9B} 14.53} & {\color[HTML]{9B9B9B} 0.41}          & {\color[HTML]{9B9B9B} 8.65}          & {\color[HTML]{9B9B9B} 0.14}          & {\color[HTML]{9B9B9B} 4.86}          & {\color[HTML]{9B9B9B} 0.39}           & {\color[HTML]{9B9B9B} 12.07}         & {\color[HTML]{9B9B9B} 0.14}          & {\color[HTML]{9B9B9B} 2.83}          & {\color[HTML]{9B9B9B} 0.16}  & {\color[HTML]{9B9B9B} 4.42} \\
PointGoal1             & {\color[HTML]{9B9B9B} 0.80}  & {\color[HTML]{9B9B9B} 5.39}  & {\color[HTML]{9B9B9B} 0.71}          & {\color[HTML]{9B9B9B} 3.89}          & {\color[HTML]{9B9B9B} 0.50}          & {\color[HTML]{9B9B9B} 5.34}          & {\color[HTML]{9B9B9B} 0.30}           & {\color[HTML]{9B9B9B} 4.03}          & {\color[HTML]{9B9B9B} 0.82}          & {\color[HTML]{9B9B9B} 3.28}          & {\color[HTML]{9B9B9B} 0.52}  & {\color[HTML]{9B9B9B} 4.65} \\
PointGoal2             & {\color[HTML]{9B9B9B} 0.80}  & {\color[HTML]{9B9B9B} 18.69} & {\color[HTML]{9B9B9B} 0.66}          & {\color[HTML]{9B9B9B} 11.57}         & {\color[HTML]{9B9B9B} 0.38}          & {\color[HTML]{9B9B9B} 5.15}          & {\color[HTML]{9B9B9B} 0.09}           & {\color[HTML]{9B9B9B} 8.95}          & {\color[HTML]{9B9B9B} 0.26}          & {\color[HTML]{9B9B9B} 4.00}          & {\color[HTML]{9B9B9B} 0.34}  & {\color[HTML]{9B9B9B} 6.50} \\
PointPush1             & {\color[HTML]{9B9B9B} 0.19}  & {\color[HTML]{9B9B9B} 6.35}  & {\color[HTML]{9B9B9B} 0.31}          & {\color[HTML]{9B9B9B} 2.76}          & {\color[HTML]{9B9B9B} 0.09}          & {\color[HTML]{9B9B9B} 3.58}          & {\color[HTML]{9B9B9B} 0.14}           & {\color[HTML]{9B9B9B} 2.13}          & {\color[HTML]{9B9B9B} 0.24}          & {\color[HTML]{9B9B9B} 1.72}          & {\color[HTML]{9B9B9B} 0.12}  & {\color[HTML]{9B9B9B} 2.50} \\
PointPush2             & {\color[HTML]{9B9B9B} 0.22}  & {\color[HTML]{9B9B9B} 10.21} & {\color[HTML]{9B9B9B} 0.24}          & {\color[HTML]{9B9B9B} 4.03}          & {\color[HTML]{9B9B9B} 0.03}          & {\color[HTML]{9B9B9B} 3.00}          & {\color[HTML]{9B9B9B} 0.08}           & {\color[HTML]{9B9B9B} 2.94}          & {\color[HTML]{9B9B9B} 0.12}          & {\color[HTML]{9B9B9B} 2.86}          & {\color[HTML]{9B9B9B} -0.01} & {\color[HTML]{9B9B9B} 3.34} \\
CarButton1             & {\color[HTML]{9B9B9B} 0.35}  & {\color[HTML]{9B9B9B} 42.38} & {\color[HTML]{9B9B9B} -0.02}         & {\color[HTML]{9B9B9B} 5.56}          & {\color[HTML]{9B9B9B} -0.04}         & {\color[HTML]{9B9B9B} 3.69}          & {\color[HTML]{9B9B9B} 0.06}           & {\color[HTML]{9B9B9B} 7.08}          & {\color[HTML]{9B9B9B} -0.10}         & {\color[HTML]{9B9B9B} 2.42}          & {\color[HTML]{9B9B9B} -0.14} & {\color[HTML]{9B9B9B} 1.85} \\
CarButton2             & {\color[HTML]{9B9B9B} 0.54}  & {\color[HTML]{9B9B9B} 34.06} & {\color[HTML]{9B9B9B} 0.00}          & {\color[HTML]{9B9B9B} 5.15}          & {\color[HTML]{9B9B9B} -0.08}         & {\color[HTML]{9B9B9B} 2.93}          & {\color[HTML]{9B9B9B} 0.09}           & {\color[HTML]{9B9B9B} 10.62}         & {\color[HTML]{9B9B9B} -0.10}         & {\color[HTML]{9B9B9B} 3.73}          & {\color[HTML]{9B9B9B} -0.15} & {\color[HTML]{9B9B9B} 3.01} \\
CarGoal1               & {\color[HTML]{9B9B9B} 0.80}  & {\color[HTML]{9B9B9B} 5.74}  & {\color[HTML]{9B9B9B} 0.47}          & {\color[HTML]{9B9B9B} 2.86}          & {\color[HTML]{9B9B9B} 0.38}          & {\color[HTML]{9B9B9B} 2.14}          & {\color[HTML]{9B9B9B} 0.46}           & {\color[HTML]{9B9B9B} 3.04}          & {\color[HTML]{9B9B9B} 0.35}          & {\color[HTML]{9B9B9B} 2.36}          & {\color[HTML]{9B9B9B} 0.31}  & {\color[HTML]{9B9B9B} 1.73} \\
CarGoal2               & {\color[HTML]{9B9B9B} 0.82}  & {\color[HTML]{9B9B9B} 20.41} & {\color[HTML]{9B9B9B} 0.25}          & {\color[HTML]{9B9B9B} 3.67}          & {\color[HTML]{9B9B9B} 0.30}          & {\color[HTML]{9B9B9B} 3.72}          & {\color[HTML]{9B9B9B} 0.27}           & {\color[HTML]{9B9B9B} 6.12}          & {\color[HTML]{9B9B9B} 0.25}          & {\color[HTML]{9B9B9B} 3.30}          & {\color[HTML]{9B9B9B} 0.16}  & {\color[HTML]{9B9B9B} 1.59} \\
CarPush1               & {\color[HTML]{9B9B9B} -0.01} & {\color[HTML]{9B9B9B} 5.08}  & {\color[HTML]{9B9B9B} 0.20}          & {\color[HTML]{9B9B9B} 1.59}          & {\color[HTML]{9B9B9B} 0.20}          & {\color[HTML]{9B9B9B} 2.21}          & {\color[HTML]{9B9B9B} -0.13}          & {\color[HTML]{9B9B9B} 1.76}          & {\color[HTML]{9B9B9B} 0.19}          & {\color[HTML]{9B9B9B} 1.02}          & {\color[HTML]{9B9B9B} 0.22}  & {\color[HTML]{9B9B9B} 1.76} \\
CarPush2               & {\color[HTML]{9B9B9B} 0.29}  & {\color[HTML]{9B9B9B} 16.36} & {\color[HTML]{9B9B9B} 0.15}          & {\color[HTML]{9B9B9B} 6.94}          & {\color[HTML]{9B9B9B} 0.07}          & {\color[HTML]{9B9B9B} 3.00}          & {\color[HTML]{9B9B9B} 0.14}           & {\color[HTML]{9B9B9B} 2.73}          & {\color[HTML]{9B9B9B} 0.11}          & {\color[HTML]{9B9B9B} 3.40}          & {\color[HTML]{9B9B9B} 0.11}  & {\color[HTML]{9B9B9B} 2.95} \\
SwimmerVelocityV1      & {\color[HTML]{9B9B9B} 0.06}  & {\color[HTML]{9B9B9B} 11.65} & {\color[HTML]{9B9B9B} 0.56}          & {\color[HTML]{9B9B9B} 24.28}         & {\color[HTML]{9B9B9B} 0.48}          & {\color[HTML]{9B9B9B} 3.43}          & {\color[HTML]{3531FF} \textbf{-0.03}} & {\color[HTML]{3531FF} \textbf{0.05}} & {\color[HTML]{9B9B9B} 0.39}          & {\color[HTML]{9B9B9B} 3.20}          & {\color[HTML]{9B9B9B} 0.43}  & {\color[HTML]{9B9B9B} 4.35} \\
HopperVelocityV1       & {\color[HTML]{9B9B9B} 0.06}  & {\color[HTML]{9B9B9B} 5.09}  & {\color[HTML]{9B9B9B} 0.36}          & {\color[HTML]{9B9B9B} 7.66}          & {\color[HTML]{9B9B9B} 0.18}          & {\color[HTML]{9B9B9B} 4.59}          & {\color[HTML]{9B9B9B} 0.07}           & {\color[HTML]{9B9B9B} 4.57}          & {\color[HTML]{9B9B9B} 0.40}          & {\color[HTML]{9B9B9B} 5.77}          & {\color[HTML]{9B9B9B} 0.38}  & {\color[HTML]{9B9B9B} 2.73} \\
HalfCheetahVelocityV1  & {\color[HTML]{9B9B9B} 1.35}  & {\color[HTML]{9B9B9B} 88.45} & {\color[HTML]{9B9B9B} 1.01}          & {\color[HTML]{9B9B9B} 25.83}         & {\color[HTML]{3531FF} \textbf{0.59}} & {\color[HTML]{3531FF} \textbf{0.00}} & \textbf{0.18}                         & \textbf{0.01}                        & {\color[HTML]{9B9B9B} 0.91}          & {\color[HTML]{9B9B9B} 8.95}          & \textbf{0.58}                & \textbf{0.00}               \\
Walker2dVelocityV1     & {\color[HTML]{9B9B9B} 0.03}  & {\color[HTML]{9B9B9B} 1.08}  & {\color[HTML]{3531FF} \textbf{0.79}} & {\color[HTML]{3531FF} \textbf{0.24}} & {\color[HTML]{9B9B9B} 0.10}          & {\color[HTML]{9B9B9B} 1.68}          & \textbf{0.02}                         & \textbf{0.32}                        & \textbf{0.75}                        & \textbf{0.99}                        & {\color[HTML]{9B9B9B} 0.11}  & {\color[HTML]{9B9B9B} 2.06} \\
AntVelocityV1          & \textbf{-1.01}               & \textbf{0.00}                & {\color[HTML]{9B9B9B} 0.92}          & {\color[HTML]{9B9B9B} 27.89}         & {\color[HTML]{9B9B9B} 0.98}          & {\color[HTML]{9B9B9B} 5.16}          & \textbf{-0.92}                        & \textbf{0.00}                        & {\color[HTML]{3531FF} \textbf{0.84}} & {\color[HTML]{3531FF} \textbf{0.11}} & {\color[HTML]{9B9B9B} 0.97}  & {\color[HTML]{9B9B9B} 3.86} \\ \hline
Average                & {\color[HTML]{9B9B9B} 0.39}  & {\color[HTML]{9B9B9B} 17.58} & {\color[HTML]{9B9B9B} 0.43}          & {\color[HTML]{9B9B9B} 8.66}          & {\color[HTML]{9B9B9B} 0.26}          & {\color[HTML]{9B9B9B} 3.37}          & {\color[HTML]{9B9B9B} 0.09}           & {\color[HTML]{9B9B9B} 4.33}          & {\color[HTML]{9B9B9B} 0.33}          & {\color[HTML]{9B9B9B} 3.08}          & {\color[HTML]{9B9B9B} 0.25}  & {\color[HTML]{9B9B9B} 2.95} \\ \hline
Safe Percent           & \multicolumn{2}{c}{1/17}                                    & \multicolumn{2}{c}{1/17}                                                    & \multicolumn{2}{c|}{1/17}                                                   & \multicolumn{2}{c}{4/17}                                                     & \multicolumn{2}{c}{2/17}                                                    & \multicolumn{2}{c}{1/17}                                   \\ \hline
Safety Improvement     & \multicolumn{2}{c}{/}                                       & \multicolumn{2}{c}{/}                                                       & \multicolumn{2}{c|}{/}                                                      & \multicolumn{2}{c}{1325\%}                                                   & \multicolumn{2}{c}{558\%}                                                   & \multicolumn{2}{c}{42\%}                                   \\ \hline
\end{tabular}
    }
\end{table}

\subsection{Standard Deviation of the Main Results}
In this section, we present the standard deviation of \MethodName and various baselines computed with three different random seeds, as shown in \cref{detail std}.
The results demonstrate that variants of \MethodName achieve lower standard deviation compared to other baselines, indicating their stability.

\begin{table}[h!]
\small
\renewcommand{\arraystretch}{1.6}
    \centering
    \caption{Detailed standard deviation results of the main experiment. All results are computed using three different random seeds.}
    \label{detail std}
    \resizebox{\textwidth}{!}{
   \begin{tabular}{c|cccccccccccc|cccccc}
\hline
\multirow{2}{*}{Task} & \multicolumn{2}{c}{BC} & \multicolumn{2}{c}{CDT} & \multicolumn{2}{c}{CCAC} & \multicolumn{2}{c}{LSPC} & \multicolumn{2}{c}{CAPS} & \multicolumn{2}{c|}{FISOR} & \multicolumn{2}{c}{LSPC PROCO} & \multicolumn{2}{c}{CAPS PROCO} & \multicolumn{2}{c}{FISOR PROCO} \\
                      & r↑         & c↓        & r↑         & c↓         & r↑         & c↓          & r↑          & c↓         & r↑          & c↓         & r↑           & c↓          & r↑             & c↓            & r↑             & c↓            & r↑             & c↓             \\ \hline
PointButton1          & 0.05       & 1.21      & 0.14       & 1.90       & 0.00       & 1.76        & 0.01        & 0.86       & 0.11        & 1.87       & 0.12         & 0.91        & 0.04           & 0.32          & 0.02           & 0.34          & 0.01           & 0.56           \\
PointButton2          & 0.08       & 1.57      & 0.02       & 2.28       & 0.06       & 0.89        & 0.13        & 1.39       & 0.04        & 0.79       & 0.06         & 0.41        & 0.08           & 2.30          & 0.02           & 0.18          & 0.02           & 0.88           \\
PointGoal1            & 0.03       & 0.13      & 0.01       & 0.55       & 0.00       & 1.03        & 0.04        & 1.04       & 0.05        & 0.61       & 0.01         & 0.52        & 0.07           & 0.10          & 0.05           & 0.00          & 0.00           & 0.44           \\
PointGoal2            & 0.05       & 2.15      & 0.01       & 0.60       & 0.02       & 4.01        & 0.04        & 0.43       & 0.02        & 1.42       & 0.03         & 1.01        & 0.03           & 0.27          & 0.03           & 0.41          & 0.05           & 0.43           \\
PointPush1            & 0.03       & 0.91      & 0.03       & 0.15       & 0.04       & 1.66        & 0.01        & 1.11       & 0.03        & 0.84       & 0.06         & 1.27        & 0.01           & 1.38          & 0.02           & 0.33          & 0.00           & 0.53           \\
PointPush2            & 0.06       & 1.04      & 0.02       & 0.89       & 0.09       & 2.46        & 0.02        & 3.53       & 0.02        & 0.92       & 0.06         & 1.79        & 0.04           & 0.62          & 0.03           & 1.82          & 0.02           & 0.18           \\
CarButton1            & 0.01       & 1.35      & 0.02       & 1.86       & 0.05       & 24.41       & 0.07        & 1.53       & 0.08        & 3.29       & 0.03         & 1.78        & 0.02           & 2.43          & 0.01           & 1.30          & 0.04           & 0.23           \\
CarButton2            & 0.10       & 1.18      & 0.03       & 1.15       & 0.40       & 8.62        & 0.00        & 1.13       & 0.06        & 1.46       & 0.04         & 0.94        & 0.07           & 0.63          & 0.04           & 0.23          & 0.00           & 0.21           \\
CarGoal1              & 0.01       & 0.56      & 0.04       & 0.15       & 0.00       & 0.00        & 0.08        & 0.41       & 0.06        & 0.55       & 0.01         & 0.91        & 0.06           & 0.31          & 0.11           & 0.10          & 0.02           & 0.03           \\
CarGoal2              & 0.03       & 0.38      & 0.01       & 0.38       & 0.34       & 10.96       & 0.06        & 0.51       & 0.08        & 1.77       & 0.02         & 0.20        & 0.06           & 0.34          & 0.03           & 0.26          & 0.02           & 0.50           \\
CarPush1              & 0.05       & 0.24      & 0.01       & 1.13       & 0.33       & 0.17        & 0.01        & 0.53       & 0.01        & 0.82       & 0.04         & 0.35        & 0.02           & 0.74          & 0.03           & 0.09          & 0.02           & 0.86           \\
CarPush2              & 0.03       & 1.21      & 0.01       & 1.43       & 0.08       & 4.27        & 0.02        & 0.55       & 0.02        & 1.67       & 0.06         & 1.84        & 0.01           & 0.63          & 0.03           & 0.57          & 0.00           & 0.06           \\
SwimmerVelocityV1     & 0.10       & 5.05      & 0.02       & 3.13       & 0.05       & 3.57        & 0.04        & 4.55       & 0.13        & 6.07       & 0.01         & 0.56        & 0.06           & 0.59          & 0.03           & 0.12          & 0.02           & 0.12           \\
HopperVelocityV1      & 0.17       & 0.97      & 0.03       & 0.66       & 0.05       & 3.82        & 0.05        & 1.04       & 0.13        & 2.47       & 0.01         & 0.15        & 0.03           & 4.20          & 0.03           & 0.04          & 0.01           & 0.10           \\
HalfCheetahVelocityV1 & 0.00       & 2.94      & 0.01       & 1.01       & 0.07       & 0.65        & 0.00        & 3.84       & 0.01        & 2.85       & 0.01         & 3.45        & 0.05           & 0.00          & 0.01           & 0.00          & 0.01           & 0.00           \\
Walker2dVelocityV1    & 0.09       & 1.17      & 0.02       & 0.60       & 0.03       & 1.13        & 0.01        & 0.00       & 0.02        & 0.53       & 0.03         & 0.95        & 0.08           & 1.31          & 0.09           & 0.78          & 0.03           & 0.09           \\
AntVelocityV1         & 0.01       & 3.91      & 0.00       & 0.31       & 0.00       & 0.00        & 0.00        & 1.48       & 0.00        & 0.31       & 0.00         & 0.76        & 0.01           & 0.21          & 0.06           & 0.00          & 0.04           & 0.00           \\ \hline
Average               & 0.05       & 1.53      & 0.03       & 1.07       & 0.09       & 4.08        & 0.03        & 1.41       & 0.05        & 1.66       & 0.04         & 1.05        & 0.04           & 0.96          & 0.04           & 0.39          & 0.02           & 0.31           \\ \hline
\end{tabular}
    }
\end{table}

\subsection{Detailed ablation results}
In this section, we also provide the detailed results of the ablation studies, as shown in \cref{detail abl}.
\begin{table}[h!]
\small
\renewcommand{\arraystretch}{1.0}
    \centering
    \caption{Detailed ablation results (mean$\pm$std).}
    \label{detail abl}
    \resizebox{\textwidth}{!}{

\begin{tabular}{c|cccccc}
\hline
\multirow{2}{*}{Task} & \multicolumn{2}{c}{W/o Model}  & \multicolumn{2}{c}{Full Model} & \multicolumn{2}{c}{W/o Relabel} \\
                      & r↑             & c↓            & r↑             & c↓            & r↑              & c↓            \\ \hline
PointButton1          & 0.06$\pm$0.00  & 2.84$\pm$0.58 & 0.01$\pm$0.00  & 1.82$\pm$0.76 & 0.07$\pm$0.01   & 1.10$\pm$0.39 \\
PointButton2          & 0.09$\pm$0.03  & 4.32$\pm$0.28 & 0.03$\pm$0.02  & 2.14$\pm$0.72 & 0.09$\pm$0.02   & 2.19$\pm$0.36 \\
PointGoal1            & 0.54$\pm$0.04  & 2.41$\pm$0.76 & 0.07$\pm$0.02  & 0.18$\pm$0.13 & 0.28$\pm$0.04   & 0.25$\pm$0.13 \\
PointGoal2            & 0.14$\pm$0.03  & 1.58$\pm$0.68 & 0.03$\pm$0.01  & 0.63$\pm$0.34 & 0.09$\pm$0.05   & 0.92$\pm$0.30 \\
PointPush1            & 0.25$\pm$0.02  & 1.03$\pm$0.32 & 0.13$\pm$0.04  & 0.79$\pm$0.68 & 0.21$\pm$0.02   & 0.60$\pm$0.32 \\
PointPush2            & 0.11$\pm$0.03  & 1.21$\pm$0.36 & 0.04$\pm$0.05  & 0.48$\pm$0.25 & 0.07$\pm$0.03   & 1.22$\pm$0.27 \\
CarButton1            & -0.02$\pm$0.01 & 1.24$\pm$0.25 & 0.00$\pm$0.00  & 1.57$\pm$1.22 & -0.03$\pm$0.03  & 1.22$\pm$0.37 \\
CarButton2            & 0.00$\pm$0.02  & 3.22$\pm$0.94 & -0.06$\pm$0.01 & 1.99$\pm$0.80 & -0.03$\pm$0.03  & 1.76$\pm$1.04 \\
CarGoal1              & 0.27$\pm$0.02  & 0.37$\pm$0.13 & 0.02$\pm$0.02  & 0.00$\pm$0.00 & 0.15$\pm$0.06   & 0.31$\pm$0.18 \\
CarGoal2              & 0.06$\pm$0.03  & 0.14$\pm$0.12 & -0.01$\pm$0.02 & 0.07$\pm$0.07 & 0.11$\pm$0.02   & 1.00$\pm$0.34 \\
CarPush1              & 0.24$\pm$0.04  & 1.03$\pm$0.69 & 0.16$\pm$0.03  & 0.12$\pm$0.17 & 0.17$\pm$0.02   & 0.29$\pm$0.19 \\
CarPush2              & 0.03$\pm$0.03  & 0.48$\pm$0.03 & -0.03$\pm$0.05 & 0.00$\pm$0.00 & 0.06$\pm$0.02   & 1.27$\pm$0.53 \\
SwimmerVelocityV1     & 0.01$\pm$0.01  & 0.00$\pm$0.00 & 0.02$\pm$0.00  & 0.00$\pm$0.00 & 0.01$\pm$0.02   & 0.07$\pm$0.10 \\
HopperVelocityV1      & 0.17$\pm$0.05  & 0.17$\pm$0.12 & 0.04$\pm$0.03  & 0.00$\pm$0.00 & 0.12$\pm$0.09   & 0.18$\pm$0.20 \\
HalfCheetahVelocityV1 & 0.51$\pm$0.04  & 0.00$\pm$0.00 & 0.23$\pm$0.05  & 0.00$\pm$0.00 & 0.83$\pm$0.04   & 1.40$\pm$1.89 \\
Walker2dVelocityV1    & 0.15$\pm$0.03  & 1.47$\pm$0.50 & 0.05$\pm$0.03  & 0.01$\pm$0.01 & 0.18$\pm$0.05   & 3.36$\pm$1.38 \\
AntVelocityV1         & 0.59$\pm$0.01  & 0.00$\pm$0.00 & 0.00$\pm$0.08  & 0.01$\pm$0.01 & 0.52$\pm$0.05   & 0.01$\pm$0.01 \\ \hline
Average               & 0.19           & 1.27          & 0.05           & 0.58          & 0.17            & 1.01          \\ \hline
\end{tabular}
}

\vspace{1em}

\resizebox{\textwidth}{!}{
\begin{tabular}{c|cccccc}
\hline
\multirow{2}{*}{Task} & \multicolumn{2}{c}{Det. Rollout} & \multicolumn{2}{c}{W/o Consv.} & \multicolumn{2}{c}{W/o Refl.}  \\
                      & r↑              & c↓             & r↑             & c↓            & r↑             & c↓            \\ \hline
PointButton1          & 0.02$\pm$0.04   & 1.67$\pm$0.97  & 0.09$\pm$0.03  & 4.01$\pm$0.27 & 0.06$\pm$0.03  & 3.40$\pm$1.70 \\
PointButton2          & 0.07$\pm$0.01   & 2.90$\pm$0.64  & 0.08$\pm$0.01  & 3.54$\pm$0.80 & 0.07$\pm$0.01  & 4.55$\pm$1.48 \\
PointGoal1            & 0.40$\pm$0.02   & 1.65$\pm$0.36  & 0.37$\pm$0.00  & 0.63$\pm$0.21 & 0.34$\pm$0.04  & 0.86$\pm$0.55 \\
PointGoal2            & 0.06$\pm$0.01   & 0.25$\pm$0.18  & 0.06$\pm$0.01  & 0.74$\pm$0.54 & 0.08$\pm$0.02  & 0.42$\pm$0.13 \\
PointPush1            & 0.18$\pm$0.04   & 0.91$\pm$1.10  & 0.22$\pm$0.02  & 0.94$\pm$0.14 & 0.20$\pm$0.05  & 2.17$\pm$2.38 \\
PointPush2            & 0.08$\pm$0.04   & 0.51$\pm$0.38  & 0.11$\pm$0.04  & 3.98$\pm$1.95 & 0.07$\pm$0.01  & 1.22$\pm$0.90 \\
CarButton1            & -0.04$\pm$0.04  & 1.75$\pm$0.59  & -0.04$\pm$0.02 & 2.76$\pm$1.10 & -0.01$\pm$0.03 & 2.57$\pm$1.32 \\
CarButton2            & -0.01$\pm$0.01  & 1.36$\pm$0.27  & -0.01$\pm$0.00 & 2.40$\pm$0.72 & 0.00$\pm$0.01  & 3.21$\pm$1.44 \\
CarGoal1              & 0.15$\pm$0.04   & 0.03$\pm$0.04  & 0.23$\pm$0.04  & 0.50$\pm$0.37 & 0.16$\pm$0.02  & 0.16$\pm$0.14 \\
CarGoal2              & 0.03$\pm$0.00   & 0.08$\pm$0.11  & 0.03$\pm$0.00  & 0.39$\pm$0.49 & 0.04$\pm$0.03  & 0.80$\pm$0.93 \\
CarPush1              & 0.18$\pm$0.03   & 1.87$\pm$1.23  & 0.22$\pm$0.03  & 0.45$\pm$0.24 & 0.11$\pm$0.04  & 1.86$\pm$2.43 \\
CarPush2              & 0.00$\pm$0.00   & 0.04$\pm$0.05  & 0.04$\pm$0.03  & 3.00$\pm$1.34 & 0.03$\pm$0.04  & 1.19$\pm$0.97 \\
SwimmerVelocityV1     & 0.00$\pm$0.01   & 0.03$\pm$0.02  & 0.01$\pm$0.01  & 0.00$\pm$0.00 & 0.00$\pm$0.00  & 0.23$\pm$0.33 \\
HopperVelocityV1      & 0.20$\pm$0.12   & 0.01$\pm$0.01  & 0.06$\pm$0.08  & 0.00$\pm$0.00 & 0.13$\pm$0.17  & 0.00$\pm$0.00 \\
HalfCheetahVelocityV1 & 0.49$\pm$0.02   & 0.00$\pm$0.00  & 0.81$\pm$0.01  & 0.00$\pm$0.00 & 0.65$\pm$0.07  & 0.00$\pm$0.00 \\
Walker2dVelocityV1    & 0.15$\pm$0.01   & 1.57$\pm$0.36  & 0.22$\pm$0.04  & 2.28$\pm$0.15 & 0.18$\pm$0.00  & 2.09$\pm$0.57 \\
AntVelocityV1         & 0.46$\pm$0.05   & 0.00$\pm$0.00  & 0.41$\pm$0.03  & 0.00$\pm$0.00 & 0.50$\pm$0.18  & 0.00$\pm$0.00 \\ \hline
Average               & 0.14            & 0.86           & 0.17           & 1.51          & 0.15           & 1.45          \\ \hline
\end{tabular}
    }
\end{table}

\subsection{Detailed Sensitivity Analysis Results}
Finally, in this section, we provide the detailed results of sensitivity studies, as shown in \cref{detail sens}.
Notably, while safety performance degrades significantly when $H=5$, this degradation is mainly observed in the Point and Car tasks, whereas the MuJoCo Velocity tasks do not exhibit such a decline and may even become more conservative. These results suggest that when the environment model is sufficiently accurate, increasing $H$ can indeed yield improved safety performance; however, if the environment model accuracy is low, keeping $H$ small is preferable to avoid the negative impact of model errors on learning stability.
\begin{table}[h!]
\small
\renewcommand{\arraystretch}{1.0}
    \centering
    \caption{Detailed sensitivity analysis results (mean$\pm$std).}
    \label{detail sens}
    \resizebox{\textwidth}{!}{

\begin{tabular}{c|cccccc}
\hline
\multirow{2}{*}{Task} & \multicolumn{2}{c}{$E=2$}      & \multicolumn{2}{c}{$E=5$}      & \multicolumn{2}{c}{$H=2$}      \\
                      & r↑             & c↓            & r↑             & c↓            & r↑             & c↓            \\ \hline
PointButton1          & 0.03$\pm$0.01  & 1.93$\pm$0.14 & 0.03$\pm$0.00  & 1.55$\pm$0.55 & -0.02$\pm$0.01 & 0.57$\pm$0.20 \\
PointButton2          & 0.06$\pm$0.04  & 2.31$\pm$1.15 & 0.06$\pm$0.01  & 2.42$\pm$0.58 & -0.06$\pm$0.05 & 1.79$\pm$0.33 \\
PointGoal1            & 0.45$\pm$0.03  & 1.70$\pm$1.03 & 0.42$\pm$0.07  & 1.28$\pm$0.66 & 0.14$\pm$0.03  & 2.67$\pm$0.31 \\
PointGoal2            & 0.10$\pm$0.04  & 0.71$\pm$0.45 & 0.08$\pm$0.02  & 0.53$\pm$0.31 & -0.04$\pm$0.04 & 1.20$\pm$0.27 \\
PointPush1            & 0.20$\pm$0.02  & 0.43$\pm$1.35 & 0.18$\pm$0.02  & 1.14$\pm$0.53 & 0.13$\pm$0.05  & 1.10$\pm$0.44 \\
PointPush2            & 0.10$\pm$0.00  & 0.76$\pm$0.31 & 0.09$\pm$0.02   & 0.24$\pm$0.16 & 0.06$\pm$0.02  & 0.65$\pm$0.25 \\
CarButton1            & -0.04$\pm$0.05 & 0.61$\pm$0.42 & -0.07$\pm$0.05 & 0.31$\pm$0.11 & -0.04$\pm$0.03 & 0.94$\pm$0.04 \\
CarButton2            & -0.02$\pm$0.04 & 1.85$\pm$0.29 & -0.03$\pm$0.03 & 1.68$\pm$0.43 & -0.13$\pm$0.06 & 0.75$\pm$0.27 \\
CarGoal1              & 0.22$\pm$0.02  & 0.15$\pm$0.10 & 0.20$\pm$0.02  & 0.03$\pm$0.05 & 0.16$\pm$0.08  & 1.45$\pm$0.59 \\
CarGoal2              & 0.05$\pm$0.02  & 0.58$\pm$0.26 & 0.05$\pm$0.00  & 0.77$\pm$0.11 & 0.05$\pm$0.04  & 2.29$\pm$0.89 \\
CarPush1              & 0.19$\pm$0.04  & 0.35$\pm$0.22 & 0.17$\pm$0.03  & 0.45$\pm$0.39 & 0.12$\pm$0.02  & 0.32$\pm$0.09 \\
CarPush2              & 0.03$\pm$0.01  & 0.01$\pm$0.01 & -0.01$\pm$0.02 & 0.00$\pm$0.00 & -0.01$\pm$0.04 & 0.64$\pm$0.91 \\
SwimmerVelocityV1     & 0.01$\pm$0.00  & 0.11$\pm$0.15 & 0.02$\pm$0.01  & 0.32$\pm$0.45 & 0.01$\pm$0.01  & 0.00$\pm$0.00 \\
HopperVelocityV1      & 0.18$\pm$0.02  & 0.21$\pm$0.14 & 0.38$\pm$0.15  & 0.03$\pm$0.03 & 0.13$\pm$0.04  & 0.14$\pm$0.19 \\
HalfCheetahVelocityV1 & 0.50$\pm$0.04  & 0.00$\pm$0.00 & 0.49$\pm$0.03  & 0.00$\pm$0.00 & 0.39$\pm$0.07  & 0.00$\pm$0.00 \\
Walker2dVelocityV1    & 0.13$\pm$0.03  & 1.50$\pm$0.40 & 0.11$\pm$0.04  & 1.26$\pm$0.59 & 0.05$\pm$0.03  & 0.25$\pm$0.09 \\
AntVelocityV1         & 0.59$\pm$0.00  & 0.00$\pm$0.00 & 0.53$\pm$0.01  & 0.00$\pm$0.00 & 0.29$\pm$0.05  & 0.00$\pm$0.00 \\ \hline
Average               & 0.16           & 0.77          & 0.16           & 0.70          & 0.07           & 0.87          \\ \hline
\end{tabular}
}

\vspace{1em}

\resizebox{\textwidth}{!}{
\begin{tabular}{c|cccccc}
\hline
\multirow{2}{*}{Task} & \multicolumn{2}{c}{$H=5$}       & \multicolumn{2}{c}{Deepseek R1} & \multicolumn{2}{c}{Gemini 2.5 Pro} \\
                      & r↑             & c↓             & r↑              & c↓            & r↑               & c↓              \\ \hline
PointButton1          & 0.21$\pm$0.13  & 10.36$\pm$1.13 & 0.05$\pm$0.01   & 2.31$\pm$0.57 & 0.01$\pm$0.02    & 1.18$\pm$0.64   \\
PointButton2          & 0.27$\pm$0.07  & 11.76$\pm$2.04 & 0.08$\pm$0.03   & 2.57$\pm$0.60 & 0.07$\pm$0.02    & 1.80$\pm$0.27   \\
PointGoal1            & 0.06$\pm$0.13  & 5.19$\pm$3.41  & 0.35$\pm$0.03   & 0.71$\pm$0.24 & 0.31$\pm$0.02    & 0.63$\pm$0.20   \\
PointGoal2            & 0.26$\pm$0.06  & 6.49$\pm$3.44  & 0.05$\pm$0.03   & 0.11$\pm$0.08 & 0.10$\pm$0.04    & 0.68$\pm$0.31   \\
PointPush1            & 0.11$\pm$0.02  & 1.04$\pm$0.99  & 0.25$\pm$0.07   & 1.79$\pm$1.42 & 0.17$\pm$0.02    & 1.04$\pm$0.39   \\
PointPush2            & 0.10$\pm$0.02  & 1.69$\pm$2.18  & 0.12$\pm$0.02   & 0.82$\pm$0.48 & 0.10$\pm$0.01    & 0.51$\pm$0.35   \\
CarButton1            & -0.01$\pm$0.02 & 6.75$\pm$8.33  & -0.03$\pm$0.04  & 1.04$\pm$0.24 & -0.03$\pm$0.01   & 0.60$\pm$0.24   \\
CarButton2            & 0.03$\pm$0.01  & 13.28$\pm$4.62 & 0.00$\pm$0.01   & 1.94$\pm$0.34 & -0.03$\pm$0.02   & 1.02$\pm$0.21   \\
CarGoal1              & 0.16$\pm$0.09  & 3.11$\pm$2.03  & 0.15$\pm$0.04   & 0.11$\pm$0.06 & 0.11$\pm$0.00    & 0.03$\pm$0.04   \\
CarGoal2              & 0.06$\pm$0.04  & 8.77$\pm$7.05  & 0.02$\pm$0.02   & 0.60$\pm$0.33 & 0.03$\pm$0.01    & 0.38$\pm$0.42   \\
CarPush1              & 0.18$\pm$0.00  & 1.57$\pm$1.23  & 0.22$\pm$0.07   & 0.85$\pm$0.47 & 0.13$\pm$0.02    & 0.49$\pm$0.08   \\
CarPush2              & 0.06$\pm$0.01  & 4.16$\pm$1.62  & 0.07$\pm$0.05   & 0.42$\pm$0.06 & 0.04$\pm$0.02    & 0.15$\pm$0.21   \\
SwimmerVelocityV1     & 0.04$\pm$0.02  & 0.00$\pm$0.00  & -0.01$\pm$0.01  & 0.00$\pm$0.00 & 0.00$\pm$0.03    & 0.00$\pm$0.00   \\
HopperVelocityV1      & 0.06$\pm$0.05  & 0.47$\pm$0.35  & 0.29$\pm$0.10   & 1.27$\pm$1.39 & 0.20$\pm$0.03    & 0.25$\pm$0.27   \\
HalfCheetahVelocityV1 & 0.42$\pm$0.07  & 0.00$\pm$0.00  & 0.08$\pm$0.02   & 0.00$\pm$0.00 & 0.54$\pm$0.01    & 0.00$\pm$0.00   \\
Walker2dVelocityV1    & 0.12$\pm$0.04  & 0.43$\pm$0.26  & 0.17$\pm$0.02   & 2.03$\pm$0.46 & 0.15$\pm$0.02    & 1.66$\pm$0.61   \\
AntVelocityV1         & 0.49$\pm$0.01  & 0.00$\pm$0.00  & 0.54$\pm$0.11   & 0.00$\pm$0.00 & 0.39$\pm$0.10    & 0.00$\pm$0.00   \\ \hline
Average               & 0.15           & 4.41           & 0.14            & 0.97          & 0.14             & 0.61            \\ \hline
\end{tabular}
    }
\end{table}

\section{Impact Statement}
The goal of this work is to advance offline safe RL under safe-only datasets, thereby enabling the deployment of learned policies in a broader range of real-world applications. For example, in embodied intelligence, data are often collected via human teleoperation; when an imminent unsafe collision is detected, data collection may be interrupted, or external interventions may alter the agent’s state or the environment to prevent collisions. The resulting datasets naturally contain only safe trajectories, yet they are highly valuable for training safe policies. Our method is specifically designed to operate under such realistic data collection paradigms.
Furthermore, the work presented in this paper does not introduce any additional ethical concerns beyond those commonly associated with RL research. Therefore, no special discussion of ethical issues is required.

\section{Discussions and Limitations}
In this work, although LLMs are employed as the primary approach for cost function generation, our core objective is not to rely on LLMs per se. Rather, we view LLMs as a knowledge processing tool that transforms natural-language safety specifications into a cost function—an intermediate representation that can be effectively aligned with data. Consequently, alternative approaches that serve a similar role can also be naturally used. For instance, human experts could directly process and translate safety descriptions into a cost function, which can then be iteratively refined based on feedback from the data. Compared to manual efforts, the use of LLMs offers a higher degree of automation and reduced human labor, while a potential drawback lies in the additional computational and monetary costs associated with LLM inference.

Meanwhile, although this work effectively addresses the challenge of learning offline safe policies in scenarios with scarce or no unsafe data, some limitations remain. The first lies in the accuracy and generalization of the environment model. When model accuracy is low, the rollout horizon must be restricted. A potential solution is to employ more powerful generative models, such as diffusion models, as environment models. The second limitation concerns the generation of conservative cost functions: the cost description must be grounded in features observable in the agent’s observations. If certain variables required to compute the cost function cannot be directly or indirectly inferred from the observations, our approach cannot be applied. 
Additionally, despite the adoption of a check-and-feedback mechanism, the reliability of LLM outputs remains affected by hallucinations. A promising direction for future research is to integrate human-in-the-loop strategies to mitigate this issue.
Finally, due to the reliance on cost function generation, the current \MethodName is restricted to state-based tasks and cannot be applied to vision-based tasks. This is because VLMs cannot directly produce sufficiently accurate image-based cost functions, and performing VLM-based evaluation for every sample would be prohibitively expensive.


\end{document}